\documentclass[11pt]{article}

\usepackage{amsmath}
\usepackage{amsthm}
\usepackage{amssymb}

\usepackage{algorithm}
\usepackage{algpseudocode}

\usepackage{color}
\usepackage[english]{babel}

\usepackage{graphicx}
\usepackage{caption}
\usepackage{subcaption}

\usepackage[square,sort,comma,numbers]{natbib}
\usepackage{wrapfig,epsfig}
\usepackage{psfrag}
\usepackage{epstopdf}
\usepackage{url}
\usepackage{color}
\usepackage{epstopdf}

\usepackage[hidelinks,pdfencoding=auto,psdextra]{hyperref}
\usepackage{hyperref}
\usepackage[margin=1in]{geometry}
\linespread{1}

\newtheorem{theorem}{Theorem}[section]
\newtheorem{lemma}[theorem]{Lemma}
\newtheorem{definition}[theorem]{Definition}

\newtheorem{corollary}[theorem]{Corollary}

\newtheorem{observation}[theorem]{Observation}

\newtheorem{remark}[theorem]{Remark}
\newtheorem{claim}[theorem]{Claim}
\newtheorem{example}[theorem]{Example}

\newcommand{\bi}{\boldsymbol{i}}

\newcommand{\F}{\mathcal{F}}

\newcommand{\wh}{\widehat}
\newcommand{\wt}{\widetilde}

\newcommand{\eps}{\varepsilon}
\newcommand{\Ot}{\widetilde{O}}

\newcommand{\norm}[1]{\|#1\|}
\newcommand{\abs}[1]{|#1|}

\newcommand{\Tr}{\mathsf{Tr}}

\newcommand{\supp}{\mathsf{supp}}

\DeclareMathOperator*{\argmin}{arg\,min}
\DeclareMathOperator*{\E}{\mathbb{E}}

\newcommand{\R}{\mathbb{R}}
\renewcommand{\C}{\mathbb{C}}
\newcommand{\FF}{\mathcal{F}}
\renewcommand{\i}{\mathbf{i}}
\newcommand{\midd}{\mathsf{mid}}

\DeclareMathOperator*{\argmax}{arg\,max}

\newenvironment{proofof}[1]{\bigskip \noindent {\it Proof of #1.}\quad }
{\qed\par\vskip 4mm\par}

%
%
%
%
%
%
%
%
\usepackage{etoolbox}

\makeatletter

\newcommand{\define}[4][ignore]{%
  \ifstrequal{#1}{ignore}{}{
  \@namedef{thmtitle@#2}{#1}}%
  \@namedef{thm@#2}{#4}%
  \@namedef{thmtypen@#2}{lemma}%
  \newtheorem{thmtype@#2}[theorem]{#3}%
  \newtheorem*{thmtypealt@#2}{#3~\ref{#2}}%
}

\newcommand{\state}[1]{%
  \@namedef{curthm}{#1}
  \@ifundefined{thmtitle@#1}{
  \begin{thmtype@#1}
    }{
  \begin{thmtype@#1}[\@nameuse{thmtitle@#1}]
  }
    \label{#1}
    \@nameuse{thm@#1}
  \end{thmtype@#1}
  \@ifundefined{thmdone@#1}{
  \@namedef{thmdone@#1}{stated}%
  }{}
}

\newcommand{\restate}[1]{%
  \@namedef{curthm}{#1}
  \@ifundefined{thmtitle@#1}{
    \begin{thmtypealt@#1}
    }{
  \begin{thmtypealt@#1}[\@nameuse{thmtitle@#1}]
  }
    \@nameuse{thm@#1}
  \end{thmtypealt@#1}
  \@ifundefined{thmdone@#1}{
  \@namedef{thmdone@#1}{stated}%
  }{}
}

\newcommand{\thmlabel}[1]{
  \@ifundefined{thmdone@\@nameuse{curthm}}{\label{#1}
    }{\tag*{\eqref{#1}}}
}
\makeatother

\begin{document}

\title{Active Regression via Linear-Sample Sparsification}
\author{   Xue Chen\thanks{Supported by NSF Grant CCF-1526952 and a Simons Investigator Award (\#409864, David Zuckerman), part of this work was done while the author was a student in University of Texas at Austin and visiting the Simons Institute for the Theory of Computing.}\\
  \texttt{xue.chen1@northwestern.edu}\\
  Northwestern University
 \and   Eric Price\thanks{This work was done in part while the author was visiting the Simons Institute for the Theory of Computing.}\\
  \texttt{ecprice@cs.utexas.edu}\\
  The University of Texas at Austin
}

\begin{titlepage}
  
\maketitle

\begin{abstract}
  We present an approach that improves the sample complexity for a
  variety of curve fitting problems, including active learning for
  linear regression, polynomial regression, and continuous sparse
  Fourier transforms.  In the active linear regression problem, one
  would like to estimate the least squares solution $\beta^*$
  minimizing $\norm{X\beta - y}_2$ given the entire unlabeled dataset
  $X \in \R^{n \times d}$ but only observing a small number of labels
  $y_i$.  We show that $O(d)$ labels
  suffice to find a constant factor approximation $\wt{\beta}$:
  \[
    \E[\norm{X\wt{\beta} - y}_2^2] \leq 2 \E[\norm{X \beta^* - y}_2^2].
  \]
  This improves on the best previous result of $O(d \log d)$ from
  leverage score sampling.  We also present results for the
  \emph{inductive} setting, showing when $\wt{\beta}$ will generalize
  to fresh samples; these apply to continuous settings such as
  polynomial regression.  Finally, we show how the techniques yield
  improved results for the non-linear sparse Fourier transform
  setting.
\end{abstract}

\thispagestyle{empty}
\end{titlepage}

\section{Introduction}

We consider the query complexity of recovering a signal $f(x)$ in a given family $\FF$ from noisy observations. This problem takes many forms depending on the family $\FF$, the access model, the
desired approximation norms, and the measurement distribution. In this work, we consider the $\ell_2$ norm and use $D$ to denote the distribution on the domain of $\FF$ measuring the distance between different functions, which is not necessarily known to our algorithms.



Our main results are sampling mechanisms that improve the query complexity and guarantees for two specific families of functions --- linear families and continuous sparse Fourier transforms.

\paragraph{Active Linear Regression on a Finite Domain.}  We start with
the classical problem of linear regression, which involves a matrix
$X \in \R^{n \times d}$ representing $n$ points with $d$ features, and
a vector $y \in \R^n$ representing the labels associated with those
points.  The least squares ERM is
\[
  \beta^* := \argmin \norm{X\beta - y}_2^2.
\]
In one \emph{active learning} setting, we receive the entire matrix
$X$ but not the entire set of labels $y$ (e.g., receiving any given
$y_i$ requires paying someone to label it).  Instead, we can pick a
small subset $S \subseteq [n]$ of size $m \ll n$, observe $y_S$, and
must output $\wt{\beta}$ that accurately predicts the entire set of
labels $y$.  In particular, one would like
\[
  \norm{X\wt{\beta} - y}_2^2 \leq (1 + \eps) \norm{X \beta^* - y}_2^2
\]
or (equivalently, up to constants in $\eps$)
\begin{align}\label{eq:mateq}
  \norm{X\wt{\beta} - X\beta^*}_2^2 \leq \eps \norm{X \beta^* - y}_2^2.
\end{align}
This is known as the ``transductive'' setting, because it only
considers the prediction error on the given set of points $X$; in the
next section we will consider the ``inductive'' setting where the
sample points $X_i$ are drawn from an unknown distribution and we care
about the generalization to fresh points.

The simplest approach to achieve~\eqref{eq:mateq} would be to sample
$S$ uniformly over $[n]$.  However, depending on the matrix, the
resulting query complexity $m$ can be very large -- for example, if
one row is orthogonal to all the others, it must be sampled to
succeed, making $m \geq n$ for this approach.

A long line of research has studied how to improve the query
complexity by adopting some form of importance sampling.  Most
notably, sampling proportional to the leverage scores of the matrix
$X$ improves the sample complexity to $O(d \log d + d/\eps)$ (see,
e.g.,~\cite{mahoney2011randomized}).

In this work, we give an algorithm that improves this to $O(d/\eps)$,
which we show is optimal.  The $O(d\log d)$ term in leverage score
sampling comes from the coupon-collector problem, which is inherent to
any i.i.d.\ sampling procedure.  By using the randomized linear-sample
spectral sparsification algorithm of Lee and Sun~\cite{LeeSun}, we can
avoid this term.  Note that not every linear spectral sparsifier would
suffice for our purposes: deterministic algorithms
like~\cite{batson2012twice} cannot achieve~\eqref{eq:mateq} for
$m \ll n$.  We exploit the particular behavior of~\cite{LeeSun} to
bound the expected noise in each step.

\define{thm:BSS}{Theorem}{%
  Given any $n \times d$ matrix $X$ and vector
  $\vec{y} \in \mathbb{R}^n$, let
  $\beta^*=\underset{\beta \in \mathbb{R}^d}{\arg\min} \|X \beta -
  \vec{y} \|_2^2$. For any $\eps < 1$, we present an efficient
  randomized algorithm that looks at $X$ and produces a diagonal
  matrix $W_S$ with support $S \subseteq [n]$ of size
  $\abs{S} \leq O(d/\eps)$, such that
  \[
    \wt{\beta} := \underset{\beta}{\arg\min} \|W_S X \cdot \beta - W_S \cdot \vec{y} \|_2
  \]
  satisfies
  \[
    \E \left[ \| X \cdot \wt{\beta} - X \cdot \beta^* \|_2^2 \right] \le \eps \cdot \|X \cdot \beta^* - \vec{y} \|_2^2.
  \]
  In particular, this implies
  $\|X \cdot \wt{\beta} - \vec{y} \|_2 \le \big( 1+O(\eps) \big) \cdot
  \|X \cdot \beta^* - \vec{y} \|_2$ with 99\% probability.  }

\state{thm:BSS}

At the same time, we provide a theoretic information lower bound
$m=\Omega(d/\eps)$ matching the query complexity up to a constant
factor, when $\vec{y}$ is $X\beta^*$ plus i.i.d.\ Gaussian noise.

\paragraph{Generalization for Active Linear Regression.}  We now
consider the inductive setting, where the $(x, y)$ pairs come from
some unknown distribution over $\R^d \times \R$.  As in the
transductive setting, we see $n$ unlabeled points
$X \in \R^{n \times d}$, choose a subset $S \subset [n]$ of size $m$
to receive the labels $y_S$ for, and output $\wt{\beta}$.  However,
the guarantee we want is now with respect to the unknown distribution:
for
\[
  \beta^* := \argmin \E_{x,y} [(x^T\beta - y)^2],
\]
we would like
\[
  \E_{x,y}[(x^T\wt{\beta} - y)^2] \leq (1 + \eps) \E_{x,y}[(x^T\beta^* - y)^2]
\]
or (equivalently, up to constants in $\eps$)
\[
  \E_{x}[(x^T\wt{\beta} - x^T\beta^*)^2] \leq \eps \E_{x,y}[(x^T\beta^* - y)^2].
\]

In this inductive setting, there are now two parameters we would like
to optimize: the number of labels $m$ and the number of unlabeled
points $n$.  Our main result shows that there is no significant
tradeoff between the two: as soon $n$ is large enough that the ERM for
a fully labeled dataset would generalize well, one can apply
Theorem~\ref{thm:BSS} to only label $O(d/\eps)$ points; and even with
an infinitely large unlabeled data set, one would still require
$\Theta(d/\eps)$ labels.

But how many unlabeled points do we need for the ERM to generalize?
To study this, we consider a change in notation that makes it more
natural to consider problems like polynomial regression.  In
polynomial regression, suppose that $y \approx p(x)$, for $p$ a degree
$d-1$ polynomial and $x$ on $[-1, 1]$.  This is just a change in
notation, since one could express $p(x)$ as
$(1, x, ..., x^{d-1})^T \beta$ for some $\beta$.  How many
observations $y_i = p(x_i) + g(x_i)$ do we need to learn the
polynomial, in the sense that
\[
  \E_{x \in [-1, 1]} [(\wt{p}(x) - p(x))^2] \leq O(1) \cdot \E [g(x)^2]?
\]
If we sample $x$ uniformly on $[-1, 1]$, then $O(d^2)$ samples are
necessary; if we sample $x$ proportional to
$\frac{1}{\sqrt{1 - x^2}}$, then $O(d \log d)$ samples suffice (this
is effectively leverage score sampling); and if we sample $x$ more
carefully, we can bring this down to $O(d)$~\cite{CDL13,CKPS}.  This
work shows how to perform similarly for \emph{any} linear family of
functions, including multivariate polynomials.  We also extend the
result to \emph{unknown} distributions on $x$.

In the model we consider, then, $x$ is drawn from an unknown
distribution $D$ over an arbitrary domain $G$, and $y = y(x_i)$ is
sampled from another unknown distribution conditioned on $x_i$.  We
are given a dimension-$d$ linear family $\FF$ of functions
$f: G \to \C$.  Given $n$ samples $x_i$, we can pick $m$ of the $y_i$
to observe, and would like to output a hypothesis $\wt{f} \in \FF$
that is predictive on fresh samples:
 \begin{equation}\label{eq:approximation}
 \|\wt{f}-f^*\|_D^2 :=\E_{x \sim D}[|\wt{f}(x)-f^*(x)|^2] \le \eps \cdot \E_{x, y}[|y - f^*(x)|^2]
\end{equation}
where $f^* \in \FF$ minimizes that RHS.  The polynomial regression
problem is when $\FF$ is the set of degree-$(d-1)$ polynomials in the
limit as $n \to \infty$, since we know the distribution $D$ and can
query any point in it.

We state our theorem in two cases: when $y_i$ is an unbiased
estimator for $f(x_i)$ for each $x_i$, in which case $\wt{f}$
converges to $f = f^*$; and when $y_i$ is biased, in which case
$\wt{f}$ converges to $f^*$ but not necessarily $f$.

\begin{theorem}\label{cor:active_learning}
  Let $\FF$ be a linear family of functions from a domain $G$ to
  $\mathbb{C}$ with dimension $d$, and consider any (unknown)
  distribution on $(x, y)$ over $G \times \C$.  Let $D$ be the
  marginal distribution over $x$, and suppose it has bounded
  ``condition number''
  \begin{align}\label{eq:defineK}
    K := \underset{h \in \FF: h \neq 0}{\sup} \frac{\sup_{x \in G}|h(x)|^2}{\|h\|_D^2}.
  \end{align}
  Let $f^* \in \FF$ minimize $\E[\abs{f(x) - y}^2]$.  For any
  $\eps < 1$, there exists an efficient randomized algorithm that takes
  $O(K\log d + \frac{K}{\eps})$ unlabeled samples from $D$ and
  requires $O(\frac{d}{\eps})$ labels to output $\wt{f}$ such that
  \[
    \E_{\wt{f}}\E_{x \sim D}[|\wt{f}(x)-f^*(x)|^2] \le \eps \cdot \E_{x, y}[|y - f^*(x)|^2].
  \]
\end{theorem}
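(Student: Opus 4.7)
}
The natural strategy is to reduce the unknown-distribution setting to the finite-matrix setting of Theorem~\ref{thm:BSS}. The algorithm I propose draws $n = O(K\log d + K/\eps)$ unlabeled samples $x_1,\dots,x_n \sim D$, fixes any basis $v_1,\dots,v_d$ of $\FF$, forms the $n \times d$ design matrix $X$ with rows $v(x_i)^T$, invokes Theorem~\ref{thm:BSS} on $X$ to select $S \subseteq [n]$ of size $O(d/\eps)$, queries only the labels $y_S$, and outputs $\wt{f}(x) = v(x)^T \wt{\beta}$ for the weighted least squares solution $\wt{\beta}$ returned by that theorem. For the analysis it is convenient to assume---by a change of basis that leaves the algorithm unchanged---that $\{v_j\}$ is orthonormal with respect to $D$, so that $\|v^T\beta\|_D = \|\beta\|_2$. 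In this basis, the condition-number hypothesis \eqref{eq:defineK} says exactly that $\sum_{j=1}^d |v_j(x)|^2 \le K$ pointwise, which is the only quantitative fact about $D$ and $\FF$ that the argument will need.

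Two a-priori facts about $X$ drive the proof. First, with $n = \Omega(K\log d)$, a matrix Chernoff bound applied to the rank-one terms $v(x_i) v(x_i)^*$ (each of operator norm at most $K$, with expectation equal to $I_d$) yields the spectral equivalence $\tfrac12 I \preceq \tfrac{1}{n} X^*X \preceq \tfrac{3}{2} I$ with probability $1-1/\poly(d)$; on this event the empirical and population norms on $\FF$ agree up to a factor of two. Second, writing $\hat\beta = \arg\min_\beta \|X\beta - y\|_2^2$ for the (hypothetical) fully-labeled ERM and $\epsilon_i := y_i - v(x_i)^T \beta^*$ for the residuals against the population optimum, one has $\hat\beta - \beta^* = (X^*X)^{-1}X^*\epsilon$ with $\E[\epsilon_i\epsilon_{i'} \mid x_1,\dots,x_n] = 0$ for $i \ne i'$; combining this with $\sum_j v_j(x)^2 \le K$ gives $\E[\|X^*\epsilon\|_2^2] \le nK\sigma^2$ where $\sigma^2 := \E_{x,y}[|y - f^*(x)|^2]$. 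Inverting $X^*X \approx nI$ via the first fact converts this into the ERM-generalization bound $\E[\|\hat f - f^*\|_D^2] \le O(K/n)\,\sigma^2 \le O(\eps)\,\sigma^2$ once $n = \Omega(K/\eps)$.

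To finish, I apply Theorem~\ref{thm:BSS} to $(X,y)$---whose ``$\beta^*$'' is the empirical ERM $\hat\beta$---to obtain $\E_{S}[\|X\wt\beta - X\hat\beta\|_2^2] \le \eps\,\|X\hat\beta - y\|_2^2 \le \eps\,\|X\beta^* - y\|_2^2$, and then take expectation over $y$ (and the $x_i$) to get $\E[\|X\wt\beta - X\hat\beta\|_2^2] \le \eps\, n\sigma^2$; the spectral equivalence converts this into $\E[\|\wt f - \hat f\|_D^2] \le O(\eps)\sigma^2$, and the triangle inequality $\|\wt f - f^*\|_D^2 \le 2\|\wt f - \hat f\|_D^2 + 2\|\hat f - f^*\|_D^2$ yields the theorem after rescaling $\eps$ by a constant. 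The step I expect to require the most care is the book-keeping across the three independent sources of randomness---the unlabeled sample $X$, the random labels $y \mid X$, and the internal randomness of Theorem~\ref{thm:BSS}---while keeping everything in expectation; the cleanest route is to condition on the high-probability event that matrix concentration holds, argue entirely in expectation on this event, and bound the rare-failure contribution by a crude norm estimate, which is affordable thanks to the $1/\poly(d)$ failure probability afforded by the $K\log d$ term.
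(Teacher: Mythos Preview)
Your plan is essentially the paper's own argument: draw $n=O(K\log d+K/\eps)$ unlabeled points, use matrix Chernoff to get $\|\cdot\|_{D_0}\approx\|\cdot\|_D$ on $\FF$, apply the $O(d/\eps)$-label procedure (Theorem~\ref{thm:BSS}, which the paper packages as a well-balanced sampling procedure) to the empirical design, and combine the ``full-sample ERM $\to f^*$'' and ``sub-sampled ERM $\to$ full-sample ERM'' bounds by the triangle inequality. The paper carries this out as Lemma~\ref{lem:act_learn_agnostic_procedure}.

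One slip to fix: the claim $\E[\epsilon_i\epsilon_{i'}\mid x_1,\dots,x_n]=0$ for $i\ne i'$ is false in general, since $\E[\epsilon_i\mid x_i]=\E[y\mid x_i]-f^*(x_i)$ need not vanish when $f^*$ is only the $\FF$-projection of $\E[y\mid x]$. What you actually need (and what is true) is that the cross terms in $\E\|X^*\epsilon\|_2^2$ vanish \emph{unconditionally}: by independence of $(x_i,y_i)$ and $(x_{i'},y_{i'})$ the cross term factors as $\E[\overline{v_j(x)}\,\epsilon]\cdot\E[v_j(x)\,\overline{\epsilon}]$, and each factor is zero by the first-order optimality of $f^*$ over $\FF$. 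With this correction your variance bound $\E\|X^*\epsilon\|_2^2\le nK\sigma^2$ and the rest of the argument go through exactly as you outlined.
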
 

A few points are in order.  First, notice that if we merely want to
optimize the number of labels, it is possible to take infinite number
of samples from $D$ to learn it and then query whatever desired labels
on $x \in \supp(D)$. This is identical to the query access model,
where $\Theta(d/\eps)$ queries is necessary and sufficient from
Theorem~\ref{thm:BSS}. On the other hand, if we focus on unlabeled
sample complexity, a natural solution is to query every sample point
and calculating the ERM $\wt{f}$; one can show that this takes
$\Theta(K \log d + K /\eps)$ samples~\cite{CDL13}. Thus both the
unlabeled and labeled sample complexity of our algorithm are optimal
up to a constant factor.

Finally, in settings with a ``true'' signal $f(x)$ one may want
$\wt{f} \approx f$ rather than $\wt{f} \approx f^*$.  Such a result
follows directly from the Pythagorean theorem, although (if the noise
is biased, so $f^* \neq f$) the approximation becomes $(1 + \eps)$
rather than $\eps$:

\begin{corollary}
  Suppose that $y(x) = f(x) + g(x)$, where $f \in \FF$ is the ``true''
  signal and $g$ is arbitrary and possibly randomized ``noise''.  Then
  in the setting of Theorem~\ref{cor:active_learning}, with
  $\norm{\cdot}_D$ defined as in~\eqref{eq:approximation},
  \begin{enumerate}
  \item $\E[ \|\wt{f}-f\|^2_D] \le \eps \cdot \E[\|g\|_D^2]$, if each
    $g(x)$ is a random variable with $\E_{x,g}[g(x)]=0$.
  \item Otherwise, $\|\wt{f}-f\|_D \le (1+O(\eps)) \cdot \|g\|_D$ with probability 0.99.
  \end{enumerate}
\end{corollary}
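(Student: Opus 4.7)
The corollary follows from Theorem~\ref{cor:active_learning} once we identify the population minimizer $f^*$ and invoke its first-order optimality.

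For Part~1, the plan is to show that the centered-noise hypothesis $\E_g[g(x)\mid x] = 0$ forces $f^* = f$, after which the stated bound is just Theorem~\ref{cor:active_learning} together with $\E_{x,y}[|y - f(x)|^2] = \E[\|g\|_D^2]$. To see $f^* = f$, expand for any $h \in \FF$
\[
\E_{x,y}[|y - h(x)|^2] = \|f - h\|_D^2 + 2\Re\,\E_{x,g}\!\left[(f(x) - h(x))\overline{g(x)}\right] + \E[\|g\|_D^2],
\]
and note that the cross term vanishes under the hypothesis, leaving $h = f$ as the minimizer.

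For Part~2, use the triangle inequality $\|\wt{f} - f\|_D \le \|\wt{f} - f^*\|_D + \|f^* - f\|_D$ and control the two terms separately. First-order optimality of $f^*$ gives $\Re\,\E_{x,y}[(y - f^*(x))\overline{h(x)}] = 0$ for every $h \in \FF$, and applying this with $h = f - f^* \in \FF$ yields the Pythagorean identity
\[
\|g\|_D^2 = \E_{x,y}[|y - f(x)|^2] = \E_{x,y}[|y - f^*(x)|^2] + \|f^* - f\|_D^2,
\]
which gives both $\|f^* - f\|_D \le \|g\|_D$ and $\|y - f^*\|_D \le \|g\|_D$. Invoking Theorem~\ref{cor:active_learning} with parameter $\eps^2$ and applying Markov's inequality bounds $\|\wt{f} - f^*\|_D \le O(\eps)\cdot\|y - f^*\|_D \le O(\eps)\cdot\|g\|_D$ with probability $0.99$. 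Summing the two contributions yields the claim.

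The only subtlety is bookkeeping the $\eps$: Theorem~\ref{cor:active_learning} is an $L^2$-style bound, so naively combining it with Markov would yield a factor of $(1 + O(\sqrt{\eps}))$ rather than $(1 + O(\eps))$. Substituting $\eps^2$ for the parameter passed to the theorem fixes this, at the expense of absorbing an extra factor into the constant hidden by $O(\cdot)$. No deeper obstacle arises --- the entire argument is a direct consequence of the theorem together with the residual-orthogonality property of the least-squares minimizer over $\FF$.
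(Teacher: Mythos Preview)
Your approach is correct in spirit and matches the paper's high-level plan (reduce to Theorem~\ref{cor:active_learning} via the Pythagorean structure of the least-squares minimizer). Part~1 is fine exactly as written. For Part~2, your decomposition is actually identical to the paper's: since $f^* - f$ is the orthogonal projection of $g$ onto $\FF$, you have $\|f^* - f\|_D = \|g^\parallel\|_D$ and $\E_{x,y}[|y - f^*(x)|^2]^{1/2} = \|g^\bot\|_D$, which is precisely the paper's $g = g^\parallel + g^\bot$ split.

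The one gap is your $\eps \mapsto \eps^2$ rescaling. The corollary is stated ``in the setting of Theorem~\ref{cor:active_learning}'', i.e.\ for the $\wt f$ produced with $O(d/\eps)$ labels at the given $\eps$. Substituting $\eps^2$ changes the algorithm to one using $O(d/\eps^2)$ labels; that is not ``an extra factor in the constant'' but a polynomial blow-up in sample complexity, so it does not prove the corollary as stated. Without rescaling, your argument yields $\|\wt f - f\|_D \le \|g^\parallel\|_D + O(\sqrt{\eps})\,\|g^\bot\|_D$, which is only $(1+O(\sqrt\eps))\|g\|_D$.

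The paper closes this gap by a completing-the-square step rather than rescaling. Writing $1-\beta = \|g^\parallel\|_D/\|g\|_D$ so that $\|g^\bot\|_D/\|g\|_D = \sqrt{2\beta - \beta^2}$, one has
\[
(1-\beta) + \sqrt{\eps}\,\sqrt{2\beta - \beta^2} \;\le\; 1 - \beta + \sqrt{2\eps\beta} \;=\; 1 - \bigl(\sqrt{\beta} - \sqrt{\eps/2}\bigr)^2 + \tfrac{\eps}{2} \;\le\; 1 + \tfrac{\eps}{2},
\]
which upgrades $(1+O(\sqrt\eps))$ to $(1+O(\eps))$ without touching the sample budget. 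Your decomposition was already the right one; just replace the rescaling with this inequality.
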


To make the result concrete, we present the following implication:

 \begin{example}
   Consider fitting $n$-variate degree-$d$ polynomials on $[-1, 1]^n$.
   There are $\binom{n+d}{d}$ monomials in the family, so
   Theorem~\ref{cor:active_learning} shows that querying
   $O(\binom{n+d}{d})$ points can achieve a constant-factor
   approximation to the optimal polynomial.  By contrast, uniform
   sampling would work well for low $d$, but loses a $\text{poly}(d)$
   factor; Chebyshev sampling would work well for low $n$, but loses a
   $2^{O(n)}$ factor; leverage score sampling would lose a
   $\log \binom{n+d}{d}$ factor.
 \end{example}

 \paragraph{Continuous Sparse Fourier transform.} Next we study
 sampling methods for learning a non-linear family: $k$-Fourier-sparse
 signals in the continuous domain. We consider the family of
 bandlimited $k$-Fourier-sparse signals
\begin{equation}\label{eq:def_k_FT}
\FF=\left\{ f(x)=\sum_{j=1}^k v_j \cdot e^{2 \pi \bi f_j x} \bigg| f_j \in \mathbb{R} \cap [-F,F], C_j \in \mathbb{C} \right\}
\end{equation}
over the domain $D$ uniform on $[-1, 1]$.

Because the frequencies $f_j$ can be any real number in $[-F,F]$, this
family is not well conditioned.  If all $f_j \to 0$, a Taylor
approximation shows that one can arbitrarily approximate any degree
$(k-1)$ polynomial; hence $K$ in \eqref{eq:defineK} is at least
$\Theta(k^2)$.

To improve the sample complexity of learning $\FF$, we apply
importance sampling for it by biasing $x \in [-1,1]$ proportional to
the largest variance at each point:
$ \underset{f \in \FF}{\sup} \frac{|f(x)|^2}{ \|f\|_D^2}.  $ This is a
natural extension of leverage score sampling, since it matches the
leverage score distribution when $\FF$ is linear. Our main
contribution is a simple upper bound that closely approximates the
importance sampling weight for $k$-Fourier-sparse signals at every
point $x \in (-1,1)$.

\define{lem:bound_k_sparse_FT_x_middle}{Theorem}{
For  any $x \in (-1,1)$,
\[
\underset{f \in \FF}{\sup} \frac{|f(x)|^2}{\|f\|_D^2} \lesssim \frac{k \log k}{1-|x|}.
\]
} \state{lem:bound_k_sparse_FT_x_middle} Combining this with the
condition number bound $K=\tilde{O}(k^4)$ in \cite{CKPS}, this gives
an explicit sampling distribution with a ``reweighted'' condition
number (as defined in Section~\ref{sec:overview}) of $O(k \log^2 k)$;
this is almost tight, since $k$ is known to be necessary. We show the
weight density in Figure \ref{fig:fourierweight}.

\begin{figure}[H]
\center
      \includegraphics[height=0.3\textheight, width=0.6\textwidth]{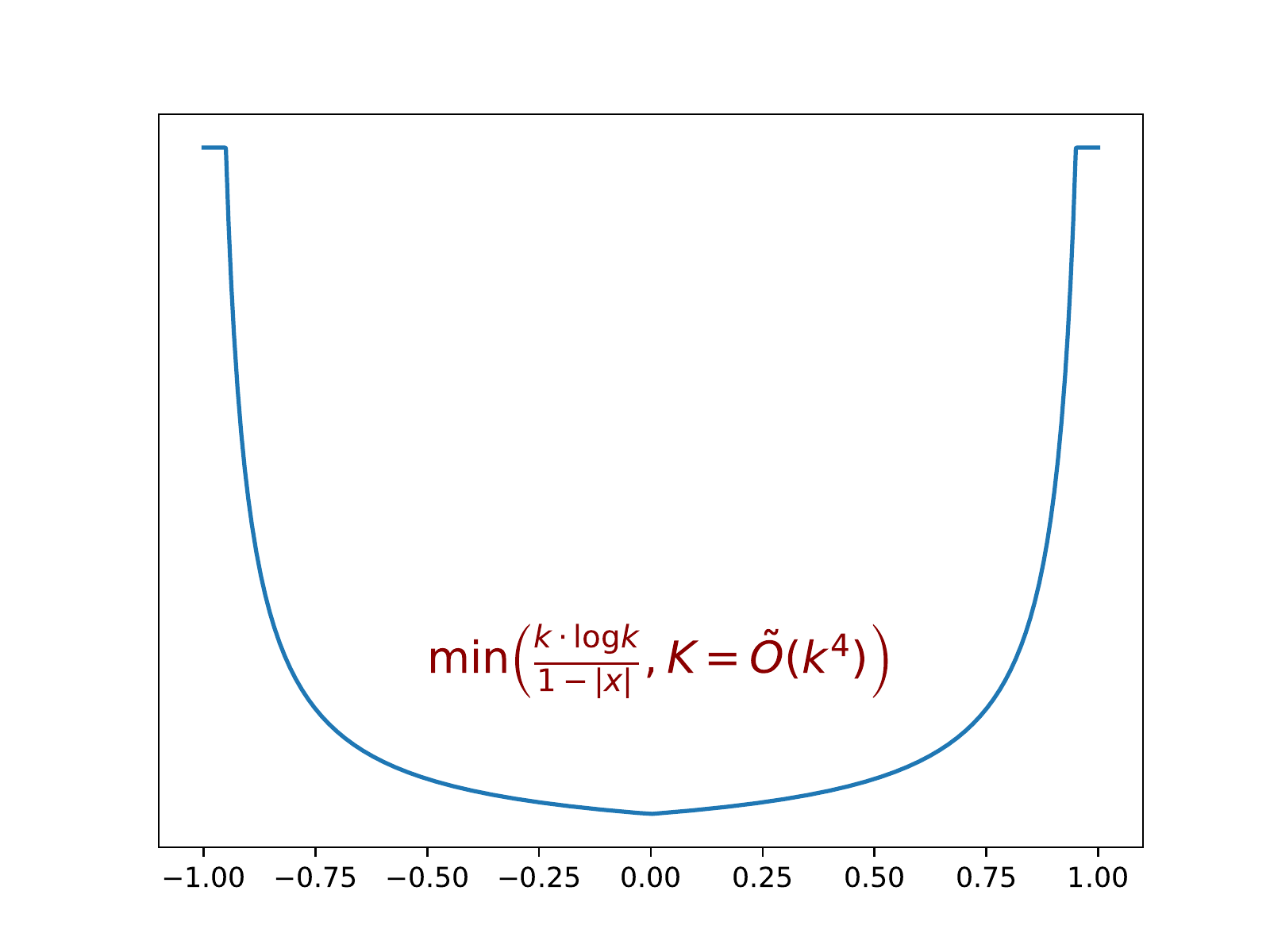}
      \caption{Explicit weights for $k$-Fourier-sparse signals} \label{fig:fourierweight}
\end{figure}

The reweighted condition number indicates that $m=\tilde{O}(k)$
suffices for the empirical estimation of $\|f\|_D$ for any fixed
$f \in \FF$.  We show that this implies that
$m=\tilde{O}(k^4 + k^2 \cdot \log F)$ guarantees the empirical
estimation of \emph{all} $f \in \FF$, so that the ERM
$\wt{f} \approx f$ (The extra loss of $m$ is due to the infinitely
many possible frequencies in this family).  This is a much better
polynomial in $k$ than was previously known to be possible for the
problem~\cite{CKPS}.  We believe that this sampling approach directly
translates to improvements in the polynomial time recovery algorithm
of~\cite{CKPS}, but that algorithm is quite complicated so we leave
this for future work.

\subsection{Related Work}
\paragraph{Linear regression.}  A large body of work considers ways to
subsample linear regression problems so the solution $\wt{\beta}$ to
the subsampled problem approximates the overall solution $\beta^*$.
The most common such method is leverage score sampling, which achieves
the guarantee of~Theorem~\ref{thm:BSS} with $O(d \log d + d/\eps)$
samples~\cite{drineas2008relative,M10,mahoney2011randomized,W14}.

Several approaches have attempted to go beyond this $O(d \log d)$
sample complexity. Both \cite{boutsidis2013near} and
\cite{song2019relative} apply the deterministic linear-sample spectral
sparsification of \cite{batson2012twice} to the matrix
$n \times (d+1)$ matrix $[X | y]$, to find a size $O(d/\eps)$ set that
would suffice for Theorem~\ref{thm:BSS}.  However, this procedure
requires knowing the entirety of $y$ to find $S$, so it does not help
for active learning. \cite{allen2017near} showed how such a procedure
can additionally have a number of extra properties, such as that each
sample has equal weight.  However, all these results involve
deterministic sampling procedures, so cannot tolerate adversarial
noise.

Another line of research on minimizing the query complexity of linear
regression is volume sampling, which samples a set of points
proportional to the volume of the parallelepiped spanned by these points. Recently,
\cite{derezinski2017unbiased} showed that exactly $d$ points chosen
from volume sampling can achieve the guarantee of
Theorem~\ref{thm:BSS}, except with an approximation ratio $d+1$ rather than $1+O(\epsilon)$.
In a subsequent work, \cite{derezinski2018tail} showed that standard
volume sampling would need $\Omega(K)$ samples to achieve any
constant approximation ratio, but that a variant of volume
sampling can match leverage score sampling with $O(d\log d + d/\eps)$
samples.

\vspace{0.1in}
\begin{minipage}[t]{0.9\linewidth}
  \renewcommand{\arraystretch}{1.3}
  \begin{tabular}{| l | l | l |}
      \hline
    & \# queries & Remark \\
    \hline
    Uniform sampling \hfill \cite{CDL13,hsu2016loss} & $O(K \log d)$ &   \\
    \hline
    Leverage score sampling\hfill\cite{mahoney2011randomized} & $\Theta(d \log d)$ &  \\
    \hline
    Boutsidis et al.\hfill\cite{boutsidis2013near} & $\abs{\supp(D)}$ & only needs $O(d)$ points in \eqref{eq:approximation}\\
    \hline
    Volume sampling\hfill \cite{derezinski2017unbiased} & $d$ & for $\eps = d+1$\\
    \hfill\cite{derezinski2018tail} & $\Omega\big( K \big)$  & for $\eps = 0.5$\\
	  \hline
    Rescaled volume sampling \hfill \cite{derezinski2018tail} & $O(d \log d)$ &\\
    \hline
    This work & $O(d)$ & \\
    \hline
  \end{tabular}
  \captionof{table}{Summary on the sample complexity of learning
    linear families, for $\eps = \Theta(1)$ unless otherwise
    specified.  Note that $K \geq d$.}
\end{minipage}
\vspace{0.1in}

For active regression, \cite{sabato2014active} provide an algorithm of
$O(d \log d)^{5/4}$ labels to achieve the desired guarantee of ERM,
while they do not give an explicit bound on the number of unlabeled
points in the algorithm. \cite{chaudhuri2015convergence} propose a
different approach assuming additional structure for the distribution
$D$ and knowledge about the noise $g$, allowing stronger results than
are possible in our setting.

One linear family of particular interest is univariate polynomials of
degree $d$ with the uniform distribution over $[-1,1]$. \cite{CDL13}
show that $O(K \log d)$ samples suffices for \eqref{eq:approximation}
for any linear family. In particular, they prove $m=O(d \log d)$
samples generated from the Chebyshev weight are sufficient, because it
is the limit of the leverage scores of univarite
polynomials. \cite{CKPS} avoids the extra loss $\log d$ by generating
every point $x_i$ using a distinct distribution: it partitions the
Chebyshev weight into $O(d)$ intervals of equal summations and sample
one point from each interval. However, this partition may not exist
for arbitrary linear families and distributions.

\paragraph{Sparse Fourier transform.} There is a long line of research
on sparse Fourier transform in the continuous setting, e.g., Prony's
method from 1795, Hilbert's inequality by \cite{HilbertMV} and Matrix
Pencil method \cite{BM86,Moitra15} to name a few. At the same time,
less is known about the worst case guarantees without any assumption
on separation between the frequencies; this depends on the condition
number $K$, which is between $k^2$ and $\tilde{O}(k^4)$ as noted
above~\cite{CKPS}. We note in passing the bound on $K$ and
Theorem~\ref{lem:bound_k_sparse_FT_x_middle} is analogous to Markov
Brothers' inequality and the Bernstein inequality for univariate
polynomials.

A number of works have studied importance sampling for sparse recovery
and sparse Fourier transforms. \cite{rauhut2012sparse} considered the
case where $\FF$ is sparse in a well-behaved orthonormal basis such as
polynomials sparse in Legendre basis using the Chebyshev
distribution. We refer to the survey \cite{ward2015importance} for a
detailed discussion. Recently, \cite{AKMMVZ17} give a study about
kernel ridge scores for signals with known Fourier transform
structures such as the Gaussian kernel in multi-dimension. However,
the weight shown in \cite{AKMMVZ17} is not close to optimal for
multi-dimension, while our weight is almost tight.

\paragraph{Organization.} We introduce our approaches and ``well-balanced'' procedures and outline the proofs of our results in Section~\ref{sec:overview}. After introducing notation and tools in Section~\ref{sec:preliminaries}, we prove a ``well-balanced'' procedure guarantees \eqref{eq:approximation} with high probability. Then we show the randomized spectral sparsification of \cite{LeeSun} is ``well-balanced'' in Section~\ref{sec:BSS}. For completeness, we analyze the number of samples generated by one distribution in Section~\ref{sec:single_batch}. Next we combine the results of the previous two sections to prove our results about active learning in Section~\ref{sec:act_learning}. We show information lower bound on the sample complexity in Section~\ref{sec:lower_bound}. Finally, we prove our results about sparse Fourier transform in Section~\ref{sec:k_sparse_FT}.


\section{Proof Overview}\label{sec:overview}

We present our proof sketch in the notation of
Theorem~\ref{cor:active_learning}.  Consider observations of the form
$y(x)=f(x)+g(x)$ for $f$ in a (not yet necessarily linear) family $\FF$
and $g$ an arbitrary, possibly random function.

\paragraph{Improved conditioning by better sampling.} We start with the noiseless case of
$g = 0$ in the query access model, and consider the problem of
estimating $\|y\|_D^2=\|f\|_D^2$ with high probability.  If we sample
points $x_i \sim D'$ for some distribution $D'$, then we can estimate
$\norm{f}_D^2$ as the empirical norm
\begin{align}\label{eq:weightedestimate}
  \frac{1}{m}\sum_{i=1}^m \frac{D(x_i)}{D'(x_i)} \abs{f(x_i)}^2
\end{align}
which has the correct expectation.  To show the expectation
concentrates, we should bound the maximum value of the summand, which
we define to be the ``reweighted'' condition number
\[
  K_{D'}=\underset{x}{\sup} \bigg\{ \underset{f \in \FF}{\sup} \big\{
  \frac{D(x)}{D'(x)} \cdot \frac{|f(x)|^2}{\|f\|_D^2} \big\}\bigg\}.
\]
We define $D_{\FF}$ to minimize this quantity, by making the inner
term the same for every $x$.  Namely, we pick
\begin{equation}\label{def:kappa}
D_{\FF}(x)= \frac{1}{\kappa} D(x)  \cdot \underset{f \in \FF}{\sup} \frac{|f(x)|^2}{\|f\|_D^2} \qquad \text{for } \kappa=\underset{x \sim D}{\E}\left[\underset{f \in \FF}{\sup} \frac{|f(x)|^2}{\|f\|_D^2} \right].
\end{equation}
This shows that by sampling from $D_{\FF}$ rather than $D$, the condition
number of our estimate~\eqref{eq:weightedestimate} improves from
$K = \underset{x \in \supp(D)}{\sup} \big\{ \underset{f \in \FF}{\sup}
\frac{|f(x)|^2}{\|f\|^2_D} \big\}$ to
 $\kappa = \underset{x \sim D}{\E}\left[\underset{f \in \FF}{\sup}
  \frac{|f(x)|^2}{\|f\|_D^2} \right]$.

From the Chernoff bound,
$O(\frac{\kappa \cdot \log \frac{1}{\delta}}{\eps^2})$ samples
from $D_{\FF}$ let us estimate $\|f\|_D^2$ to within accuracy
$1 \pm \eps$ with probability $1-\delta$ for any fixed function
$f \in \FF$.  To be able to estimate \emph{every} $f \in \FF$, a basic
solution would be to apply a union bound over an $\eps$-net of $\FF$.

Linear function families $\FF$ let us improve the result in two ways.
First, we observe that $\kappa = d$ for any dimension $d$ linear
function space; in fact, $D_\FF$ is the leverage score sampling
distribution.  Second, we can replace the union bound by a matrix
concentration bound, showing that
$O(\frac{d \log \frac{d}{\delta}}{\eps^2})$ samples from $D_{\FF}$
suffice to estimate $\|f\|_D^2$ to within $1 \pm \eps$ for all
$f \in \FF$ with probability $1-\delta$. However, this approach needs
$\Omega(d \log d)$ samples due to a coupon-collector argument, because
it only samples points from one distribution $D_\FF$.



\paragraph{The effect of noise.} The spectral sparsifier given by \cite{batson2012twice} could
replace the matrix concentration bound above, and estimate
$\norm{f}_D^2$ for every $f \in \FF$ with only $O(d)$ samples.
The issue with this is that it would not be robust against adversarial
noise, because the sample points $x_i$ are deterministic. Now we consider our actual problem, which is to estimate $f$
from $y = f + g$ for nonzero noise $g$ of
$\E[\norm{g}_D^2] = \sigma^2$.  We need our sample points to both be
sampled non-independently (to avoid coupon-collector issues) but still
fairly randomly (so adversarial noise cannot predict it).  A natural
approach is to design a sequence of distributions
$D_1,\cdots,D_m$ ($m$ is not necessarily fixed) then sample
$x_i \sim D_i$ and assign a weight $w_i$ for $x_i$, where $D_{i+1}$
could depend on the previous points $x_1,\cdots,x_i$.

Given samples $(x_1,\cdots,x_m)$ with weights $(w_1,\cdots,w_m)$, the
empirical risk minimizer is the function $\wt{f} \in \FF$ closest to $y$ under
the empirical norm $\underset{i\in [m]}{\sum} w_i \cdot |f(x_i)|^2$.  When
$\F$ is a linear family, the solution $\wt{f}$ is a linear projection,
so it acts on $f$ and $g$ independently.  If the empirical norm is a good
estimator for $\F$, the projection of $f \in \FF$ into the linear subspace $\F$ equals $f$.  Hence the error $\wt{f} - f$ is the projection of $g$
onto $\F$ under the empirical norm.

First, suppose that $g$ is orthogonal to $\F$ under the true norm
$\norm{\cdot}_D$---for instance, if $g(x)$ is an independent mean-zero
random variable for each $x$.  In this case, the expected value of the
projection of $g$ is zero.  At the same time, we can bound the
variance of the projection of a single random sample of $g$ drawn from
$D_i$ by the condition number $K_{D_i} \cdot \sigma^2$.  Ideally each
$K_{D_i}$ would be $O(d)$, but we do not know how to produce such
distributions while still getting linear sample spectral
sparsification.  Therefore we use a coefficient $\alpha_i$ to control
every $K_{D_i}$, and set $w_i=\alpha_i \cdot \frac{D(x_i)}{D_i(x_i)}$
instead of $\frac{D(x_i)}{m D_i(x_i)}$.  The result is that---if
$\sum_i \alpha_i=O(1)$---the projection of the noise has variance
$O\big(\underset{i \in [m]}{\max} \{\alpha_i K_{D_i}\}\big) \cdot
\sigma^2$. This motivates our definition of ``well-balanced''
sampling procedures:

\define{def:procedure_agnostic_learning}{Definition}{
  Given a linear family $\FF$ and underlying distribution $D$, let $P$
  be a random sampling procedure that terminates in $m$ iterations
  ($m$ is not necessarily fixed) and provides a coefficient $\alpha_i$
  and a distribution $D_i$ to sample $x_i \sim D_i$ in every iteration
  $i \in [m]$.

We say $P$ is an $\eps$-\emph{well-balanced sampling procedure} if it satisfies the following two properties:
\begin{enumerate}
\item With probability $0.9$, for weight $w_i=\alpha_i \cdot \frac{D(x_i)}{D_i(x_i)}$ of each $i \in [m]$, $$
\sum_{i=1}^m w_i \cdot |h(x_i)|^2 \in \left[\frac{3}{4},\frac{5}{4}\right] \cdot \|h\|_D^2 \quad \forall h \in \FF.
$$
Equivalently (as shown in Lemma~\ref{lem:operator_estimation} in Section~\ref{sec:proof_AL_procedure}), given any orthonormal basis $v_1,\ldots,v_d$ of $\FF$ under $D$, the matrix $A(i,j)=\sqrt{w_i} \cdot v_j(x_i) \in \mathbb{C}^{m \times d}$ has $\lambda(A^* A) \in [\frac{3}{4},\frac{5}{4}]$.

\item The coefficients always have $\sum_i \alpha_i \le \frac{5}{4}$ and $\alpha_i \cdot K_{D_i} \le \epsilon/2$. 
\end{enumerate}
}

\state{def:procedure_agnostic_learning}


Intuitively, the first property says that the sampling procedure
preserves the signal, and the second property says that the recovery
algorithm does not blow up the noise on average.  For such a sampling
procedure we consider the ERM from its execution as follows.

Our definition of a well-balanced sampling procedure allows property 1
to fail 10\% of the time, but our algorithm will only perform well in
expectation when property 1 is satisfied.  Therefore we rerun the
sampling procedure until it has a ``good'' execution that satisfies
property 1.

\begin{definition}\label{def:weightederm}
Given a \emph{well-balanced sampling procedure} $P$, we say one execution of $P$ is \emph{good} only if the samples $x_i$ with weights $w_i=\alpha_i \cdot \frac{D(x_i)}{D_i(x_i)}$ satisfy the first property in Definition~\ref{def:procedure_agnostic_learning}, which can be checked efficiently by calculating $\lambda(A^* A)$.

  Given a joint distribution  $(D, Y)$ and an execution of a \emph{well-balanced sampling procedure} $P$ with $x_i \sim D_i$ and $w_i=\alpha_i \cdot \frac{D(x_i)}{D_i(x_i)}$ of each $i \in [m]$, let the weighted ERM of this execution be $\wt{f}=\underset{h \in \FF}{\argmin} \left\{ \sum_{i=1}^m w_i
      \cdot |h(x_i)-y_i|^2
    \right\}$ by querying $y_i \sim (Y|x_i)$ for each point $x_i$ .
\end{definition}

In Section~\ref{sec:agnostic_learning} we prove that $\wt{f}$ satisfies the desired guarantee, which implies Theorem~\ref{thm:BSS}.

\define{thm:guarantee_AL_procedure}{Theorem}{ Given a linear family
  $\FF$, joint distribution $(D,Y)$, and $\eps>0$, let $P$ be an
  $\eps$-well-balanced sampling procedure for $\FF$ and $D$,
  and let
  $f=\underset{h \in \FF}{\arg\min} \underset{(x,y)\sim
    (D,Y)}{\E}[|y-h(x)|^2]$ be the true risk minimizer.  Then the
  weighted ERM $\wt{f}$ of a good execution of $P$ satisfies
  \[
    \|f-\wt{f}\|_D^2 \le \epsilon \cdot \underset{(x,y)\sim (D,Y)}{\E}[|y-f(x)|^2] \textit{ in expectation}.
  \]
}
\state{thm:guarantee_AL_procedure}



For a noise function $g$ not orthogonal to $\FF$ in expectation, let
$g^{\bot}$ and $g^{\parallel}$ denote the decomposition of $g$ where
$g^{\bot}$ is the orthogonal part and
$g^{\parallel}=g-g^{\bot} \in \FF$. The above theorem indicates
$\|\wt{f}-f\|_D \le \|g^{\parallel}\|_D + \sqrt{\eps} \cdot \|g^{\bot}\|_D$, which
gives $(1+\eps)\|g\|_D$-closeness via the Pythagorean theorem. This result appears in Corollary~\ref{cor:specific_noise_AL_procedure} of Section~\ref{sec:agnostic_learning}.


\paragraph{Well-balanced sampling procedures.} We observe that two
standard sampling procedures are well-balanced, so they yield agnostic
recovery guarantees by Theorem~\ref{thm:guarantee_AL_procedure}.  The
simplest approach is to set each $D_i$ to be a fixed distribution $D'$
and $\alpha_i = 1/m$ for all $i$.  For
$m=O(K_{D'} \log d + K_{D'}/\eps)$, this gives an $\eps$-well-balanced
sampling procedure. These results appear in
Section~\ref{sec:single_batch}.


We get a stronger result of $m = O(d/\eps)$ using the randomized BSS
algorithm from~\cite{LeeSun}.  The~\cite{LeeSun} algorithm
iteratively chooses points $x_i$ from distributions $D_i$.  A term
considered in their analysis---the largest increment of
eigenvalues---is equivalent to our $K_{D_i}$.  By looking at the
potential functions in their proof, we can extract coefficients
$\alpha_i$ bounding $\alpha_i K_{D_i}$ in our setting. This lets us
show that the algorithm is a well-balanced sampling procedure; we do
so in Section~\ref{sec:BSS}.

\paragraph{Active learning.} Next we consider the active learning
setting, where we don't know the distribution $D$ and only receive
samples $x_i \sim D$, but can choose which $x_i$ receive labels $y_i$.
Let $K$ be the condition number of the linear family $\FF$.  Our algorithms
uses $n=O(K \log d + \frac{K}{\eps})$ unlabeled samples and
$m = O(\frac{d}{\eps})$ labeled samples, and achieves the
same guarantees as in the query access model.

For simplicity, we start with $g$ orthogonal to $\F$ under
$\norm{\cdot}_D$. At first, let us focus on the number of
\emph{unlabeled} points. We
could take $n=O(K \log d + \frac{K}{\eps})$ points from $D$ and
request the label of each point $x_i$. By
Theorem~\ref{thm:guarantee_AL_procedure} with the simpler
well-balanced sampling procedure mentioned above using $D' = D$, the
ERM $f'$ on these $n$ points is
$\epsilon \cdot \underset{D}{\E}[|g(x)|^2]$-close to $f$.

Then let us optimize the number of labeled samples. For $n$ random
points from $D$, let $D_0$ denote the uniform measurement on these
points. Although we cannot apply the linear-sample well-balanced
sampling procedure $P$ to the unknown $D$, we can apply it to
$D_0$. By Theorem~\ref{thm:guarantee_AL_procedure}, the ERM $\wt{f}$
of $P$ on $D_0$ satisfies
$\|\wt{f}-f'\|^2_{D_0} \le \epsilon \cdot
\underset{x \sim D_0}{\E}[|y(x)-f'(x)|^2]$. By the triangle inequality and the fact that $D_0$ is an good empirical estimation of $\FF$ under measurement $D$, this
gives
$\|f-\wt{f}\|^2_D \lesssim \epsilon \cdot \underset{D}{\E}[|g(x)|^2]$.

Notice that $f'$ only appears in the analysis and we do not need it in the calculation of $\wt{f}$ given $D_0$. By rescaling a constant factor of $\eps$, this leads to the following theorem proved in Section~\ref{sec:act_learning}.

\define{thm:unknown_dist}{Theorem}{ Consider any dimension $d$ linear
  space $\FF$ of functions from a domain $G$ to $\mathbb{C}$. Let
  $(D,Y)$ be a joint distribution over $G \times \mathbb{C}$ and
  $f=\underset{h \in \FF}{\arg\min} \underset{(x,y)\sim
    (D,Y)}{\E}[|y-h(x)|^2]$.

  Let
  $K=\underset{h \in \FF: h \neq 0}{\sup} \frac{\sup_{x \in
      G}|h(x)|^2}{\|h\|_D^2}$.  For any $\eps>0$, there
  exists an efficient algorithm that takes
  $O(K \log d + \frac{K}{\eps})$ unlabeled samples from $D$ and
  requests $O(\frac{d}{\eps})$ labels to output $\wt{f}$
  satisfying
\[
\underset{x \sim D}{\E}[|\tilde{f}(x)-f(x)|^2] \le \eps \cdot \underset{(x,y)\sim (D,Y)}{\E}[|y-f(x)|^2] \textit{ in expectation}.
\]
}
\state{thm:unknown_dist}

\paragraph{Lower bounds.} We first prove a lower bound on the query
complexity using information theory. The Shannon-Hartley Theorem
indicates that under the \text{i.i.d.} Gaussian noise $N(0,1/\eps)$, for a
function $f$ with $|f(x)| \le 1$ at every point $x$, any observation
$y(x)=f(x)+N(0,1/\eps)$ obtains $O(\eps)$ information about
$f$. Because the dimension of $\FF$ is $d$, this indicates
$\Omega(d/\eps)$ queries is necessary to recover a function in $\FF$.

Next, for any $K$, $d$, and $\eps$ we construct a distribution $D$ and
dimension-$d$ linear family $\FF$ with condition number $K$ over $D$,
such that the sample complexity of achieving~\eqref{eq:approximation} is
$\Omega(K \log d + K/\eps)$.  The first term comes from the coupon
collector problem, and the second comes from the above query bound. We summarize the upper bounds and lower bounds for sample complexity and query complexity in Table~\ref{tab:lower_upper_bounds}.

\vspace{0.2in}

\begin{minipage}[t]{\linewidth}
  \centering
  \begin{tabular}{| l | l|l | l |}
    \hline
    & Optimal value & Lower bound & Upper bound \\
    \hline
    Query complexity & $\Theta(d/\eps)$ & Theorem~\ref{thm:information_lower_bound} & Theorem~\ref{thm:BSS} \\
    \hline
    Sample complexity & $\Theta(K\log d + K/\eps)$  &Theorem~\ref{lem:sample_complexity} & Theorem~\ref{thm:orthogonal_part} \\
    \hline
  \end{tabular}
  \captionof{table}{Lower bounds and upper bounds in different access models}\label{tab:lower_upper_bounds}
\end{minipage}

\paragraph{Signals with $k$-sparse Fourier transform.}
We now consider the nonlinear family $\F$ of functions with $k$-sparse
Fourier transform defined in \eqref{eq:def_k_FT}, over the distribution $D = [-1, 1]$.  As discussed
at~\eqref{def:kappa}, even for nonlinear function families, sampling
from $D_\F$ proportional to $\underset{f \in \FF}{\sup} \frac{|f(x)|^2}{\|f\|_D^2}$ improves the condition number from $K$ to $
  \kappa = \underset{x \in D}{\E} \underset{f \in \FF}{\sup} \frac{\abs{f(x)}^2}{\norm{f}_D^2},
$
which is $\tilde{O}(k)$ given Theorem~\ref{lem:bound_k_sparse_FT_x_middle} and $K=\tilde{O}(k^4)$.  

Before sketching the proof of Theorem~\ref{lem:bound_k_sparse_FT_x_middle}, let us revisit the $\Ot(k^4)$ bound for $K$ shown
in~\cite{CKPS}. The key step---Claim~5.2 in \cite{CKPS}---showed that
for any $\Delta>0$ and $f \in \FF$, $f(x)$ can be expressed as a
linear combination of $\big\{f(x+j \Delta)\mid j =1,\dotsc,l\}$ with
constant coefficients and $l=\tilde{O}(k^2)$. This upper bounds $|f(-1)|^2$ in terms of
$|f(-1 + \Delta)|^2 + \dotsb + |f(-1 + l \cdot \Delta)|^2$ and then 
$|f(-1)|^2/\|f\|_D^2$ by integrating $\Delta$ from $0$ to $2/l$.

The improvement of Theorem~\ref{lem:bound_k_sparse_FT_x_middle} contains two steps. In the
first step, we show that $f(x)$ can be expressed as a
constant-coefficient linear combination of the elements of an
$O(k)$-length arithmetic sequence on both sides of $x$, namely,
$\{f(x- 2k \cdot \Delta),\dotsc,f(x+2k \cdot \Delta)\} \setminus
f(x)$.  This is much shorter than the $\Ot(k^2)$ elements required by
\cite{CKPS} for the one-sided version, and provides an
$\tilde{O}(k^2)$ factor improvement. Next we find $k$ such linear
combinations that are almost orthogonal to each other to remove the
extra $k$ factor. These two let us show that
\[
 \sup_{f \in \F} \frac{\abs{f(x)}^2}{\norm{f}_D^2} = O\left(\frac{k \log k}{1 - \abs{x}} \right)
\]
for any $x \in (-1, 1)$.  This leads to $\kappa = O(k \log^2 k)$, which appears in Theorem~\ref{thm:kappa_sparse_FT} of Section~\ref{sec:k_sparse_FT}.






\section{Notation}\label{sec:preliminaries}
We use $[k]$ to denote the subset $\{1,2,\dotsc,k\}$ and $1_{E} \in \{0,1\}$ to denote the indicator function of an event $E$.

For a vector $\vec{v}=\big(v(1),\dotsc,v(m)\big) \in \mathbb{C}^m$, let $\|\vec{v}\|_k$ denote the $\ell_k$ norm, i.e., $\big(\sum_{i \in [m]} |v(i)|^k \big)^{1/k}$. 

Given a self-adjoint matrix $A \in \mathbb{C}^{m \times m}$, let $\|A\|$ denote the operator norm $\|A\|=\max_{\vec{v} \neq \vec{0}} \frac{\|A \vec{v}\|_{2}}{\|\vec{v}\|_{2}}$ and $\lambda(A)$ denote all eigenvalues of $A$. For convenience, let $\lambda_{\min}(A)$ and $\lambda_{\max}(A)$ denote the smallest eigenvalue and the largest eigenvalue of $A$. Given a matrix $B$, let $B^*$ denote the conjugate transpose of $B$, i.e., $B^*(j,i)=\overline{B(i,j)}$.

Given a function $f$ with domain $G$ and a distribution $D$ over  $G$, we use $\|f\|_D$ to denote the expected $\ell_2$ norm of $f(x)$ where $x \sim D$, i.e., $\|f\|_D=\big( \underset{x \sim D}{\E}[|f(x)|^2]\big)^{1/2}$. Given a sequence $S=(x_1,\dotsc,x_m)$ (allowing repetition in $S$) and corresponding weights $(w_1,\dotsc,w_m)$, let $\|f\|^2_{S,w}$ denote the weighted $\ell_2$ norm $\sum_{j=1}^m w_j \cdot |f(x_j)|^2$. For convenience, we omit $w$ if it is a uniform distribution on $S$, i.e., $\|f\|_{S}=\left( \E_{i \in [m]} \big[ |f(x_i)|^2 \big] \right)^{1/2}.$

\paragraph{Weights between different distributions.} Given a distribution $D$, to estimate $\|h\|_D^2$ of a function $h$ through random samples from $D'$, we use the following notation to denote the re-weighting of $h$ between $D'$ and $D$.
\begin{definition}
For any distribution $D'$ over the domain $G$ and any function $h : G \rightarrow \mathbb{C}$, let $h^{(D')}(x)=\sqrt{\frac{D(x)}{D'(x)}} \cdot h(x)$ such that $
\underset{x \sim D'}{\E} \left[ |h^{(D')}(x)|^2\right]=\underset{x \sim D'}{\E} \left[\frac{D(x)}{D'(x)} |h(x)|^2 \right]= \underset{x \sim D}{\E}\left[|h(x)|^2\right]. 
$
When the family $\FF$ and $D$ is clear, we use $K_{D'}$ to denote the condition number of sampling from $D'$, i.e., 
$$
K_{D'}=\underset{x}{\sup} \left\{ \underset{h \in \mathcal{F}}{\sup} \left\{ \frac{|h^{(D')}(x)|^2}{\|h^{D'}\|_{D'}^2} \right\}\right\}=\underset{x}{\sup} \bigg\{ \frac{D(x)}{D'(x)} \cdot \underset{h \in \mathcal{F}}{\sup} \big\{ \frac{|h(x)|^2}{\|h\|_2^2} \big\} \bigg\}.
$$
\end{definition}
By the same reason, for a random sample $x$ from distribution $D'$, we always use $w_x=\frac{D(x)}{D'(x)}$ to re-weight the sample $x$ such that it keeps the same expectation:
$$
\E_{x \sim D'}[w_x \cdot |h(x)|^2]=\E_{x \sim D'}[\frac{D(x)}{D'(x)} \cdot |h(x)|^2]=\E_{x \sim D}[|h(x)|^2]=\|h\|_D^2.
$$


\section{Recovery Guarantee for Well-Balanced Samples}\label{sec:agnostic_learning}


In this section, we show for well-balanced sampling procedures (per
Definition~\ref{def:procedure_agnostic_learning}) that the weighted ERM of a good execution (per Definition~\ref{def:weightederm}) approximates the true risk minimizer, and
hence the true signal. For generality, we first consider
points and labels from a joint distribution $(D,Y)$.

\restate{thm:guarantee_AL_procedure}

Next, we provide a corollary for specific kinds of noise. In the
first case, we consider noise functions representing independently
mean-zero noise at each position $x$ such as i.i.d.~Gaussian
noise. Second, we consider arbitrary noise functions on the domain.

\begin{corollary}\label{cor:specific_noise_AL_procedure}
  Given a linear family $\FF$ and distribution $D$, let
  $y(x) = f(x) + g(x)$ for $f \in \FF$ and $g$ a randomized function.
  Let $P$ be an $\eps$-well-balanced sampling procedure for $\FF$ and
  $D$.  
  The weighted ERM $\wt{f}$ of a good execution of $P$ satisfies 
\begin{enumerate}
\item $\|\tilde{f}-f\|_D^2 \le \eps \cdot \underset{g}{\E}[\|g\|_D^2]$ in expectation, when $g(x)$ is a random function from $G$ to $\mathbb{C}$ where each $g(x)$ is an independent random variable with $\underset{g}{\E}[g(x)]=0$.
\item With probability 0.99, $\|\tilde{f}-f\|_D \le (1+\eps) \cdot \|g\|_D$ for any other noise function $g$.
\end{enumerate} 
\end{corollary}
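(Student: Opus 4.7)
}
The plan is to reduce each case to Theorem~\ref{thm:guarantee_AL_procedure} by identifying the true risk minimizer $f^*$ inside $\FF$ for the joint distribution $(D,Y)$ with $y=f(x)+g(x)$, and then performing a standard decomposition of the noise. In both cases, write the population risk of any $h\in \FF$ as
\[
  \E_{x,g}[|y-h(x)|^2] = \|f-h\|_D^2 + 2\,\mathrm{Re}\,\E_{x,g}[(f(x)-h(x))\overline{g(x)}] + \E_g[\|g\|_D^2],
\]
so identifying $f^*$ amounts to understanding the cross term.

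For part~(1), since $g(x)$ is an independent mean-zero random variable at each $x$, the cross term vanishes: $\E_{x,g}[(f(x)-h(x))\overline{g(x)}] = \E_x[(f(x)-h(x))\E_g[\overline{g(x)}]] = 0$. Therefore the minimizer is $f^*=f$ and the minimum risk equals $\E_g[\|g\|_D^2]$. Plugging this into Theorem~\ref{thm:guarantee_AL_procedure} applied to the well-balanced procedure $P$ directly gives
\[
  \E[\|\wt{f}-f\|_D^2] \le \eps \cdot \E_g[\|g\|_D^2],
\]
as claimed.

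For part~(2), decompose $g = g^{\parallel} + g^{\bot}$ in $L^2(D)$, where $g^{\parallel}$ is the orthogonal projection of $g$ onto $\FF$ (a fixed element once $g$ is fixed). Then for every $h \in \FF$, $(f-h+g^{\parallel}) \in \FF$ is orthogonal to $g^{\bot}$ under $\|\cdot\|_D$, so
\[
  \E[|y-h|^2] = \|(f-h)+g^{\parallel}\|_D^2 + \|g^{\bot}\|_D^2,
\]
which is minimized at $h=f+g^{\parallel}$, giving $f^*=f+g^{\parallel}$ and minimum risk $\|g^{\bot}\|_D^2$. Theorem~\ref{thm:guarantee_AL_procedure} now yields $\E[\|\wt{f}-f^*\|_D^2]\le \eps \|g^{\bot}\|_D^2$; after rescaling $\eps$ by a constant and applying Markov's inequality, with probability at least $0.99$ we have $\|\wt{f}-f^*\|_D \le \sqrt{\eps}\,\|g^{\bot}\|_D$. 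By the triangle inequality and $\|f-f^*\|_D=\|g^{\parallel}\|_D$,
\[
  \|\wt{f}-f\|_D \le \|g^{\parallel}\|_D + \sqrt{\eps}\,\|g^{\bot}\|_D.
\]
Finally, Cauchy--Schwarz on the two-term vector $(\|g^{\parallel}\|_D,\|g^{\bot}\|_D)$ against $(1,\sqrt{\eps})$ together with the Pythagorean identity $\|g^{\parallel}\|_D^2+\|g^{\bot}\|_D^2=\|g\|_D^2$ gives
\[
  \|g^{\parallel}\|_D + \sqrt{\eps}\,\|g^{\bot}\|_D \le \sqrt{1+\eps}\,\|g\|_D \le (1+\eps)\,\|g\|_D,
\]
completing part~(2).

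There is essentially no hard step, since Theorem~\ref{thm:guarantee_AL_procedure} does all the heavy lifting; the mild subtlety is simply keeping track that $\wt{f}$ is the weighted ERM of a \emph{good} execution (we rerun $P$ until property~1 holds) so that the $0.99$ failure probability comes only from Markov on the expectation bound, rather than being entangled with the failure probability of the sampling procedure itself. The only other thing to be careful about is that the cross-term argument in part~(1) uses independence of $g(x)$ from $x$ conditional on the value of $x$ (which is exactly the hypothesis), while part~(2) treats $g$ as a fixed (possibly random but not necessarily mean-zero) function so that the orthogonal decomposition is deterministic once $g$ is drawn.
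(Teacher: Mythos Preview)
Your proof is correct and follows essentially the same route as the paper: identify the true risk minimizer (namely $f$ in part~(1) and $f+g^{\parallel}$ in part~(2)), invoke Theorem~\ref{thm:guarantee_AL_procedure}, and in part~(2) combine the resulting $\sqrt{\eps}\,\|g^{\bot}\|_D$ bound with $\|g^{\parallel}\|_D$ via the triangle inequality. Your final step using Cauchy--Schwarz on $(1,\sqrt{\eps})$ is a slightly cleaner way to reach $(1+\eps)\|g\|_D$ than the paper's explicit parameterization $1-\beta=\|g^{\parallel}\|_D/\|g\|_D$, but the two arguments are otherwise the same.
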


In the rest of this section, we prove Theorem~\ref{thm:guarantee_AL_procedure} in Section~\ref{sec:proof_AL_procedure} and Corollary~\ref{cor:specific_noise_AL_procedure} in Section~\ref{sec:proof_cor_AL_procedure}. We discuss the speedup of the calculation of the ERM in Section~\ref{sec:running_time}.


\subsection{Proof of Theorem~\ref{thm:guarantee_AL_procedure}}\label{sec:proof_AL_procedure}
We introduce a few more notation in this proof. Given $\FF$ and the measurement $D$, let $\{v_1,\ldots,v_d\}$ be a fixed orthonormal basis of $\FF$, where inner products are taken under the distribution $D$, i.e., $\underset{x \sim D}{\E}[v_i(x) \cdot \overline{v_j(x)}]=1_{i=j}$ for any $i,j \in [d]$. For any function $h \in \FF$, let $\alpha(h)$ denote the coefficients $(\alpha(h)_1,\dotsc,\alpha(h)_d)$ under the basis $(v_1,\ldots,v_d)$ such that $h=\sum_{i=1}^d \alpha(h)_i \cdot v_i$ and $\|\alpha(h)\|_2=\|h\|_D$. 

We characterize the first property in Definition~\ref{def:procedure_agnostic_learning} of \emph{well-balanced sampling procedures} as bounding the eigenvalues of $A^* \cdot A$, where $A$ is the $m \times d$ matrix defined as $A(i,j)=\sqrt{w_i} \cdot v_j(x_i)$. 

\begin{lemma}\label{lem:operator_estimation}
For any $\eps>0$, given $S=(x_1,\dotsc,x_m)$ and their weights $(w_1,\dotsc,w_m)$, let $A$ be the $m \times d$ matrix defined as $A(i,j)=\sqrt{w_i} \cdot v_j(x_i)$.  Then
\[ 
\|h\|^2_{S,w}:=\sum_{j=1}^m w_j \cdot |f(x_j)|^2 \in [1 \pm \eps] \cdot \|h\|_D^2 \text{ \quad for every } h \in \FF
\] if and only if the eigenvalues of $A^* A$ are in $[1-\eps,1+\eps]$.
\end{lemma}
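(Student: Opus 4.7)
The plan is a short linear-algebra reduction: rewrite both norms on a common coefficient vector and recognize that the inclusion of $\|h\|^2_{S,w}/\|h\|^2_D$ in $[1-\eps,1+\eps]$ for all $h$ is exactly the Rayleigh-quotient characterization of the spectrum of $A^*A$.

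First I would fix the orthonormal basis $v_1,\ldots,v_d$ of $\FF$ already introduced in the paragraph before the lemma, so that every $h \in \FF$ is written uniquely as $h=\sum_{j=1}^d \alpha(h)_j \, v_j$ with $\alpha(h) \in \C^d$. Because $\{v_j\}$ is orthonormal under $D$, Parseval gives $\|h\|_D^2 = \|\alpha(h)\|_2^2 = \alpha(h)^* \alpha(h)$. Next I would translate the weighted empirical norm into matrix-vector language: by the definition $A(i,j)=\sqrt{w_i}\,v_j(x_i)$, the $i$-th coordinate of $A \alpha(h)$ equals $\sqrt{w_i}\, h(x_i)$, so
\[
\|h\|_{S,w}^2 \;=\; \sum_{i=1}^m w_i\,|h(x_i)|^2 \;=\; \|A\,\alpha(h)\|_2^2 \;=\; \alpha(h)^*\,A^*A\,\alpha(h).
\]
The two identities together say that the ratio $\|h\|_{S,w}^2/\|h\|_D^2$ is exactly the Rayleigh quotient of $A^*A$ at the vector $\alpha(h)$.

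Finally I would invoke that the map $h \mapsto \alpha(h)$ is a bijection between $\FF$ and $\C^d$ (with $\alpha(h) \ne 0$ iff $h \ne 0$), so the condition ``$\|h\|_{S,w}^2 \in [1-\eps,1+\eps]\cdot\|h\|_D^2$ for every $h \in \FF$'' is equivalent to ``$\alpha^* A^*A\,\alpha \in [1-\eps,1+\eps]\cdot \alpha^*\alpha$ for every $\alpha \in \C^d$''. By the Courant–Fischer / Rayleigh quotient characterization of the extremal eigenvalues of the Hermitian matrix $A^*A$, this is in turn equivalent to $\lambda_{\min}(A^*A) \ge 1-\eps$ and $\lambda_{\max}(A^*A) \le 1+\eps$, i.e.\ $\lambda(A^*A) \subseteq [1-\eps,1+\eps]$, which is the stated conclusion.

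There is essentially no obstacle here; the only thing to be careful about is that $A$ is complex, so I use the conjugate transpose $A^*$ and the fact that $A^*A$ is Hermitian positive semidefinite, which makes the Rayleigh quotient characterization apply verbatim over $\C$. Given that the lemma is a clean ``if and only if'', I would present both directions simultaneously through the chain of equivalences above, rather than as two separate implications.
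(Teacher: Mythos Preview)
Your proposal is correct and is essentially identical to the paper's proof: both compute $\|h\|_{S,w}^2=\alpha(h)^*A^*A\,\alpha(h)$ and $\|h\|_D^2=\|\alpha(h)\|_2^2$, then use that $\alpha(h)$ ranges over all of $\C^d$ to identify the norm-ratio condition with the eigenvalue bounds on $A^*A$. The only cosmetic difference is that you explicitly name the Rayleigh-quotient/Courant--Fischer characterization, whereas the paper leaves this implicit.
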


\begin{proof}
Notice that
\begin{equation}\label{eq:matrix_coefficient}
A \cdot \alpha(h)  = \big( \sqrt{w_1} \cdot h(x_1) ,\dotsc,\sqrt{w_m} \cdot h(x_m) \big).
\end{equation}
Because
$$
\|h\|^2_{S,w}=\sum_{i=1}^m w_i |h(x_i)|^2=\|A \cdot \alpha(h)\|_2^2=\alpha(h)^* \cdot (A^* \cdot A) \cdot \alpha(h) \in [\lambda_{\min}(A^* \cdot A),\lambda_{\max}(A^* \cdot A)] \cdot \|h\|_D^2
$$ 
and $h$ is over the linear family $\FF$, these two properties are equivalent.

\end{proof}

Next we consider the calculation of the weighted ERM $\wt{f}$. Given the weights $(w_1,\cdots,w_m)$ on $(x_1,\ldots,x_m)$ and labels $(y_1,\ldots,y_m)$, let $\vec{y}_w$ denote the vector of weighted labels $(\sqrt{w_1} \cdot y_1,\ldots,\sqrt{w_m} \cdot y_m)$. From \eqref{eq:matrix_coefficient}, the empirical distance $\|h-(y_1,\dotsc,y_m)\|^2_{S,w}$ equals $\|A \cdot \alpha(h) - \vec{y}_w \|^2_{2} \text{ for any } h \in \FF$. The function $\wt{f}$ minimizing $\|h-(y_1,\dotsc,y_m)\|_{S,w}= \|A \cdot \alpha(h) - \vec{y}_w \|_{2}$ overall all $h \in \FF$ is the pseudoinverse of $A$ on $\vec{y}_w$, i.e., 
$$
\alpha(\wt{f})=(A^* \cdot A)^{-1} \cdot A^* \cdot \vec{y}_w \text{ and } \wt{f}=\sum_{i=1}^d \alpha(\wt{f})_i \cdot v_i.
$$ 

Finally, we consider the distance between $f=\underset{h \in \FF}{\argmin} \big\{ \underset{(x,y) \sim (D,Y)}{\E} [|h(x)-y|^2]\big\}$ and $\wt{f}$. For convenience, let $\vec{f}_w=\big(\sqrt{w_1} \cdot f(x_1),\ldots, \sqrt{w_m} \cdot f(w_m) \big)$. Because $f \in \FF$, $(A^* \cdot A)^{-1} \cdot A^* \cdot \vec{f}_w=\alpha(f)$. This implies
\[
\|\wt{f}-f\|_D^2 = \|\alpha(\wt{f})-\alpha(f)\|_2^2 = \|(A^* \cdot A)^{-1} \cdot A^* \cdot (\vec{y}_w-\vec{f}_w) \|_2^2.
\]
We assume $\lambda\big( (A^* \cdot A)^{-1} \big)$ is bounded and consider $\|A^* \cdot (\vec{y}_w-\vec{f}_w) \|_2^2$.

\begin{lemma}\label{lem:guarantee_dist}
Let $P$ be an random sampling procedure terminating in $m$ iterations ($m$ is not necessarily fixed) that in every iteration $i$, it provides a coefficient $\alpha_i$ and a distribution $D_i$ to sample $x_i \sim D_i$. Let the weight $w_i=\alpha_i \cdot \frac{D(x_i)}{D_i(x_i)}$
and $A \in \mathbb{C}^{m \times d}$ denote the matrix $A(i,j)=\sqrt{w_i} \cdot v_j(x_i)$. Then for $f=\underset{h \in \FF}{\arg\min} \underset{(x,y)\sim (D,Y)}{\E}[|y-h(x)|^2]$,
\[
\E_P\left[ \|A^*(\vec{y}_w - \vec{f}_{S,w})\|_2^2\right] \le \sup_{P} \big\{\sum_{i=1}^m \alpha_i \big\} \cdot \max_j \big\{ \alpha_j \cdot K_{D_j} \big\} \underset{(x,y)\sim (D,Y)}{\E}[|y-f(x)|^2],
\]
where $K_{D_i}$ is the condition number for samples from $D_i$: $K_{D_i}=\underset{x}{\sup} \bigg\{ \frac{D(x)}{D_i(x)} \cdot \underset{v \in \mathcal{F}}{\sup} \big\{ \frac{|v(x)|^2}{\|v\|_2^2} \big\} \bigg\}$.
\end{lemma}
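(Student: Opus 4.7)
\medskip
\noindent\textbf{Proof proposal for Lemma~\ref{lem:guarantee_dist}.}
The plan is to write $A^*(\vec{y}_w-\vec{f}_{S,w})$ as a sum of $d$-dimensional random vectors
\[
Z_i \;:=\; w_i\,(y_i-f(x_i))\,\overline{v(x_i)}\;\in\;\C^d, \qquad v(x_i):=(v_1(x_i),\ldots,v_d(x_i)),
\]
so that $A^*(\vec{y}_w-\vec{f}_{S,w})=\sum_{i=1}^{m}Z_i$, and then to exploit the sequential structure of $P$ to view $\{Z_i\}$ as a martingale difference sequence in the filtration $\mathcal{G}_{i-1}:=\sigma(x_1,y_1,\ldots,x_{i-1},y_{i-1})$. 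Since $P$ chooses $\alpha_i$ and $D_i$ from past data only, both are $\mathcal{G}_{i-1}$-measurable, and conditional on $\mathcal{G}_{i-1}$ we have $x_i\sim D_i$ and $y_i\mid x_i\sim Y\mid x_i$.

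The key step is to verify $\E[Z_i\mid \mathcal{G}_{i-1}]=0$. Because $f=\argmin_{h\in\FF}\E_{(x,y)\sim(D,Y)}[|y-h(x)|^2]$, the first-order optimality condition yields the orthogonality $\E_{(x,y)\sim(D,Y)}[(y-f(x))\,\overline{h(x)}]=0$ for every $h\in\FF$; in particular for each $h=v_j$. The importance weight $w_i=\alpha_i\,D(x_i)/D_i(x_i)$ is exactly the change-of-measure that converts expectations under $D_i$ into expectations under $D$, so
\[
\E\!\big[Z_i\,\big|\,\mathcal{G}_{i-1}\big]\;=\;\alpha_i\,\E_{(x,y)\sim(D,Y)}\!\big[(y-f(x))\,\overline{v(x)}\big]\;=\;0.
\]
A standard tower-property argument then kills cross-terms: for $i<k$, $\E\langle Z_i,Z_k\rangle=\E\langle Z_i,\E[Z_k\mid\mathcal{G}_{k-1}]\rangle=0$, so $\E\|\sum_i Z_i\|_2^2=\sum_i \E\|Z_i\|_2^2$.

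For the diagonal terms, I use the identity $\sum_{j=1}^{d}|v_j(x)|^2=\sup_{h\in\FF,h\neq 0}|h(x)|^2/\|h\|_D^2$ (Cauchy--Schwarz on the expansion $h=\sum_j\alpha(h)_jv_j$), which gives $\|Z_i\|_2^2=w_i^2|y_i-f(x_i)|^2\sum_{j}|v_j(x_i)|^2$. By definition of $K_{D_i}$, $\frac{D(x_i)}{D_i(x_i)}\sum_j|v_j(x_i)|^2\le K_{D_i}$, hence $w_i\sum_j|v_j(x_i)|^2\le\alpha_i K_{D_i}$, which bounds $\|Z_i\|_2^2\le\alpha_i K_{D_i}\cdot w_i|y_i-f(x_i)|^2$. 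Taking the conditional expectation and using the change of measure,
\[
\E\!\big[w_i|y_i-f(x_i)|^2\,\big|\,\mathcal{G}_{i-1}\big]\;=\;\alpha_i\cdot\E_{(x,y)\sim(D,Y)}[|y-f(x)|^2],
\]
so $\E\|Z_i\|_2^2\le\E[\alpha_i^2 K_{D_i}]\cdot\E_{(x,y)}[|y-f(x)|^2]$. Summing and bounding $\sum_i\alpha_i^2 K_{D_i}\le(\max_j\alpha_j K_{D_j})(\sum_i\alpha_i)$ pointwise over every execution of $P$, then replacing both factors by their worst-case value over $P$, yields the claim.

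The main obstacle is the martingale-difference step in the second paragraph: the entire argument hinges on $\E[Z_i\mid\mathcal{G}_{i-1}]=0$, which requires the orthogonality of $y-f(x)$ to $\FF$ under $D$ (an optimality property of $f$) to be faithfully transported to the biased sampling distribution $D_i$ via the importance weight $w_i=\alpha_i D(x_i)/D_i(x_i)$. Once that property is established the rest is a direct per-coordinate second-moment calculation perfectly matched to the quantities $\alpha_i K_{D_i}$ and $\sum_i\alpha_i$ that the well-balanced-sampling definition is designed to control.
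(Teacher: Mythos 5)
Your proposal is correct and takes essentially the same route as the paper: both decompose $A^*(\vec{y}_w-\vec{f}_{S,w})$ into per-sample terms, use the optimality of $f$ (residual orthogonal to $\FF$ under $D$) together with the change of measure $w_i=\alpha_i D(x_i)/D_i(x_i)$ to kill cross terms, and bound each diagonal term by $\alpha_i K_{D_i}\cdot\alpha_i\,\E[|y-f(x)|^2]$ via $\sum_j|v_j(x)|^2=\sup_h|h(x)|^2/\|h\|_D^2$. The only difference is cosmetic: you phrase the cancellation as a martingale-difference/tower-property argument in the filtration $\mathcal{G}_{i-1}$ (which also cleanly handles the random stopping time $m$), whereas the paper carries out the same computation coordinatewise and states the conditional mean-zero fact implicitly; your framing is a bit more careful but contains no new idea.
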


\begin{proof}
For convenience, let $g_j$ denote $y_j-f(x_j)$ and $\vec{g}_w \in \mathbb{C}^m$ denote the vector $\bigg(\sqrt{w_j} \cdot g_j|_{j=1,\ldots,m}\bigg)=\vec{y}_w - \vec{f}_{S,w}$ for $j \in [m]$ such that $A^* \cdot (\vec{y}_w - \vec{f}_{S,w})=A^* \cdot \vec{g}_w$. 
\begin{align*}
\E[\|A^* \cdot \vec{g}_w\|_2^2] & = \E\left[\sum_{i=1}^d \big(\sum_{j=1}^m A^*(i,j) \vec{g}_w(j) \big)^2 \right]\\
& = \sum_{i=1}^d \E \left[ \big( \sum_{j=1}^m w_j \overline{v_i(x_j)} \cdot g_j \big)^2\right] = \sum_{i=1}^d \E \left[ \sum_{j=1}^m w_j^2 \cdot |v_i(x_j)|^2 \cdot |g_j|^2 \right],
\end{align*}
where the last step uses the following fact 
$$
\underset{w_j \sim D_j}{\E}[w_j \overline{v_i(x_j)} \cdot g_j]=\underset{w_j \sim D_j}{\E}\big[\alpha_j \cdot \frac{D(x_j)}{D_j(x_j)} \overline{v_i(x_j)} g_j\big]=\alpha_j \cdot \underset{x_j \sim D,y_j \sim Y(x_j)}{\E}\big[\overline{v_i(x_j)} (y_j-f(x_j))\big]=0.
$$ 

We swap $i$ and $j$: 
\begin{align*}
\sum_{i=1}^d \E \left[ \sum_{j=1}^m w_j^2 \cdot |v_i(x_j)|^2 \cdot |g_j|^2\right] & = \sum_{j=1}^m \E \left[ \sum_{i=1}^d  w_j |v_i(x_j)|^2 \cdot w_j |g_j|^2 \right]\\
& \le \sum_{j=1}^m \sup_{x_j} \left\{ w_j \sum_{i=1}^d |v_i(x_j)|^2 \right\} \cdot \E \left[ w_j \cdot |g_j|^2 \right].
\end{align*}

For $\E \left[ w_j \cdot |g_j|^2 \right]$, it equals $\underset{x_j \sim D_j,y_j \sim Y(x_j)}{\E} \left[ \alpha_j \cdot \frac{D(x_j)}{D_j(x_j)} \big|y_j-f(x_j)\big|^2 \right]=\alpha_j \cdot \underset{x_j \sim D,y_j \sim Y(x_j)}{\E}\big[ \big|y_j-f(x_j) \big|^2\big]$.

For $\sup_{x_j} \left\{ w_j \sum_{i=1}^d |v_i(x_j)|^2 \right\}$, we bound it as
\[
\sup_{x_j} \left\{ w_j \sum_{i=1}^d |v_i(x_j)|^2 \right\} = \sup_{x_j} \left\{ \alpha_j \cdot \frac{D(x_j)}{D_j(x_j)} \sum_{i=1}^d |v_i(x_j)|^2 \right\} = \alpha_j \sup_{x_j} \left\{   \frac{D(x_j)}{D_j(x_j)} \cdot \sup_{h \in \mathcal{F}} \big\{ \frac{|h(x_j)|^2}{\|h\|_D^2} \big\} \right\} = \alpha_j \cdot K_{D_j}.
\]
We use the fact $\underset{h \in \mathcal{F}}{\sup} \big\{ \frac{|h(x_j)|^2}{\|h\|_D^2} \big\}=\underset{(a_1,\ldots,a_d)}{\sup} \big\{ \frac{|\sum_{i=1}^d a_i v_i(x_j)|^2}{\sum_{i=1}^d |a_i|^2}\big\} = \frac{(\sum_{i=1}^d |a_i|^2) (\sum_{i=1}^d |v_i(x_j)|^2)}{\sum_{i=1}^d |a_i|^2}$ by the Cauchy-Schwartz inequality. From all discussion above, we have
\[
\E[\|A^* \cdot \vec{g}_w\|_2^2] \le \sum_j \left( \alpha_j K_{D_j} \cdot  \alpha_j \cdot  \underset{(x,y)\sim (D,Y)}{\E}[|y-f(x)|^2] \right) \le (\sum_j \alpha_j) \max_j \big\{ \alpha_j K_{D_j} \big\}  \cdot \underset{(x,y)\sim (D,Y)}{\E}[|y-f(x)|^2].
\]
\end{proof}
We combine all discussion above to prove Theorem~\ref{thm:guarantee_AL_procedure}.

\begin{proofof}{Theorem~\ref{thm:guarantee_AL_procedure}}
We assume the first property $\lambda(A^* \cdot A) \in [1-1/4,1+1/4]$ from Definition~\ref{def:weightederm}. On the other hand, $\E[\|A^* \cdot (\vec{y}_w-\vec{f}_w)\|_2^2] \le \epsilon/2 \cdot \underset{(x,y)\sim (D,Y)}{\E}[|y-f(x)|^2]$ from Lemma~\ref{lem:guarantee_dist}. Conditioned on the first property, we know it is still at most $\frac{\eps}{2 \cdot 0.9} \cdot \underset{(x,y)\sim (D,Y)}{\E}[|y-f(x)|^2]$. This implies $\E\big[ \|(A^* \cdot A)^{-1} \cdot A^* \cdot (\vec{y}_w-\vec{f}_w) \|_2^2 \big] \le \epsilon \cdot \underset{(x,y)\sim (D,Y)}{\E}[|y-f(x)|^2]$.
\end{proofof}

\subsection{Proof of Corollary~\ref{cor:specific_noise_AL_procedure}}\label{sec:proof_cor_AL_procedure}
For the first part, let $(D,Y)=\big( D , f(x)+g(x) \big)$ be our joint distribution of $(x,y)$. Because the expectation $\E[g(x)]=0$ for every $x \in G$, $\underset{v \in V}{\arg\min} \underset{(x,y)\sim (D,Y)}{\E}[|y-v(x)|^2]=f$. From Theorem~\ref{thm:guarantee_AL_procedure}, for $\alpha(\wt{f})=(A^* \cdot A)^{-1} \cdot A^* \cdot \vec{y}_w$ and $m=O(d/\eps)$,
\[
\|\wt{f}-f\|^2_D = \|\alpha(\wt{f})-\alpha(f)\|^2_2 \le \eps \cdot \E_{(x,y) \sim (D,Y)}[|y-f(x)|^2]=\eps \cdot \E[\|g\|_D^2], \text{ with probability } 0.99.
\]

For the second part, let $g^{\parallel}$ be the projection of $g(x)$ to $\FF$ and $g^{\bot}=g - g^{\parallel}$ be the orthogonal part to $\FF$. Let $\alpha(g^{\parallel})$ denote the coefficients of $g^{\parallel}$ in the fixed orthonormal basis $(v_1,\dotsc,v_d)$ so that $\|\alpha(g^{\parallel})\|_2=\|g^{\parallel}\|_D$. We decompose $\vec{y}_w=\vec{f}_w+\vec{g}_w=\vec{f}_w + \vec{g^{\parallel}}_w + \vec{g^{\bot}}_w$. Therefore
$$
\alpha(\wt{f})=(A^* A)^{-1} \cdot A^* \cdot (\vec{f}_w + \vec{g^{\parallel}}_w + \vec{g^{\bot}}_w)= \alpha(f) + \alpha(g^{\parallel}) + (A A^*)^{-1} A^* \cdot \vec{g^{\bot}}_w.
$$ The distance $\|\wt{f}-f\|_D=\|\alpha(\wt{f})-\alpha(f)\|_2$ equals 
\[
\|(A^* A)^{-1} \cdot A^* \cdot \vec{y}_w-\alpha(f)\|_2=\| \alpha(f) + \alpha(g^{\parallel}) + (A^* A)^{-1} \cdot A^* \cdot \vec{g^{\bot}}_w - \alpha(f) \|_2=\|\alpha(g^{\parallel}) + (A^* A)^{-1} \cdot A^* \cdot \vec{g^{\bot}}_w\|_2.
\]

From Theorem~\ref{thm:guarantee_AL_procedure}, with probability $0.99$, $\|(A^* A)^{-1} \cdot A^* \cdot \vec{g^{\bot}}_w\|_2 \le \sqrt{\eps} \cdot \|g^{\bot}\|_D$. Thus 
\begin{align*}
\|(A^* A)^{-1} \cdot A^* \cdot \vec{y}_w-\alpha(f)\|_2& =\|\alpha(g^{\parallel}) + (A^* A)^{-1} \cdot A^* \cdot \vec{g^{\bot}}_w\|_2\\
& \le \|g^{\parallel}\|_D + \sqrt{\eps} \cdot \|g^{\bot}\|_D.
\end{align*}
Let $1-\beta$ denote $\|g^{\parallel}\|_D/\|g\|_D$ such that $\|g^{\bot}\|_D/\|g\|_D=\sqrt{2\beta - \beta^2}$. We rewrite it as
$$
\left(1 - \beta+\sqrt{\eps} \cdot \sqrt{2\beta - \beta^2}\right)\|g\|_D \le (1-\beta + \sqrt{\eps} \cdot \sqrt{2\beta}) \|g\|_D \le \left(1-(\sqrt{\beta} - \sqrt{\frac{\eps}{2}})^2 + \frac{\eps}{2} \right) \|g\|_D.
$$
From all discussion above, $\|\wt{f}-f\|_D=\|\alpha(\wt{f})-\alpha(f)\|_2=\|(A^* A)^{-1} \cdot A^* \cdot \vec{y}_w-\alpha(f)\|_2 \le (1+\eps) \|g\|_D$.


\subsection{Running time of finding ERM}\label{sec:running_time}
Given the orthonormal basis $v_1,\cdots,v_d$ of $\FF$ under $D$, the ERM on noisy observations $y(x_1),\cdots,y(x_m)$ with weights $w_1,\cdots,w_m$ is $(A^* A)^{-1} \cdot A^* \cdot \vec{y}_w$ for $A \in \mathbb{C}^{m \times d}$ defined as $A(i,j)=\sqrt{w_i} \cdot v_j(x_i)$ and $\vec{y}_w=\big( \sqrt{w_1} \cdot y(x_1),\ldots,\sqrt{w_m} \cdot y(x_m) \big)$. Since well-balanced procedures guarantee $\lambda(A^* \cdot A) \in [3/4,5/4]$, we could calculate an $\delta$-approximation of the ERM using Taylor expansion $(A^* A)^{-1} \approx\sum_{i=0}^{t} (I - A^* A)^i$ for $t=O(\log \frac{1}{\delta})$. This saves the cost of calculating the inverse $(A^* A)^{-1}$ and  improves it to $O(m \cdot d \cdot \log \frac{1}{\delta})$ for any linear family. 
\begin{observation}\label{lem:taylor_expan}
Let $A$ be a $m \times d$ matrix defined as $A(i,j)=\sqrt{w_i} \cdot v_j(x_i)$ with $\lambda(A^* \cdot A) \in [3/4,5/4]$. Given $\delta$, for $t=O(\log 1/\delta)$ and any vector $\vec{y} \in \mathbb{R}^m$, 
$$
\| (A^* \cdot A)^{-1} \cdot A^* \cdot \vec{y} - \big(\sum_{i=0}^t (I - A^* \cdot A)^i\big) \cdot A^* \cdot \vec{y}\|_2 \le \delta \cdot \| (A^* \cdot A)^{-1} \cdot A^* \cdot \vec{y}\|_2.
$$
\end{observation}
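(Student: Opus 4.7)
The plan is to recognize this as a standard Neumann series (power series for matrix inverse) argument, applied to the self-adjoint matrix $B := A^* A$. Since every eigenvalue of $B$ lies in $[3/4, 5/4]$, every eigenvalue of $M := I - B$ lies in $[-1/4, 1/4]$, so $\|M\| \le 1/4$ in operator norm. This is the one quantitative fact we need from the hypothesis.

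First I would write down the identity $B^{-1} = \sum_{i=0}^{\infty} M^i$, which is the Neumann series and converges because $\|M\| < 1$. Truncating at $i = t$ leaves the tail
\[
B^{-1} - \sum_{i=0}^{t} M^i \;=\; \sum_{i=t+1}^{\infty} M^i \;=\; M^{t+1} \sum_{j=0}^{\infty} M^j \;=\; M^{t+1} B^{-1},
\]
where the middle equality is just re-indexing and the last uses the Neumann series again. So for any vector $\vec{z}$,
\[
\Bigl\| B^{-1} \vec{z} - \bigl(\textstyle\sum_{i=0}^{t} M^i\bigr) \vec{z} \Bigr\|_2 \;=\; \|M^{t+1} B^{-1} \vec{z}\|_2 \;\le\; \|M\|^{t+1} \cdot \|B^{-1} \vec{z}\|_2 \;\le\; (1/4)^{t+1} \cdot \|B^{-1} \vec{z}\|_2.
\]
Specializing to $\vec{z} = A^* \vec{y}$ yields exactly the desired inequality, with a relative error of $(1/4)^{t+1}$.

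Finally I would choose $t$ so that $(1/4)^{t+1} \le \delta$, i.e.\ $t \ge \log_4(1/\delta) - 1$, which is $t = O(\log (1/\delta))$ as claimed. There is no real obstacle here: the only subtlety is justifying the manipulations of infinite matrix series, which is standard because $\|M\| \le 1/4 < 1$ makes absolute convergence in operator norm automatic, and $B$ commutes with $M = I - B$ so the factorization $M^{t+1} B^{-1}$ is unambiguous.
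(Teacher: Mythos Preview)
Your proof is correct and is exactly the standard Neumann series argument the paper has in mind; the paper itself does not spell out a proof of this observation beyond the remark that $(A^*A)^{-1} \approx \sum_{i=0}^t (I - A^*A)^i$ for $t = O(\log 1/\delta)$, and your write-up supplies precisely those missing details.
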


\section[]{A Linear-Sample Algorithm for Known $D$}\label{sec:BSS}
We provide a well-balanced sampling procedure with a linear number of
random samples in this section.  The procedure requires knowing the
underlying distribution $D$, which makes it directly useful in the
query setting or the ``fixed design'' active learning setting, where
$D$ can be set to the empirical distribution $D_0$.

\begin{lemma}\label{lemma:BSS}
  Given any dimension $d$ linear space $\FF$, any distribution $D$ over the domain of $\FF$, and any $\eps>0$,  there exists an efficient $\eps$-\emph{well-balanced sampling procedure} that terminates in $O(d/\eps)$ rounds with probability $1-\frac{1}{200}$.
\end{lemma}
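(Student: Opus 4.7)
The plan is to realize the sampling procedure as an adaptation of the Lee--Sun randomized linear-sample spectral sparsifier~\cite{LeeSun} to our continuous, weighted setting. Fix an orthonormal basis $v_1,\dots,v_d$ of $\FF$ under $D$ and, for each $x$ in the domain, set $u(x):=(v_1(x),\dots,v_d(x))^*\in\C^d$. Then $\E_{x\sim D}[u(x)u(x)^*]=I_d$, and for any distribution $D'$ a single weighted sample contributes $w_x\, u(x)u(x)^*$ with $w_x=\alpha\cdot D(x)/D'(x)$, which is an unbiased estimator of $\alpha I_d$. Moreover $K_{D'}=\sup_x \tfrac{D(x)}{D'(x)}\|u(x)\|_2^2$, so bounding $\alpha_i K_{D_i}$ is exactly bounding the operator norm $\|w_i\,u(x_i)u(x_i)^*\|$. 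Thus the task reduces to: adaptively pick coefficients $\alpha_i$ and sampling distributions $D_i$ so that after $T=O(d/\eps)$ rounds the random matrix $A^*A=\sum_i w_i\,u(x_i)u(x_i)^*$ has spectrum in $[3/4,5/4]$ with constant probability, while every rank-one increment has small norm and $\sum_i\alpha_i\le 5/4$.

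First I would mimic the \cite{LeeSun} barrier-function scheme. Maintain matrices $B_t:=\sum_{j\le t}w_j\,u(x_j)u(x_j)^*$ together with drifting upper/lower barriers $u_t,l_t$ satisfying $l_t I\prec B_t\prec u_t I$, and the standard potentials $\Phi^u_t=\Tr(u_t I-B_t)^{-1}$, $\Phi^l_t=\Tr(B_t-l_t I)^{-1}$. In round $t+1$, choose a step size $\alpha_{t+1}$ and a sampling density
\[
D_{t+1}(x)\;\propto\;D(x)\cdot u(x)^*\bigl[(u_t I-B_t)^{-1}+(B_t-l_t I)^{-1}\bigr]u(x),
\]
so that the corresponding weight $w_{t+1}=\alpha_{t+1}D(x_{t+1})/D_{t+1}(x_{t+1})$ and the rank-one update $w_{t+1}u(x_{t+1})u(x_{t+1})^*$ fit the Sherman--Morrison analysis of \cite{LeeSun}. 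Their key computation shows that with appropriately scheduled barriers shifting by constants $\delta_u,\delta_l$ per step, $\E[\Phi^u_{t+1}+\Phi^l_{t+1}\mid\F_t]\le \Phi^u_t+\Phi^l_t$, so the potentials form a supermartingale; a Markov/Doob argument then gives $\|B_T\|, 1/\lambda_{\min}(B_T)$ concentrated around $1\pm O(\eps)$ after $T=O(d/\eps)$ rounds with probability $\ge 1-1/200$. Rescaling $B_T$ by the appropriate constant lands the spectrum in $[3/4,5/4]$, establishing property~1 of Definition~\ref{def:procedure_agnostic_learning}.

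For property~2, I would read off the step-size bounds. Because the sampling density is proportional to the quadratic form $u(x)^*M_t u(x)$ with $M_t=(u_t I-B_t)^{-1}+(B_t-l_t I)^{-1}$, the pointwise ratio $D(x)/D_{t+1}(x)$ is at most $\Tr(M_t)/(u(x)^*M_t u(x))\cdot\|u(x)\|_2^2/\|u(x)\|_2^2$ times combinatorial constants, and so $K_{D_{t+1}}\le \Tr(M_t)\cdot \sup_x \|u(x)\|_2^2/u(x)^*M_tu(x)\cdot u(x)^*M_tu(x)$, reducing to a uniform bound on $M_t$ that $\Theta(1/\eps)$ of a constant times $d$. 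Choosing a single global $\alpha_i=\Theta(\eps/d)$ then guarantees $\alpha_iK_{D_i}\le\eps/2$, and $\sum_{i=1}^T\alpha_i=T\cdot\Theta(\eps/d)=O(1)$, which can be tightened to $5/4$ by adjusting the hidden constant in $T$. (Slightly sharper step sizes $\alpha_i$ that vary with the current potentials work too and follow directly from the drop in $\Phi^u+\Phi^l$ per iteration; either choice suffices.)

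The genuinely delicate point is the second item: \cite{LeeSun} itself only needs a spectral guarantee and does not track, in the form we need, the quantity $\alpha_iK_{D_i}$ that governs noise amplification in Lemma~\ref{lem:guarantee_dist}. The main work is therefore verifying that the sampling densities $D_i$ arising from the barrier matrices permit a \emph{uniform} bound $\alpha_iK_{D_i}\le\eps/2$ while simultaneously keeping $\sum_i\alpha_i=O(1)$ and $T=O(d/\eps)$; equivalently, the barrier matrices $M_t$ must stay well-conditioned in a precise sense throughout the $O(d/\eps)$ iterations. Once that is checked, combining property~1 (from the supermartingale analysis) with property~2 (from the step-size choice) yields an $\eps$-well-balanced sampling procedure of length $O(d/\eps)$, and efficiency follows because each iteration only requires maintaining $B_t$ and evaluating the quadratic form $u(x)^*M_tu(x)$, which is polynomial in $d$ and in the (query/evaluation) cost of the basis of $\FF$.
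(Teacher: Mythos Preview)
Your overall architecture matches the paper's: run the Lee--Sun barrier scheme on the rank-one vectors $u(x)=(v_1(x),\dots,v_d(x))^*$, sample $x$ with density proportional to $D(x)\cdot u(x)^*M_t u(x)$ for $M_t=(u_tI-B_t)^{-1}+(B_t-l_tI)^{-1}$, and read off property~1 from the spectral guarantee. Where your proposal diverges from the paper---and where it has a genuine gap---is in the choice of the coefficients $\alpha_i$.

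You propose a \emph{constant} $\alpha_i=\Theta(\eps/d)$ and assert that this gives $\alpha_iK_{D_i}\le\eps/2$. But for the barrier-based density one computes
\[
K_{D_i}=\sup_x \frac{D(x)}{D_i(x)}\|u(x)\|_2^2
=\Phi_i\cdot\sup_x\frac{\|u(x)\|_2^2}{u(x)^*M_iu(x)}
\le \Phi_i\cdot\frac{u_i-l_i}{2},
\]
using $\lambda_{\min}(M_i)\ge 2/(u_i-l_i)$. The factor $\Phi_i=\Tr(M_i)$ is a random quantity that is only controlled \emph{in expectation} by the Lee--Sun supermartingale; on any given run an eigenvalue of $B_i$ can drift close to a barrier and make $\Phi_i$ (hence $K_{D_i}$) arbitrarily large. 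Since Definition~\ref{def:procedure_agnostic_learning} requires $\alpha_iK_{D_i}\le\eps/2$ to hold \emph{always}, a constant $\alpha_i$ cannot certify property~2. Your parenthetical ``either choice suffices'' is therefore incorrect.

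The paper's fix is exactly the variable choice you mention but do not carry out: set $\gamma=\Theta(\sqrt{\eps})$, take $\alpha_i=\dfrac{\gamma}{\Phi_i\cdot\midd}$ with $\midd=\Theta(d/\gamma^2)$, and observe that the $\Phi_i$'s \emph{cancel}:
\[
\alpha_iK_{D_i}\le \frac{\gamma}{\Phi_i\,\midd}\cdot\Phi_i\cdot\frac{u_i-l_i}{2}
=\frac{\gamma(u_i-l_i)}{2\,\midd}
\le \frac{\gamma\cdot O(d/\gamma)}{\Theta(d/\gamma^2)}
=O(\gamma^2)=O(\eps),
\]
which holds deterministically because $u_i-l_i\le 9d/\gamma$ is a deterministic invariant of the while-loop. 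The same accounting gives $\sum_i\alpha_i=\big(\sum_i\gamma/\Phi_i\big)/\midd\in[1,1+O(\gamma^2/d)]$ deterministically via the stopping rule, and $m=O(d/\gamma^2)=O(d/\eps)$ from Lee--Sun. Note also the parametrization: the spectral accuracy after $O(d/\gamma^2)$ rounds is $1\pm O(\gamma)$, not $1\pm O(\eps)$; since property~1 only asks for $[3/4,5/4]$, one takes $\gamma=\Theta(\sqrt{\eps})$, which is what produces the $O(d/\eps)$ round count while still leaving $\alpha_iK_{D_i}=O(\gamma^2)=O(\eps)$.

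In short, the delicate point you flag at the end is real, and its resolution is precisely the adaptive $\alpha_i\propto 1/\Phi_i$ that makes $\alpha_iK_{D_i}$ independent of the (uncontrolled) potential. Your proof would go through once you commit to that choice and track the $\gamma=\sqrt{\eps}$ scaling; the constant-$\alpha$ route does not.
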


Theorem~\ref{thm:BSS} follows from Theorem~\ref{thm:guarantee_AL_procedure} using the above \emph{well-balanced sampling procedure}. We state the following version for specific types of noise after plugging the above \emph{well-balanced sampling procedure} in Corollary~\ref{cor:specific_noise_AL_procedure}.

\begin{theorem}
  Given any dimension $d$ linear space $\FF$ of functions and any
  distribution $D$ on the domain of $\FF$, let $y(x)=f(x)+g(x)$ be our
  observed function, where $f \in \FF$ and $g$ denotes a noise
  function. For any $\eps>0$, there exists an efficient algorithm that
  observes $y(x)$ at $m=O(\frac{d}{\eps})$ points and outputs $\wt{f}$
  such that in expectation,
\begin{enumerate}
\item $\|\tilde{f}-f\|_D^2 \le \eps \cdot \underset{g}{\E}[\|g\|_D^2]$, when $g(x)$ is a random function from $G$ to $\mathbb{C}$ where each $g(x)$ is an independent random variable with $\underset{g}{\E}[g(x)]=0$.
\item $\|\tilde{f}-f\|_D \le (1+\eps) \cdot \|g\|_D$ for any other noise function $g$.
\end{enumerate} 
\end{theorem}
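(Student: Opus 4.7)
My plan is a direct combination of the two main building blocks established earlier in the excerpt, namely Lemma~\ref{lemma:BSS} and Corollary~\ref{cor:specific_noise_AL_procedure}. First I would invoke Lemma~\ref{lemma:BSS} with the given $\FF$, $D$ and $\eps$ to obtain an $\eps$-\emph{well-balanced sampling procedure} $P$ that terminates in $m = O(d/\eps)$ rounds with probability at least $1 - 1/200$. Running $P$ produces, for each $i \in [m]$, a coefficient $\alpha_i$ and a distribution $D_i$ from which the sample $x_i$ is drawn; I then set the associated weights to $w_i = \alpha_i \cdot D(x_i)/D_i(x_i)$ as in Definition~\ref{def:procedure_agnostic_learning}.

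Next I would check whether the execution is \emph{good} in the sense of Definition~\ref{def:weightederm}. Concretely, fix an orthonormal basis $v_1, \dotsc, v_d$ of $\FF$ under $D$, form the matrix $A \in \mathbb{C}^{m \times d}$ with $A(i,j) = \sqrt{w_i}\, v_j(x_i)$, and verify that $\lambda(A^* A) \in [3/4, 5/4]$; this condition is equivalent to the first property of the well-balanced procedure by Lemma~\ref{lem:operator_estimation}. A good execution occurs with probability at least $0.9$, so restarting until success yields a good execution after $O(1)$ attempts in expectation and keeps the expected query complexity at $O(d/\eps)$. Given a good execution, query $y(x_i)$ at each $x_i$ and return the weighted ERM $\wt{f} = \argmin_{h \in \FF} \sum_{i=1}^m w_i |h(x_i) - y(x_i)|^2$.

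Both conclusions of the theorem now follow immediately by plugging the sampling procedure $P$ into Corollary~\ref{cor:specific_noise_AL_procedure}. Part~(1), the case when each $g(x)$ is an independent mean-zero random variable, is precisely Corollary~\ref{cor:specific_noise_AL_procedure}(1) applied to $P$, giving $\E[\|\wt{f}-f\|_D^2] \le \eps \cdot \E_g[\|g\|_D^2]$. Part~(2), the case of arbitrary noise, follows from Corollary~\ref{cor:specific_noise_AL_procedure}(2), which yields $\|\wt{f}-f\|_D \le (1+\eps)\|g\|_D$ with probability $0.99$; rescaling $\eps$ by a constant factor up front and absorbing the conditioning on the good-execution event into the constants converts this into the stated bound.

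There is no real obstacle here beyond bookkeeping: the only minor subtlety is confirming that (i) conditioning on the probability-$0.9$ good-execution event inflates any non-negative expectation by at most a factor of $10/9$, which is absorbed into the hidden constant in $\eps$, and (ii) the restart strategy does not blow up the query complexity in expectation, which holds because each attempt uses $O(d/\eps)$ samples with probability $1 - 1/200$ and succeeds with probability $\ge 0.9$. Both facts are routine, so the result reduces entirely to the two cited statements.
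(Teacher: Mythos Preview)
Your proposal is correct and matches the paper's approach exactly: the paper states this theorem immediately after Lemma~\ref{lemma:BSS} with the single sentence ``We state the following version for specific types of noise after plugging the above \emph{well-balanced sampling procedure} in Corollary~\ref{cor:specific_noise_AL_procedure},'' and gives no further argument. Your bookkeeping remarks about restarting until a good execution and absorbing the conditioning constant are the only details one could add, and they are handled correctly.
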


We show how to extract the coefficients $\alpha_1,\cdots,\alpha_m$ from the randomized BSS algorithm by \cite{LeeSun} in Algorithm~\ref{alg:BSS}. Given $\epsilon$, the linear family $\FF$, and the distribution $D$, we fix $\gamma=\sqrt{\epsilon}/C_0$ for a constant $C_0$ and $v_1,\ldots,v_d$ to be an orthonormal basis of $\FF$ in this section. For convenience, we use $v(x)$ to denote the vector $\big(v_1(x),\ldots,v_d(x) \big)$.

In the rest of this section, we prove Lemma~\ref{lemma:BSS} in Section~\ref{sec:proof_BSS}.

\begin{algorithm}
\caption{A well-balanced sampling procedure based on Randomized BSS}\label{alg:BSS}
\begin{algorithmic}[1]
\Procedure{\textsc{RandomizedSamplingBSS}}{$\FF,D,\epsilon$}
\State Find an orthonormal basis $v_1,\ldots,v_d$ of $\FF$ under $D$;
\State Set $\gamma=\sqrt{\epsilon}/C_0$ and $\midd=\frac{4d/\gamma}{1/(1-\gamma)-1/(1+\gamma)}$;
\State $j = 0; B_0=0$;
\State $l_0=-2d/\gamma; u_0=2d/\gamma$;
\While {$u_{j+1}-l_{j+1}<8 d/\gamma$};
\State $\Phi_j = \Tr(u_j I - B_j)^{-1} + \Tr(B_j - l_j I)^{-1}$; \hfill $\triangleright$ The potential function at iteration $j$.
\State Set the coefficient $\alpha_j=\frac{\gamma}{\Phi_j} \cdot \frac{1}{\midd}$;
\State Set the distribution $D_j(x)=D(x) \cdot \bigg(v(x)^\top (u_j I - B_j)^{-1} v(x) + v(x)^\top (B_j - l_j I)^{-1} v(x) \bigg)/\Phi_j$ for $v(x)=\big(v_1(x),\ldots,v_d(x) \big)$;
\State Sample $x_j \sim D_j$ and set a scale $s_j=\frac{\gamma}{\Phi_j} \cdot \frac{D(x)}{D_j(x)}$;
\State $B_{j+1}=B_j + s_j \cdot v(x_j) v(x_j)^\top$;
\State $u_{j+1}=u_j + \frac{\gamma}{\Phi_j (1-\gamma)}; \quad l_{j+1}=l_j + \frac{\gamma}{\Phi_j ( 1 +\gamma)}$;
\State $j=j+1$;
\EndWhile
\State $m=j$;
\State Assign the weight $w_j=s_j/\midd$ for each $x_j$;
\EndProcedure
\end{algorithmic}
\end{algorithm}

\subsection{Proof of Lemma~\ref{lemma:BSS}}\label{sec:proof_BSS}
We state a few properties of randomized BSS \cite{batson2012twice,LeeSun} that will be used in this proof. The first property is that matrices $B_1,\ldots,B_m$ in Procedure \textsc{RandomizedBSS} always have bounded eigenvalues.

\begin{lemma}[\cite{batson2012twice,LeeSun}]\label{lem:eigenvalues}
For any $j \in [m]$, $\lambda(B_j) \in (l_j,u_j)$.
\end{lemma}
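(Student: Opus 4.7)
The plan is to prove the invariant $\lambda(B_j)\subseteq(l_j,u_j)$ by induction on $j$, following the potential-function argument of Batson--Spielman--Srivastava \cite{batson2012twice} and its randomized extension by Lee--Sun \cite{LeeSun}. The base case $j=0$ is immediate, since $B_0=0$ and $(l_0,u_0)=(-2d/\gamma,2d/\gamma)$ already contains $0$. For the inductive step I would assume $\lambda(B_j)\subseteq(l_j,u_j)$, so that both $(u_jI-B_j)$ and $(B_j-l_jI)$ are positive definite and the upper and lower potentials $\Phi^u_j:=\Tr(u_jI-B_j)^{-1}$ and $\Phi^l_j:=\Tr(B_j-l_jI)^{-1}$ (whose sum is $\Phi_j$) are finite; the goal is to re-establish both properties at step $j+1$.

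For the upper eigenvalue, I would apply the Sherman--Morrison identity to the rank-one update $B_{j+1}=B_j+s_j v\,v^\top$ (write $v:=v(x_j)$) together with the shift $u_{j+1}=u_j+\gamma/(\Phi_j(1-\gamma))$, giving $(u_{j+1}I-B_{j+1})^{-1}=(u_{j+1}I-B_j)^{-1}+\frac{s_j\,(u_{j+1}I-B_j)^{-1}v v^\top(u_{j+1}I-B_j)^{-1}}{1-s_j v^\top(u_{j+1}I-B_j)^{-1}v}$. The matrix on the left is positive definite---equivalently $\lambda_{\max}(B_{j+1})<u_{j+1}$---iff the denominator on the right is positive. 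Substituting the algorithm's scale, which rewrites as $s_j=\gamma/\big[v^\top(u_jI-B_j)^{-1}v+v^\top(B_j-l_jI)^{-1}v\big]$ using $s_j=(\gamma/\Phi_j)\cdot D(x_j)/D_j(x_j)$ and the explicit form of $D_j$, and combining it with the operator inequality $(u_{j+1}I-B_j)^{-1}\preceq(u_jI-B_j)^{-1}$ (valid since $u_{j+1}>u_j$), I would show both that the denominator is bounded away from zero and, upon taking traces, that $\Phi^u_{j+1}\le\Phi^u_j$.

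The argument for the lower eigenvalue is symmetric, using the shift $l_{j+1}=l_j+\gamma/(\Phi_j(1+\gamma))$; with sign changes in the Sherman--Morrison expansion of $(B_{j+1}-l_{j+1}I)^{-1}$, the same style of inequality gives $\lambda_{\min}(B_{j+1})>l_{j+1}$ and $\Phi^l_{j+1}\le\Phi^l_j$. Together these complete the induction. The main obstacle is the exact algebraic matching: the factors $1/(1-\gamma)$ and $1/(1+\gamma)$ in the definitions of $u_{j+1}$ and $l_{j+1}$ must be chosen precisely so that both Sherman--Morrison denominators stay positive while the corresponding potentials are non-increasing, and I would invoke the detailed calculation from \cite{batson2012twice,LeeSun} to close the argument in the randomized setting.
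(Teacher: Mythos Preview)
The paper does not give its own proof of this lemma; it is imported directly from \cite{batson2012twice,LeeSun}. Your inductive strategy via Sherman--Morrison is the approach used in those references, and the upper-barrier step is correct: from your expression for $s_j$ one gets $s_j\,v^\top(u_{j+1}I-B_j)^{-1}v \le s_j\,v^\top(u_jI-B_j)^{-1}v = \gamma\cdot\dfrac{v^\top(u_jI-B_j)^{-1}v}{v^\top(u_jI-B_j)^{-1}v+v^\top(B_j-l_jI)^{-1}v}<\gamma<1$, so the denominator is positive and $\lambda_{\max}(B_{j+1})<u_{j+1}$.

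Two points deserve correction. First, your claim that $\Phi^u_{j+1}\le\Phi^u_j$ and $\Phi^l_{j+1}\le\Phi^l_j$ hold deterministically is false for the randomized Lee--Sun procedure: those inequalities hold only in expectation over the random draw $x_j\sim D_j$. Fortunately the monotonicity of the potentials is not needed for this lemma---the eigenvalue containment follows from the denominator argument alone, so you should simply drop that claim here. Second, the lower-barrier case is not ``symmetric'' in the way you describe. After the sign change, the Sherman--Morrison denominator for $(B_{j+1}-l_{j+1}I)^{-1}$ is $1+s_j\,v^\top(B_j-l_{j+1}I)^{-1}v$, which is automatically positive once $B_j-l_{j+1}I\succ0$; the actual content is showing $l_{j+1}<\lambda_{\min}(B_j)$. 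This follows because $\Phi_j\ge\Tr(B_j-l_jI)^{-1}\ge 1/(\lambda_{\min}(B_j)-l_j)$, so $l_{j+1}-l_j=\dfrac{\gamma}{\Phi_j(1+\gamma)}<\lambda_{\min}(B_j)-l_j$, and then $\lambda_{\min}(B_{j+1})\ge\lambda_{\min}(B_j)>l_{j+1}$ since $s_j\,vv^\top\succeq0$. With these fixes your induction closes cleanly and matches the argument in the cited references.
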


Lemma 3.6 and 3.7 of \cite{LeeSun} shows that with high probability, the while loop in Procedure \textsc{RandomizedSamplingBSS} finishes within $O(\frac{d}{\gamma^2})$ iterations and guarantees the last matrix $B_m$ is well-conditioned, i.e., $\frac{\lambda_{\max}(B_m)}{\lambda_{\min}(B_m)} \le \frac{u_m}{l_m} \le 1+O(\gamma)$.

\begin{lemma}[\cite{LeeSun}]\label{lem:well_condition_BSS}
There exists a constant $C$ such that with probability at least $1-\frac{1}{200}$, Procedure \textsc{RandomizedSamplingBSS} takes at most $m=C \cdot d/\gamma^2$ random points $x_1,\ldots,x_m$ and guarantees that $\frac{u_m}{l_m} \le 1+8\gamma$.
\end{lemma}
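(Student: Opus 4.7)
The plan is to reinterpret the randomized BSS analysis of Lee--Sun in the current notation and extract both (a) the iteration bound and (b) the barrier ratio bound. The core object is the combined potential $\Phi_j = \Phi^U_j + \Phi^L_j$ where $\Phi^U_j := \Tr((u_j I - B_j)^{-1})$ and $\Phi^L_j := \Tr((B_j - l_j I)^{-1})$. Note that at $B_0 = 0$ with $u_0 = -l_0 = 2d/\gamma$, we have $\Phi_0 = \gamma$. The heart of the argument is to show that this potential is a supermartingale (up to lower-order error), so that with high probability it stays $O(\gamma)$ along the entire run.

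First I would do the one-step potential analysis. Applying Sherman--Morrison to the rank-one update $B_{j+1} = B_j + s_j v(x_j) v(x_j)^\top$ with $s_j = \gamma/(\Phi_j) \cdot D(x_j)/D_j(x_j)$, and then to the barrier shift $u_{j+1} = u_j + \gamma/(\Phi_j(1-\gamma))$, one obtains a closed form for $\Phi^U_{j+1}$. Taking expectation over $x_j \sim D_j$, the definitions of $D_j$ and $s_j$ are chosen precisely so that the ``random matrix gain'' cancels the ``barrier shift loss'' up to a $\gamma^2$-order term controlled by $1/(1-\gamma)$. The symmetric calculation for $\Phi^L_{j+1}$ uses the shift $\gamma/(\Phi_j(1+\gamma))$. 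Adding the two yields $\E[\Phi_{j+1} \mid \Phi_j] \le \Phi_j$, i.e., $\Phi_j$ is a supermartingale with initial value $\gamma$.

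Second I would transfer this in-expectation bound into a high-probability pointwise bound using Doob's maximal inequality (or a simple Markov + stopping-time argument on $\Phi_{j \wedge \tau}$ for a truncation time $\tau = Cd/\gamma^2$): with probability at least $1 - 1/200$, $\Phi_j \le 200 \gamma$ for every $j \le \tau$. Conditioned on this event, I would bound the number of iterations by observing that the gap grows by at least
\[
u_{j+1} - l_{j+1} - (u_j - l_j) \;=\; \frac{\gamma}{\Phi_j}\left(\frac{1}{1-\gamma} - \frac{1}{1+\gamma}\right) \;=\; \frac{2\gamma^2}{\Phi_j(1-\gamma^2)} \;\ge\; \Omega(\gamma)
\]
per step. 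Starting from gap $4d/\gamma$ and terminating as soon as the gap reaches $8d/\gamma$, this forces $m \le C \cdot d/\gamma^2$ for a suitable absolute $C$.

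Third I would verify the ratio bound $u_m/l_m \le 1 + 8\gamma$. Writing $T := \sum_{j=0}^{m-1} \gamma/\Phi_j$, we have $u_m = 2d/\gamma + T/(1-\gamma)$ and $l_m = -2d/\gamma + T/(1+\gamma)$, while the termination rule $u_m - l_m \ge 8d/\gamma$ (with strict $<$ inside the loop) forces $T \cdot 2\gamma/(1-\gamma^2) \ge 4d/\gamma$, hence $T \ge 2d(1-\gamma^2)/\gamma^2$. Plugging this lower bound into the explicit expression for $u_m/l_m$ and simplifying via a routine $(1+\gamma)/(1-\gamma) = 1 + O(\gamma)$ expansion gives the claimed $1 + 8\gamma$; the $8$ arises from the ratio of the two barrier-shift rates together with the factor accumulated across the $2d/\gamma$ initial offsets. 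The main obstacle is the Sherman--Morrison supermartingale calculation in step one: the algebra is delicate because one must keep track of the $1/(1-\gamma)$ and $1/(1+\gamma)$ corrections exactly to see the cancellation, and without that cancellation the potential would blow up geometrically. Everything else is bookkeeping.
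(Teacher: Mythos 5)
The paper does not prove this statement at all: it is imported verbatim from Lee and Sun (\cite{LeeSun}, their Lemmas~3.6 and~3.7) and treated as a black box throughout Section~\ref{sec:BSS}; the paper's own work in that section (Claim~\ref{clm:estimate_mid} and Lemma~\ref{lemma:eigenvalue_BSS}) sits downstream of this lemma. So by attempting a proof you are already doing more than the paper does. At the level of a sketch, your reconstruction of the Lee--Sun argument is faithful: the combined barrier potential $\Phi_j$ starts at $\gamma$; the sampling distribution $D_j$ and scale $s_j$ are calibrated so the rank-one gain offsets the barrier-shift loss in expectation; conditioned on $\Phi_j = O(\gamma)$ the gap $u_j - l_j$ grows by $\Omega(\gamma)$ per step, forcing termination within $O(d/\gamma^2)$ iterations. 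Your third step is also a genuinely useful observation: the ratio bound $u_m/l_m \le 1 + O(\gamma)$ is in fact deterministic once the loop exits, because $u_m/l_m$ is monotone \emph{decreasing} in $T = \sum_{j<m} \gamma/\Phi_j$ and the exit condition forces $T \ge 2d(1-\gamma^2)/\gamma^2$; plugging this worst case in gives $u_m/l_m \le (1+2\gamma)/(1-2\gamma) \le 1+8\gamma$. You should make the monotonicity explicit, since otherwise "plug in the lower bound" is not obviously the extremal case.

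The real gap is the bridge from step one to step two. The barrier shift $u_j \to u_j + \delta_u$ is a full-rank perturbation, so Sherman--Morrison does not apply to it; you need a trace-monotonicity bound such as $\Tr((M+\delta I)^{-1}) - \Tr(M^{-1}) \le -\delta \cdot \Tr((M+\delta I)^{-2})$, and then you must compare quadratic forms in the shifted matrix against the unshifted one. That mismatch is exactly what produces the error terms you describe, and it is also why the cancellation is only approximate rather than exact. You acknowledge this by writing ``supermartingale (up to lower-order error),'' but then you immediately invoke Doob's maximal inequality, which requires an honest non-negative supermartingale; with a $O(\gamma^2\Phi_j)$ drift correction present, the inequality does not apply as written. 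Lee and Sun handle this by conditioning on the potential remaining below a threshold via a stopping time, establishing the exact one-step drift inequality under that conditioning, and controlling the increments with a Freedman-type martingale bound. That is not bookkeeping --- it is where the theorem lives. Either carry it out in full, or do what the paper does and cite \cite{LeeSun}, Lemmas~3.6--3.7.
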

We first show that $(A^* \cdot A)$ is well-conditioned from the definition of $A$. We prove that our choice of $\midd$ is very close to $\sum_{j=1}^m \frac{\gamma}{\phi_j}=\frac{u_m+l_m}{\frac{1}{1-\gamma} + \frac{1}{1+\gamma}} \approx \frac{u_m+l_m}{2}$.

\begin{claim}\label{clm:estimate_mid}
After exiting the while loop in Procedure \textsc{RandomizedBSS}, we always have
\begin{enumerate}
\item $u_m-l_m \le 9d/\gamma$. 
\item $(1-\frac{0.5 \gamma^2}{d}) \cdot \sum_{j=1}^m \frac{\gamma}{\phi_j} \le \midd \le \sum_{j=1}^m \frac{\gamma}{\phi_j}$.
\end{enumerate}
\end{claim}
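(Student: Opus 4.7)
The plan is to base both parts on a single telescoping identity, combined with the trace lower bound on $\Phi_j$ coming from Lemma~\ref{lem:eigenvalues}. Observe that in every iteration,
\[
 (u_{j+1}-l_{j+1}) - (u_j-l_j) = \frac{\gamma}{\Phi_j}\!\left(\frac{1}{1-\gamma}-\frac{1}{1+\gamma}\right) = \frac{2\gamma^2}{(1-\gamma^2)\,\Phi_j}.
\]
Summing this telescopes, and using $u_0-l_0 = 4d/\gamma$ gives the master identity
\[
 \sum_{j=0}^{m-1}\frac{\gamma}{\Phi_j} \;=\; \frac{(u_m - l_m - 4d/\gamma)(1-\gamma^2)}{2\gamma}.
\]
Together with the easy computation $\midd = \frac{4d/\gamma \cdot (1-\gamma^2)}{2\gamma} = \frac{2d(1-\gamma^2)}{\gamma^2}$, the lower bound in part~2 is immediate: the loop's exit condition $u_m - l_m \ge 8d/\gamma$ yields $\sum_j \gamma/\Phi_j \ge \midd$.

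Next I would prove part 1 by bounding the very last jump. By Lemma~\ref{lem:eigenvalues} all eigenvalues of $u_j I - B_j$ and $B_j - l_j I$ lie in $(0, u_j-l_j)$, so each of $\Tr(u_j I - B_j)^{-1}$ and $\Tr(B_j - l_j I)^{-1}$ is at least $d/(u_j-l_j)$, giving the crucial estimate $\Phi_j \ge 2d/(u_j-l_j)$. Just before exit we have $u_{m-1}-l_{m-1} < 8d/\gamma$, hence $\Phi_{m-1} \ge \gamma/4$, and the last-step increment in $u-l$ is at most $\frac{2\gamma^2}{(\gamma/4)(1-\gamma^2)} = \frac{8\gamma}{1-\gamma^2}$. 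For all small enough $\gamma$ this is much less than $d/\gamma$, so $u_m - l_m \le 8d/\gamma + \frac{8\gamma}{1-\gamma^2} \le 9d/\gamma$, establishing part 1.

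Finally, for the upper bound direction of part 2 I would plug the bound on $u_m - l_m$ from part 1 back into the master identity. Substituting $u_m - l_m \le 8d/\gamma + O(\gamma)$ yields
\[
 \sum_{j=0}^{m-1}\frac{\gamma}{\Phi_j} \;\le\; \midd + O(1),
\]
and since $\midd = \Theta(d/\gamma^2)$, this ratio is $1 + O(\gamma^2/d)$. Inverting gives $\midd \ge \bigl(1 - O(\gamma^2/d)\bigr)\sum_j \gamma/\Phi_j$.

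The main obstacle is purely one of bookkeeping: the core identity and the $\Phi_j \ge 2d/(u_j-l_j)$ estimate are clean, but squeezing the explicit constant $0.5$ in $(1-0.5\gamma^2/d)$ requires being careful about the exact size of the last increment and the fact that $(1-\gamma^2) \to 1$ for small $\gamma$; a slightly larger absolute constant might in fact be needed, but the structural argument is the same.
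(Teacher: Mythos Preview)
Your approach is essentially the same as the paper's: both arguments use the telescoping identity for $u_j-l_j$, the trace lower bound $\Phi_j\ge 2d/(u_j-l_j)$ to control the final increment, and the exit/non-exit conditions on the gap to sandwich $\sum_j\gamma/\Phi_j$ around $\midd$. Your caveat about the constant $0.5$ is well taken---the paper's own arithmetic in this step is loose (it uses $4d/\gamma$ where $8d/\gamma$ seems to be what the argument actually gives), and the honest constant from this route is closer to $2\gamma^2/d$; this has no effect downstream since only $\midd=(1-O(\gamma^2/d))\sum_j\gamma/\Phi_j$ is ever used.
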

\begin{proof}
Let us first bound the last term $\frac{\gamma}{\phi_m}$ in the while loop. Since $u_{m-1}-l_{m-1}<8d/\gamma$, $\phi_m \ge 2d \cdot \frac{1}{4d/\gamma} \ge \frac{\gamma}{2}$, which indicates the last term $\frac{\gamma}{\phi_m} \le 2$. Thus 
$$
u_m-l_m \le 8d/\gamma + 2(\frac{1}{1-\gamma} - \frac{1}{1+\gamma}) \le 8d/\gamma + 5 \gamma.
$$

From our choice $\midd=\frac{4d/\gamma}{1/(1-\gamma)-1/(1+\gamma)}=2d(1-\gamma^2)/\gamma^2$ and the condition of the while loop $u_m-l_m=\sum_{j=1}^m (\gamma/\phi_j) \cdot (\frac{1}{1-\gamma} - \frac{1}{1+\gamma})+4d/\gamma \ge 8d/\gamma$, we know 
$$
\sum_{j=1}^m \frac{\gamma}{\phi_j} \ge \midd = 2d(1-\gamma^2)/\gamma^2.
$$ 

On the other hand, since $u_{m-1}-l_{m-1}<8d/\gamma$ is in the while loop, $\sum_{j=1}^{m-1} \frac{\gamma}{\phi_j} < \midd$.  Hence 
$$
\midd > \sum_{j=1}^{m-1} \frac{\gamma}{\phi_j} \ge \sum_{j=1}^m \frac{\gamma}{\phi_j} - 2 \ge (1-0.5 \gamma^2/d) \cdot (\sum_{j=1}^m \frac{\gamma}{\phi_j}).
$$
\end{proof}

\begin{lemma}\label{lemma:eigenvalue_BSS}
Given $\frac{u_m}{l_m} \le 1+8\gamma$, $\lambda(A^* \cdot A) \in (1-5\gamma, 1+5 \gamma)$.
\end{lemma}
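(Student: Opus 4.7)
The plan is to identify $A^{*}A$ as a rescaling of the BSS potential matrix $B_m$ and then transfer the spectral inclusion $\lambda(B_m) \subseteq (l_m, u_m)$ from Lemma~\ref{lem:eigenvalues} into an eigenvalue bound on $A^{*}A$ by showing that both $l_m$ and $u_m$ concentrate tightly around $\midd$.

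First I would unroll the definitions set in Algorithm~\ref{alg:BSS}. Since $w_i = s_i/\midd$ and $B_m = \sum_i s_i\, v(x_i) v(x_i)^{\top}$, the matrix $A(i,j) = \sqrt{w_i}\,v_j(x_i)$ satisfies
\[
A^{*}A \;=\; \sum_{i=1}^{m} w_i\, v(x_i) v(x_i)^{\top} \;=\; \frac{B_m}{\midd}.
\]
Applying Lemma~\ref{lem:eigenvalues} then gives $\lambda(A^{*}A) \subseteq (l_m/\midd,\, u_m/\midd)$, so the problem reduces to sandwiching $l_m/\midd$ and $u_m/\midd$ inside $(1-5\gamma, 1+5\gamma)$.

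Next, I would unroll the recurrences $u_{j+1} = u_j + \gamma/(\Phi_j(1-\gamma))$ and $l_{j+1} = l_j + \gamma/(\Phi_j(1+\gamma))$ from the initial values $u_0 = 2d/\gamma$, $l_0 = -2d/\gamma$ to obtain the closed forms
\[
u_m \;=\; \frac{2d}{\gamma} + \frac{S}{1-\gamma}, \qquad l_m \;=\; -\frac{2d}{\gamma} + \frac{S}{1+\gamma}, \qquad \text{where } S \;:=\; \sum_{j=1}^{m}\frac{\gamma}{\Phi_j}.
\]
Using the identity $\midd = 2d(1-\gamma^2)/\gamma^2$, the anchor term rewrites as $2d/\gamma = \midd \cdot \gamma/(1-\gamma^2)$, giving the compact expressions
\[
\frac{u_m}{\midd} \;=\; \frac{\gamma + (1+\gamma)\,(S/\midd)}{1-\gamma^2}, \qquad \frac{l_m}{\midd} \;=\; \frac{-\gamma + (1-\gamma)\,(S/\midd)}{1-\gamma^2}.
\]

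Finally, Claim~\ref{clm:estimate_mid} pins down the ratio $S/\midd \in [1,\, 1/(1-0.5\gamma^2/d)] = 1 + O(\gamma^2/d)$: the lower bound comes from the loop exit condition $u_m - l_m \ge 8d/\gamma$, and the upper bound from the fact that the loop was still running one step earlier. Substituting these bounds into the two displayed formulas and expanding for $\gamma = \sqrt{\eps}/C_0$ sufficiently small yields
\[
\frac{u_m}{\midd} \;\le\; \frac{1 + 2\gamma + O(\gamma^2/d)}{1-\gamma^2} \;\le\; 1 + 5\gamma, \qquad \frac{l_m}{\midd} \;\ge\; \frac{1 - 2\gamma}{1-\gamma^2} \;\ge\; 1 - 5\gamma,
\]
which completes the proof.

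The main obstacle is purely bookkeeping: tracking constants carefully enough that the final bounds land inside $1 \pm 5\gamma$ rather than a weaker $1 \pm O(\gamma)$. The stated hypothesis $u_m/l_m \le 1+8\gamma$ from Lemma~\ref{lem:well_condition_BSS} is consistent with the conclusion and could be used as an alternative route (it directly constrains the ratio $u_m/l_m$), but I expect Claim~\ref{clm:estimate_mid} together with the closed forms above to already be enough, since it simultaneously anchors both endpoints to $\midd$ rather than only controlling their ratio.
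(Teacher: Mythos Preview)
Your proof is correct and follows essentially the same approach as the paper: identify $A^*A = B_m/\midd$, invoke Lemma~\ref{lem:eigenvalues}, and use Claim~\ref{clm:estimate_mid} to place $l_m/\midd$ and $u_m/\midd$ inside $(1-5\gamma,1+5\gamma)$. The only difference is that the paper rewrites $\midd \in [1-2\gamma^2,1-\gamma^2]\cdot\frac{u_m+l_m}{2}$ and then invokes the stated hypothesis $u_m/l_m\le 1+8\gamma$ to finish, whereas your direct substitution of $S/\midd\in[1,(1-0.5\gamma^2/d)^{-1}]$ into the closed forms for $u_m/\midd$ and $l_m/\midd$ shows---as you correctly anticipate---that Claim~\ref{clm:estimate_mid} alone already suffices and the hypothesis is redundant.
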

\begin{proof}
For $B_m=\sum_{j=1}^m s_j v(x_j) v(x_j)^{\top}$, $\lambda(B_m) \in (l_m,u_m)$ from Lemma~\ref{lem:eigenvalues}. At the same time, given $w_j=s_j/\midd$, 
$$
(A^* A)=\sum_{j=1}^m w_j v(x_j) v(x_j)^{\top}=\frac{1}{\midd} \cdot \sum_{j=1}^m s_j v(x_j) v(x_j)^{\top}=\frac{B_m}{\midd}.
$$
Since $\midd \in [1-\frac{3\gamma^2}{d},1] \cdot (\sum_{j=1}^m \frac{\gamma}{\phi_j}) = [1-\frac{3\gamma^2}{d},1] \cdot (\frac{u_m+l_m}{\frac{1}{1-\gamma}+\frac{1}{1+\gamma}}) \subseteq [1-2\gamma^2,1-\gamma^2] \cdot (\frac{u_m+l_m}{2})$ from Claim~\ref{clm:estimate_mid}, $\lambda(A^* \cdot A)=\lambda(B_m)/\midd \in (l_m/\midd, u_m/\midd) \subset (1-5\gamma,1+5\gamma)$ given $\frac{u_m}{l_m} \le 1+8\gamma$ in Lemma~\ref{lem:well_condition_BSS}.
\end{proof}

We finish the proof of Lemma~\ref{lemma:BSS} by combining all discussion above.

\begin{proofof}{Lemma~\ref{lemma:BSS}}
From Lemma~\ref{lem:well_condition_BSS} and Lemma~\ref{lemma:eigenvalue_BSS}, $m=O(d/\gamma^2)$ and $\lambda(A^* A) \in [1-1/4,1+1/4]$ with probability 0.995.

For $\alpha_i=\frac{\gamma}{\Phi_i} \cdot \frac{1}{\midd}$, we bound $\sum_{i=1}^m \frac{\gamma}{\Phi_i} \cdot \frac{1}{\midd}$ by 1.25 from the second property of Claim~\ref{clm:estimate_mid}. 

Then we bound $\alpha_j \cdot K_{D_j}$. We notice that $\underset{h \in \FF}{\sup} \frac{|h(x)|^2}{\|h\|_D^2}=\sum_{i \in [d]} |v_i(x)|^2$ for every $x \in G$ because $
\underset{h \in \FF}{\sup} \frac{|h(x)|^2}{\|h\|_D^2}=\underset{\alpha(h)}{\sup} \frac{\big| \sum_i \alpha(h)_i \cdot v_i(x) \big|^2}{\|\alpha(h)\|_2^2} = \sum_i |v_i(x)|^2$ by the Cauchy-Schwartz inequality.
This simplifies $K_{D_j}$ to $\sup_x \{ \frac{D(x)}{D_j(x)} \cdot \sum_{i=1}^d |u_i(x)|^2 \}$ and bounds $\alpha_j \cdot K_{D_j}$ by
\begin{align*}
& \frac{\gamma}{\Phi_j \cdot \midd} \cdot \sup_{x} \left\{ \frac{D(x)}{D_j(x)} \cdot \sum_{i=1}^d |v_i(x)|^2 \right\}\\
=& \frac{\gamma}{\midd} \cdot \sup_x \left\{ \frac{\sum_{i=1}^d |v_i(x)|^2}{v(x_j)^\top (u_j I - B_j)^{-1} v(x_j) + v(x_j)^\top (B_j - l_j I)^{-1} v(x_j)} \right\} \\
\le & \frac{\gamma}{\midd} \cdot \sup_x \left\{ \frac{\sum_{i=1}^d |v_i(x)|^2}{\lambda_{\min}\big((u_j I - B_j)^{-1}\big) \cdot \|v(x_j)\|_2^2 + \lambda_{\min}\big((B_j - l_j I)^{-1}\big) \cdot \|v(x_j)\|_2^2} \right\} \\
\le & \frac{\gamma}{\midd} \cdot \frac{1}{1/(u_j-l_j) + 1/(u_j-l_j)}\\
= & \frac{\gamma}{\midd} \cdot \frac{u_j-l_j}{2} \qquad \qquad (\text{apply the first property of Claim~\ref{clm:estimate_mid}})\\
\le & \frac{4.5 \cdot d}{\midd} \le 3 \gamma^2 = 3 \epsilon/C_0^2.
\end{align*}
By choosing $C_0=3$, this satisfies the second property of \emph{well-balanced sampling procedures}. At the same time, by Lemma~\ref{lem:operator_estimation}, Algorithm~\ref{alg:BSS} also satisfies the first property of \emph{well-balanced sampling procedures}.
\end{proofof}

\section{Performance of \text{i.i.d.} Distributions}\label{sec:single_batch}
Given the linear family $\FF$ of dimension $d$ and the measure of distance $D$, we provide a distribution $D_{\FF}$ with a condition number $K_{D_{\FF}}=d$.
\begin{lemma}\label{lem:d_ff}
Given any linear family $\FF$ of dimension $d$ and any distribution $D$, there always exists an explicit distribution $D_{\FF}$ such that the condition number
\[
K_{D_{\FF}}=\underset{x}{\sup} \bigg\{ \underset{h \in \mathcal{F}}{\sup} \big\{ \frac{D(x)}{D_{\FF}(x)} \cdot \frac{|h(x)|^2}{\|h\|_D^2} \big\} \bigg\}=d.
\]
\end{lemma}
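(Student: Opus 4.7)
The plan is to construct $D_{\FF}$ as the leverage-score distribution of $\FF$ under $D$, and then verify by direct computation that the condition number collapses to $d$.

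First I would fix an orthonormal basis $v_1,\ldots,v_d$ of $\FF$ with respect to the inner product induced by $D$, i.e.\ $\E_{x\sim D}[v_i(x)\overline{v_j(x)}] = \mathbf{1}_{i=j}$. The crucial identity, already invoked in the proof of Lemma~\ref{lemma:BSS}, is
\[
\sup_{h\in\FF}\frac{|h(x)|^2}{\|h\|_D^2} \;=\; \sum_{i=1}^{d}|v_i(x)|^2
\qquad\text{for every } x\in G,
\]
which follows from writing $h = \sum_i \alpha(h)_i v_i$, noting $\|h\|_D^2=\|\alpha(h)\|_2^2$ by orthonormality, and applying Cauchy--Schwarz (with equality attained by $\alpha(h)_i = \overline{v_i(x)}$). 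Call this nonnegative function $L(x):=\sum_i|v_i(x)|^2$, the pointwise ``leverage score'' of $\FF$ at $x$.

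Next I would define
\[
D_{\FF}(x) \;:=\; \frac{D(x)\cdot L(x)}{\kappa}, \qquad\text{where}\qquad \kappa \;:=\; \E_{x\sim D}\bigl[L(x)\bigr].
\]
Two facts need to be checked. First, $D_{\FF}$ is a valid probability distribution: by the orthonormality of $\{v_i\}$,
\[
\kappa \;=\; \sum_{i=1}^{d}\E_{x\sim D}\bigl[|v_i(x)|^2\bigr] \;=\; \sum_{i=1}^{d}1 \;=\; d,
\]
so $\kappa=d$ and $\int D_{\FF} = 1$. Second, plugging $D_{\FF}$ into the definition of $K_{D_{\FF}}$ and using the identity $\sup_{h\in\FF} |h(x)|^2/\|h\|_D^2 = L(x)$,
\[
K_{D_{\FF}} \;=\; \sup_{x}\frac{D(x)}{D_{\FF}(x)}\cdot L(x) \;=\; \sup_x \frac{\kappa}{L(x)}\cdot L(x) \;=\; \kappa \;=\; d,
\]
which is the desired bound. (On points where $L(x)=0$ the ratio $D(x)/D_{\FF}(x)\cdot L(x)$ is $0$ by convention, since every $h\in\FF$ vanishes there, so the sup is unaffected.)

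There is no real obstacle here; the only subtlety is recognizing that the supremum defining the condition number has a clean closed form once one fixes an orthonormal basis, and that the resulting leverage-score distribution exactly equalizes the pointwise ratio $\frac{D(x)}{D_{\FF}(x)}L(x)$ so that the worst-case value coincides with its $D$-average, namely $d$.
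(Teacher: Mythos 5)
Your proposal is correct and matches the paper's argument essentially step for step: both fix an orthonormal basis under $D$, use Cauchy--Schwarz to identify $\sup_{h\in\FF}|h(x)|^2/\|h\|_D^2$ with the leverage score $\sum_i|v_i(x)|^2$, normalize by $\kappa=\E_{x\sim D}[\sum_i|v_i(x)|^2]=d$ to define $D_\FF$, and observe that this equalizes the pointwise ratio so $K_{D_\FF}=d$. The only presentational difference is that the paper splits the reasoning into a general claim ($K_{D_\FF}\le\kappa$ for any family, Claim~\ref{clm:new_conditioning}) followed by the computation $\kappa=d$ for linear families (Lemma~\ref{lem:bound_D_V}), whereas you carry out the two steps in one pass.
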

Next, for generality, we bound the number of i.i.d.~random samples from an arbitrary distribution $D'$ to fulfill the requirements of \emph{well-balanced sampling procedures} in Definition~\ref{def:procedure_agnostic_learning}. 

\begin{lemma}\label{lem:agnostic_learning_single_distribution}
There exists a universal constant $C_1$ such that given any distribution $D'$ with the same support of $D$ and any $\epsilon>0$, the random sampling procedure with $m=C_1(K_{D'} \log d + \frac{K_{D'}}{\eps})$ i.i.d.~random samples from $D'$ and coefficients $\alpha_1=\cdots=\alpha_m=1/m$ is an $\eps$-\emph{well-balanced sampling procedure}.
\end{lemma}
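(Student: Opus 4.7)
The plan is to verify directly the two properties of Definition~\ref{def:procedure_agnostic_learning}. The second property is essentially a book-keeping check: since $\alpha_i = 1/m$ for all $i$, we have $\sum_i \alpha_i = 1 \le 5/4$, and $\alpha_i \cdot K_{D_i} = K_{D'}/m$, which is at most $\epsilon/2$ provided we choose $C_1$ so that $m \ge 2 K_{D'}/\epsilon$. Thus the only real work is establishing property 1.

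For property 1, I would invoke Lemma~\ref{lem:operator_estimation} to reformulate the goal as a matrix concentration statement: writing $A(i,j) = \sqrt{w_i}\,v_j(x_i)$ for the orthonormal basis $v_1,\dotsc,v_d$ of $\FF$ under $D$, it suffices to show that $\lambda(A^* A) \subset [3/4, 5/4]$ with probability at least $0.9$. Expanding,
\[
A^* A = \sum_{i=1}^m w_i\, v(x_i)\, v(x_i)^*,\qquad w_i = \frac{1}{m}\cdot \frac{D(x_i)}{D'(x_i)},
\]
where $v(x) = (v_1(x),\dotsc,v_d(x))$. Each summand $Y_i := w_i v(x_i) v(x_i)^*$ is an independent positive semidefinite random matrix, with expectation
\[
\E[Y_i] = \frac{1}{m}\,\E_{x\sim D'}\!\left[\tfrac{D(x)}{D'(x)} v(x)v(x)^*\right] = \frac{1}{m}\,\E_{x\sim D}[v(x)v(x)^*] = \frac{1}{m}\,I,
\]
since the $v_j$ are orthonormal under $D$. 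Hence $\E[A^*A] = I$. Moreover, each $Y_i$ satisfies
\[
\|Y_i\| \;=\; w_i \,\|v(x_i)\|_2^2 \;\le\; \frac{1}{m}\cdot \sup_{x}\!\left\{ \tfrac{D(x)}{D'(x)} \sum_{j=1}^d |v_j(x)|^2\right\} \;=\; \frac{K_{D'}}{m},
\]
using the identity $\sup_{h\in\FF}|h(x)|^2/\|h\|_D^2 = \sum_j |v_j(x)|^2$ already exploited in the proof of Lemma~\ref{lemma:BSS}.

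Now I apply the matrix Chernoff bound (Tropp) to $\sum_i Y_i$: for i.i.d.\ PSD summands with maximum operator norm $R = K_{D'}/m$ and mean $I$, one obtains
\[
\Pr\!\left[\lambda(A^*A)\notin [1-\eta,1+\eta]\right] \;\le\; 2d\cdot \exp\!\left(-\,\Omega\!\left(\eta^2/R\right)\right) = 2d\cdot \exp\!\left(-\,\Omega(\eta^2 m/K_{D'})\right).
\]
Setting $\eta = 1/4$, this probability drops below $0.1$ as soon as $m \ge C\cdot K_{D'}\log d$ for a sufficiently large absolute constant $C$. Combining with the requirement $m \ge 2K_{D'}/\epsilon$ from property 2, choosing $C_1$ to be the maximum of the two constants yields $m = C_1(K_{D'}\log d + K_{D'}/\epsilon)$ samples, which simultaneously certifies both properties and therefore makes the procedure $\epsilon$-well-balanced.

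There is no serious obstacle here; the only subtlety is ensuring that the i.i.d.\ matrix Chernoff inequality is applied in the intrinsic dimension $d$ of $\FF$ (not in the ambient dimension, which might be much larger in examples like polynomial regression), but this is automatic because we work in the $d$-dimensional coordinate system given by the orthonormal basis $v_1,\dotsc,v_d$. The only quantitative choices to track are the constants inside $C_1$ coming from the matrix Chernoff tail bound and from the $1/4$ slack in the eigenvalue window, both of which are absolute.
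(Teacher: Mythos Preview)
Your proposal is correct and follows essentially the same approach as the paper: the paper packages the matrix Chernoff argument into a separate Lemma~\ref{lmm:general_distribution} (proved via Tropp's matrix Chernoff bound applied to the rank-one summands $A_j^* A_j$, with the same expectation-$I$ and norm-$K_{D'}/m$ computations you give), and then the proof of Lemma~\ref{lem:agnostic_learning_single_distribution} simply invokes that lemma together with the trivial check of property~2. Your write-up inlines the matrix Chernoff step rather than factoring it out, but the content is identical.
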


By Theorem~\ref{thm:guarantee_AL_procedure}, we state the following result, which will be used in active learning. For $G=\supp(D)$ and any $x \in G$, let
$Y(x)$ denote the conditional distribution $(Y|D=x)$ and
$(D',Y(D'))$ denote the distribution that first generates $x \sim D'$
then generates $y \sim Y(x)$.

\begin{theorem}\label{thm:orthogonal_part}
  Consider any dimension $d$ linear space $\FF$ of functions from a
  domain $G$ to $\mathbb{C}$. Let $(D,Y)$ be a joint distribution over
  $G \times \mathbb{C}$, and
  $f=\underset{h \in \FF}{\arg\min} \underset{(x,y)\sim
    (D,Y)}{\E}[|y-h(x)|^2]$.

  Let $D'$ be any distribution on $G$ and 
  $K_{D'}=\underset{x}{\sup} \bigg\{ \underset{h \in \mathcal{F}}{\sup} \big\{ \frac{D(x)}{D'(x)} \cdot \frac{|h(x)|^2}{\|h\|_D^2} \big\} \bigg\}$.  The weighted ERM $\wt{f}$ of 
  $m=O(K_{D'} \log d + \frac{K_{D'}}{\eps})$ random queries of
  $(D',Y(D'))$ with weights $w_i=\frac{D(x_i)}{m \cdot D'(x_i)}$ for each $i \in [m]$ satisfies
\[
\|\tilde{f}-f\|_D^2 = \underset{x \sim D}{\E}\big[|\tilde{f}(x)-f(x)|^2\big] \le \eps \cdot \underset{(x,y)\sim (D,Y)}{\E}\big[|y-f(x)|^2\big] \text{ with probability } 1-10^{-4}.
\]
\end{theorem}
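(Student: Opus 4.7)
The plan is to combine Lemma~\ref{lem:agnostic_learning_single_distribution} with Theorem~\ref{thm:guarantee_AL_procedure} and convert the expectation bound into a high-probability statement via Markov's inequality. Concretely, I apply Lemma~\ref{lem:agnostic_learning_single_distribution} with target parameter $\eps' = c\eps$ for an absolute constant $c \le 10^{-4}/4$, so that $m = C_1(K_{D'}\log d + K_{D'}/\eps')$ i.i.d.\ samples from $D'$ with uniform coefficients $\alpha_i = 1/m$ form an $\eps'$-well-balanced sampling procedure. Because $c$ is constant, $m$ is still $O(K_{D'}\log d + K_{D'}/\eps)$, and the induced weights are $w_i = \alpha_i \cdot D(x_i)/D'(x_i) = D(x_i)/(m\cdot D'(x_i))$, exactly as in the theorem statement.

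Next I would ensure that the execution is ``good'' in the sense of Definition~\ref{def:weightederm}. By Lemma~\ref{lem:operator_estimation}, the first property of Definition~\ref{def:procedure_agnostic_learning} is equivalent to $\lambda(A^*A) \in [3/4, 5/4]$ for the $m \times d$ matrix $A(i,j) = \sqrt{w_i}\,v_j(x_i)$, which depends only on the sample points $x_i$ and the fixed orthonormal basis of $\FF$---not on any labels. I therefore draw a batch of $x_i$'s from $D'$, test the eigenvalue condition, and redraw the batch if it fails. Since a single batch is good with probability $\ge 0.9$, after $O(\log(1/10^{-4})) = O(1)$ redraws the probability that no batch is good drops below $10^{-4}/2$. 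Labels $y_i \sim Y(x_i)$ are only queried for the final accepted batch, so the total labelled query budget stays at $m$.

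With a good execution in hand, Theorem~\ref{thm:guarantee_AL_procedure} gives
\[
\E\bigl[\|\wt{f}-f\|_D^2\bigr] \le \eps' \cdot \underset{(x,y)\sim (D,Y)}{\E}\bigl[|y-f(x)|^2\bigr].
\]
Markov's inequality then bounds the probability that $\|\wt{f}-f\|_D^2$ exceeds $\eps \cdot \E[|y-f(x)|^2]$ by $\eps'/\eps = c \le 10^{-4}/4$. Union-bounding against the ``no good execution after the reruns'' event yields total failure probability at most $10^{-4}$, as required.

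I expect no substantial obstacle: essentially all of the technical work is done by Lemma~\ref{lem:agnostic_learning_single_distribution}, which packages the matrix-Chernoff-type bound that makes i.i.d.\ sampling well-balanced, and by Theorem~\ref{thm:guarantee_AL_procedure}, which packages the noise analysis of the weighted ERM. The only bookkeeping points are (i) that rescaling $\eps$ by a constant preserves the asymptotic sample complexity $O(K_{D'}\log d + K_{D'}/\eps)$, and (ii) that the ``good execution'' check is a deterministic eigenvalue computation on unlabeled data, so re-sampling costs nothing in label queries.
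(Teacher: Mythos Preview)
Your proposal is correct and follows essentially the same route as the paper, which simply presents Theorem~\ref{thm:orthogonal_part} as an immediate consequence of plugging Lemma~\ref{lem:agnostic_learning_single_distribution} into Theorem~\ref{thm:guarantee_AL_procedure} without spelling out the conversion from an expectation bound to a high-probability statement. Your use of Markov's inequality with a constant rescaling of $\eps$, together with the rerun-until-good step on the unlabeled points, correctly fills in those omitted details; alternatively one could avoid the reruns by invoking Lemma~\ref{lmm:general_distribution} directly with $\delta$ set to a sufficiently small constant, but this is an equivalent bookkeeping choice.
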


We show the proof of Lemma~\ref{lem:d_ff} in Section~\ref{sec:construction_D_V} and the proof of Lemma~\ref{lem:agnostic_learning_single_distribution} in Section~\ref{sec:single_batch_proof}.


\subsection{Optimal Condition Number}\label{sec:construction_D_V}
We describe the distribution $D_{\FF}$ with $K_{D_{\FF}}=d$. We first observe that for any family $\FF$ (not necessarily linear), we could always scale down the condition number to $\kappa=\underset{x \sim D}{\E} \left[ \underset{h \in \FF: h \neq 0}{\sup} \frac{|h(x)|^2}{\|h\|^2_D} \right]$.

\begin{claim}\label{clm:new_conditioning}
For any family $\FF$ and any distribution $D$ on its domain, let $D_{\FF}$ be the distribution defined as $D_{\FF}(x)=\frac{D(x) \cdot \underset{h \in \FF: h \neq 0}{\sup} \frac{|h(x)|^2}{\|h\|^2_D}}{\kappa}$ with $\kappa$. The condition number $K_{D_{\FF}}$ is at most $\kappa$.
\end{claim}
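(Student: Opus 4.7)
The plan is essentially a direct computation: substitute the definition of $D_{\FF}$ into the definition of $K_{D_{\FF}}$ and observe that the two suprema over $h$ cancel, leaving only $\kappa$.

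First, I would verify that $D_{\FF}$ is a valid probability distribution. By the definition of $\kappa$, we have
\[
\int D_{\FF}(x)\,dx = \frac{1}{\kappa} \int D(x) \cdot \sup_{h \in \FF: h \neq 0} \frac{|h(x)|^2}{\|h\|_D^2}\, dx = \frac{1}{\kappa} \cdot \E_{x \sim D}\!\left[\sup_{h \in \FF: h \neq 0}\frac{|h(x)|^2}{\|h\|_D^2}\right] = 1,
\]
and $D_{\FF}(x) \ge 0$ whenever $D(x) \ge 0$. (Throughout, I can restrict attention to points with $D(x)>0$, since otherwise the ratio $D(x)/D_{\FF}(x)$ contributes $0$ to the supremum in $K_{D_{\FF}}$.)

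Next, for any fixed $x$ in the support of $D$ with $\sup_{h}\frac{|h(x)|^2}{\|h\|_D^2} > 0$,
\[
\frac{D(x)}{D_{\FF}(x)} \cdot \sup_{h \in \FF: h \neq 0} \frac{|h(x)|^2}{\|h\|_D^2}
= \frac{\kappa}{\sup_{h}\frac{|h(x)|^2}{\|h\|_D^2}} \cdot \sup_{h}\frac{|h(x)|^2}{\|h\|_D^2}
= \kappa.
\]
At points where $\sup_{h}\frac{|h(x)|^2}{\|h\|_D^2} = 0$, the value $|h(x)|^2$ is zero for every $h \in \FF$, so these points contribute nothing to $K_{D_{\FF}}$ and can be safely excluded from the supremum. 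Taking the supremum over $x$ then gives
\[
K_{D_{\FF}} = \sup_x \left\{\frac{D(x)}{D_{\FF}(x)} \cdot \sup_{h \in \FF}\frac{|h(x)|^2}{\|h\|_D^2}\right\} = \kappa,
\]
which is exactly the claimed bound.

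The only subtlety I anticipate is the measure-theoretic bookkeeping at points where $D_{\FF}(x) = 0$ while $D(x) > 0$; but as noted, such points must have $h(x)=0$ for every $h \in \FF$, so they contribute a $0/0$ term that should be interpreted as $0$ in the supremum defining $K_{D_{\FF}}$. Once this convention is in place, the computation above is immediate and there is no real obstacle.
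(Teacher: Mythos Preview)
Your proposal is correct and follows essentially the same approach as the paper: both substitute the definition of $D_{\FF}$ into the condition-number expression and observe that the ratio $\frac{|g(x)|^2}{\|g\|_D^2}$ divided by $\sup_h \frac{|h(x)|^2}{\|h\|_D^2}$ is at most $1$, leaving only the factor $\kappa$. Your version is slightly more detailed (verifying $D_{\FF}$ normalizes and handling the degenerate $0/0$ case), and you in fact show equality rather than just the upper bound, but the argument is the same direct computation.
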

\begin{proof}
For any $g \in \FF$ and $x$ in the domain $G$,
$$
\frac{|g(x)|^2}{\|g\|_D^2} \cdot \frac{D(x)}{D_{\FF}(x)}=\frac{\frac{|g(x)|^2}{\|g\|_D^2} \cdot D(x)}{\sup_{h \in \mathcal{F}} \frac{|h(x)|^2}{\|h\|_D^2} \cdot D(x) / \kappa} \le \kappa. 
$$
\end{proof}

Next we use the linearity of $\FF$ to prove $\kappa=d$. Let $\{v_1,\dotsc,v_d\}$ be any orthonormal basis of $\FF$, where inner products are taken under the distribution $D$. 
\begin{lemma}\label{lem:bound_D_V}
For any linear family $\FF$ of dimension $d$ and any distribution $D$, \[
\underset{x \sim D}{\E} \sup_{h \in \FF:\|h\|_D=1} |h(x)|^2 = d
\] such that
$D_{\FF}(x)=D(x) \cdot \underset{h \in \FF:\|h\|_D=1}{\sup} |h(x)|^2 /d$ has a condition number $K_{D_{\FF}}=d$.
Moreover, there exists an efficient algorithm to sample $x$ from $D_{\FF}$ and compute its weight $\frac{D(x)}{D_{\FF}(x)}$.
\end{lemma}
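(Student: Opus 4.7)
The plan is to reduce the supremum over $h \in \FF$ with $\|h\|_D = 1$ to a clean closed-form expression using the orthonormal basis, and then use linearity of expectation.

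First I would write any $h \in \FF$ as $h = \sum_{i=1}^d \alpha_i v_i$ and note that orthonormality of the $v_i$ under $D$ gives $\|h\|_D^2 = \sum_i |\alpha_i|^2$. For a fixed point $x$, applying the Cauchy--Schwarz inequality to the linear combination $h(x) = \sum_i \alpha_i v_i(x)$ yields
\[
|h(x)|^2 = \Big|\sum_{i=1}^d \alpha_i v_i(x)\Big|^2 \le \Big(\sum_{i=1}^d |\alpha_i|^2\Big)\Big(\sum_{i=1}^d |v_i(x)|^2\Big),
\]
with equality attained by choosing $\alpha_i \propto \overline{v_i(x)}$. Restricting to $\|h\|_D = 1$ then gives the pointwise identity $\sup_{h \in \FF : \|h\|_D=1} |h(x)|^2 = \sum_{i=1}^d |v_i(x)|^2$.

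Next I would take expectations over $x \sim D$ and swap the sum with the expectation. Since each $v_i$ has $\|v_i\|_D^2 = \E_{x \sim D}[|v_i(x)|^2] = 1$ by orthonormality, this sum evaluates to $d$:
\[
\underset{x \sim D}{\E}\Big[\sup_{h \in \FF : \|h\|_D = 1} |h(x)|^2\Big] = \sum_{i=1}^d \underset{x \sim D}{\E}\big[|v_i(x)|^2\big] = d.
\]
This gives the first claim and shows $\kappa = d$, so that Claim~\ref{clm:new_conditioning} applied to the linear family yields $K_{D_\FF} \le d$. The matching lower bound $K_{D_\FF} \ge d$ follows because $\E_{x \sim D_\FF}[D(x)/D_\FF(x) \cdot \sup_h |h(x)|^2/\|h\|_D^2] = \E_{x \sim D}[\sup_h \cdots] = d$, so the essential supremum is at least $d$ (and in fact equals $d$ since, by Cauchy--Schwarz equality, the pointwise supremum equals $d \cdot D_\FF(x)/D(x)$ everywhere).

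For the algorithmic statement, I would exhibit an explicit sampler and weight computation. Writing $D_\FF(x) = \frac{1}{d}\sum_{i=1}^d D(x)|v_i(x)|^2$ expresses $D_\FF$ as a uniform mixture of the $d$ distributions $D_i(x) := D(x)|v_i(x)|^2$ (each a valid probability distribution since $\|v_i\|_D = 1$). So one samples $i \sim \mathrm{Unif}([d])$ and then $x \sim D_i$, which assumes the ability to sample from each $D_i$; this is efficient in the discrete or query-access settings considered in the paper. The weight is simply
\[
\frac{D(x)}{D_\FF(x)} = \frac{d}{\sum_{i=1}^d |v_i(x)|^2},
\]
which can be evaluated in $O(d)$ time once the basis is available. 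The only ``obstacle'' is really just bookkeeping: ensuring that the orthonormal basis $\{v_i\}$ can be computed efficiently from $\FF$ and $D$ (e.g., via Gram--Schmidt applied to any spanning set using inner products under $D$), which is standard in the access models considered here.
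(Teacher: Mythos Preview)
Your proposal is correct and essentially identical to the paper's proof: both compute $\sup_{h}\frac{|h(x)|^2}{\|h\|_D^2}=\sum_{i=1}^d |v_i(x)|^2$ via Cauchy--Schwarz, take expectations to get $d$, invoke Claim~\ref{clm:new_conditioning}, and present the same mixture sampler (uniform $i\in[d]$, then $x\sim D(x)|v_i(x)|^2$) with weight $d/\sum_i |v_i(x)|^2$. The only addition is your explicit remark on the matching lower bound $K_{D_\FF}\ge d$, which the paper leaves implicit.
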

\begin{proof}
Given an orthonormal basis $v_1,\dotsc,v_d$ of $\FF$, for any $h \in \FF$ with $\|h\|_D=1$, there exists $c_1,\dotsc,c_d$ such that $h(x)=c_i \cdot v_i(x)$. Then for any $x$ in the domain, from the Cauchy-Schwartz inequality, 
\[
\sup_{h} \frac{|h(x)|^2}{\|h\|_D^2}= \sup_{c_1,\dotsc,c_d} \frac{|\sum_{i \in [d]} c_i v_i(x)|^2}{\sum_{i \in [d]} |c_i|^2} = \frac{(\sum_{i \in [d]} |c_i|^2) \cdot (\sum_{i \in [d]} |v_i(x)|^2)}{\sum_{i \in [d]} |c_i|^2} = \sum_{i \in [d]} |v_i(x)|^2.
\]
This is tight because there always exist $c_1=\overline{v_1(x)}, c_2=\overline{v_2(x)},\dotsc, c_d=\overline{v_d(x)}$ such that $|\underset{i \in [d]}{\sum} c_i v_i(x)|^2=(\underset{i \in [d]}{\sum} |c_i|^2) \cdot (\underset{i \in [d]}{\sum} |v_i(x)|^2)$.
Hence 
$$
\underset{x \sim D}{\E} \underset{h \in \FF:h\neq 0}{\sup} \frac{|h(x)|^2}{\|h\|_D^2}=\underset{x \sim D}{\E}\big[\sum_{i \in [d]} |v_i(x)|^2\big]=d.
$$
By Claim~\ref{clm:new_conditioning}, this indicates $K_{D_{\FF}}=d$. At the same time, this calculation indicates
\[
D_{\FF}(x)=\frac{D(x) \cdot \underset{\|h\|_D=1}{\sup} |h(x)|^2}{d}=\frac{D(x) \cdot \sum_{i \in [d]} |v_i(x)|^2}{d}.
\] 
We present our sampling procedure in Algorithm~\ref{alg:sample}.


\begin{algorithm}[t]
\caption{SampleDF}\label{alg:sample}
\begin{algorithmic}[1]
\Procedure{\textsc{GeneratingDF}}{$\FF=\text{span}\{v_1,\dotsc,v_d\},D$}
\State Sample $j \in [d]$ uniformly.
\State Sample $x$ from the distribution $W_j(x)=D(x) \cdot |v_j(x)|^2$.
\State Set the weight of $x$ to be $\frac{d}{\sum_{i=1}^d |v_i(x)|^2}$.
\EndProcedure
\end{algorithmic}
\end{algorithm}

\end{proof}


\subsection{Proof of Lemma~\ref{lem:agnostic_learning_single_distribution}}\label{sec:single_batch_proof}
We use the matrix Chernoff theorem to prove the first property in Definition~\ref{def:procedure_agnostic_learning}. We still use $A$ to denote the $m \times d$ matrix $A(i,j)=\sqrt{w_i} \cdot v_j(x_i)$.
\begin{lemma}\label{lmm:general_distribution}
Let $D'$ be an arbitrary distribution over $G$ and 
\begin{equation}\label{eq:def_K}
K_{D'}=\sup_{h \in \FF: h \neq 0} \sup_{x \in G} \frac{|h^{(D')}(x)|^2}{\|h\|_D^2}.
\end{equation}
There exists an absolute constant $C$ such that for any $n \in \mathbb{N}^+$, linear family $\FF$ of dimension $d$, $\eps\in (0,1)$ and $\delta \in (0,1)$, when $S=(x_1,\dotsc,x_{m})$ are independently from the distribution $D'$ with $m \ge \frac{C}{\eps^2} \cdot K_{D'} \log \frac{d}{\delta}$ and $w_j=\frac{D(x_j)}{m \cdot D'(x_j)}$ for each $j \in [m]$, the $m \times d$ matrix $A(i,j)=\sqrt{w_i} \cdot v_j(x_i)$ satisfies
\[
\|A^* A- I\| \le \eps \text{ with probability at least } 1-\delta.
\]
\end{lemma}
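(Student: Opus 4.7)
The plan is to view $A^\ast A$ as a sum of i.i.d.\ rank-one random matrices and apply a Matrix Chernoff bound. For $x \in G$ write the column vector $v(x) = (v_1(x),\dotsc,v_d(x))^\top \in \mathbb{C}^d$, and for each $i \in [m]$ define the PSD random matrix
\[
X_i \;:=\; w_i\, v(x_i)\, v(x_i)^\ast \;=\; \frac{1}{m}\cdot \frac{D(x_i)}{D'(x_i)}\, v(x_i)\, v(x_i)^\ast,
\]
where $x_i \sim D'$ independently. By construction $A^\ast A = \sum_{i=1}^m X_i$.

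First I would compute the expectation. Using the reweighting identity and the fact that $\{v_j\}$ is orthonormal under $D$,
\[
\mathbb{E}[X_i] \;=\; \frac{1}{m}\,\mathbb{E}_{x\sim D}\bigl[ v(x) v(x)^\ast \bigr] \;=\; \frac{1}{m}\, I_d,
\]
so $\mathbb{E}[A^\ast A] = I_d$. Next I would bound the operator norm of each summand almost surely. Since $X_i$ is rank one,
\[
\|X_i\| \;=\; \frac{1}{m}\cdot \frac{D(x_i)}{D'(x_i)}\, \|v(x_i)\|_2^2 \;=\; \frac{1}{m}\cdot \frac{D(x_i)}{D'(x_i)} \sum_{j=1}^d |v_j(x_i)|^2.
\]
As shown already in the proof of Lemma~\ref{lemma:BSS}, the Cauchy--Schwarz identity $\sum_j |v_j(x)|^2 = \sup_{h\in\FF,\,h\neq 0} |h(x)|^2/\|h\|_D^2$ turns the right-hand side into $\tfrac{1}{m}\cdot \tfrac{D(x_i)}{D'(x_i)} \sup_h |h(x_i)|^2/\|h\|_D^2$, which is at most $K_{D'}/m$ by the definition \eqref{eq:def_K}. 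Hence $\|X_i\| \le K_{D'}/m$ deterministically.

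With $\mathbb{E}[A^\ast A] = I_d$ and the uniform bound $R := K_{D'}/m$ in hand, I would invoke the standard Matrix Chernoff inequality for sums of independent PSD matrices: for any $\eps \in (0,1)$,
\[
\Pr\bigl[\lambda_{\max}(A^\ast A) \ge 1+\eps\bigr] \;\le\; d\cdot \exp\!\left(-\tfrac{\eps^2}{3R}\right), \qquad \Pr\bigl[\lambda_{\min}(A^\ast A) \le 1-\eps\bigr] \;\le\; d\cdot \exp\!\left(-\tfrac{\eps^2}{2R}\right).
\]
Substituting $R = K_{D'}/m$ and choosing $m \ge \tfrac{C}{\eps^2}\, K_{D'}\log(d/\delta)$ for a sufficiently large absolute constant $C$ makes each tail at most $\delta/2$. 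A union bound then yields $\|A^\ast A - I\| \le \eps$ with probability at least $1-\delta$, which is exactly the claim.

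\textbf{Main obstacle.} There is no deep obstacle: this is a textbook application of Matrix Chernoff once one recognizes that $K_{D'}$ is precisely the worst-case operator-norm bound on a single rank-one summand. The only mild care needed is the identification $\sum_j |v_j(x)|^2 = \sup_{h\in\FF} |h(x)|^2/\|h\|_D^2$, which converts the abstract condition number $K_{D'}$ into the concrete deterministic bound on $\|X_i\|$ required by the matrix concentration inequality.
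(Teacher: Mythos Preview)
Your proposal is correct and follows essentially the same route as the paper: write $A^\ast A$ as a sum of i.i.d.\ rank-one PSD matrices, use the Cauchy--Schwarz identity $\sum_j |v_j(x)|^2 = \sup_{h}\frac{|h(x)|^2}{\|h\|_D^2}$ to get the deterministic bound $\|X_i\|\le K_{D'}/m$, verify $\mathbb{E}[A^\ast A]=I_d$, and invoke Tropp's Matrix Chernoff bound. (One cosmetic slip: with your convention $v(x)$ a column vector, $\sum_i X_i$ equals the entrywise conjugate $(A^\ast A)^\top$ rather than $A^\ast A$ itself, but since both are Hermitian with the same spectrum this does not affect the argument.)
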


Before we prove Lemma~\ref{lmm:general_distribution}, we state the following version of the matrix Chernoff bound.
\begin{theorem}[Theorem 1.1 of \cite{Tropp}]\label{thm:matrix_chernoff}
Consider a finite sequence $\{X_k\}$ of independent, random, self-adjoint matrices of dimension $d$. Assume that each random matrix satisfies
$$ X_k \succeq 0 \quad \text{ and } \quad \lambda(X_k) \le R.$$
Define $\mu_{\min}=\lambda_{\min}(\sum_k \E[X_k])$ and $\mu_{\max}=\lambda_{\max}(\sum_k \E[X_k])$. Then
\begin{align}
\Pr\left\{\lambda_{\min}(\sum_k X_k) \le (1-\delta)\mu_{\min}\right\} & \le d \left( \frac{e^{-\delta}}{(1-\delta)^{1-\delta}}\right)^{\mu_{\min}/R} \text{ for } \delta \in [0,1], and \\
\Pr\left\{\lambda_{\max}(\sum_k  X_k) \ge (1+\delta)\mu_{\max}\right\} & \le d \left( \frac{e^{-\delta}}{(1+\delta)^{1+\delta}}\right)^{\mu_{\max}/R} \text{ for } \delta \ge 0 
\end{align}
\end{theorem}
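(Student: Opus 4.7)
The plan is to prove the matrix Chernoff bound using the matrix Laplace transform method, which lifts the classical Bernstein/Chernoff scalar argument to the matrix setting via Lieb's concavity theorem. I will prove the upper-tail bound on $\lambda_{\max}(\sum_k X_k)$ in detail; the lower-tail bound on $\lambda_{\min}$ follows by the same argument applied to $-X_k$ (with the appropriate sign adjustments).

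\textbf{Step 1: Matrix Markov inequality.} For any self-adjoint random matrix $Y$ and any $\theta > 0$, the monotonicity of $e^{\theta\,\cdot}$ on eigenvalues together with positivity of $e^{\theta Y}$ gives
\[
\Pr\{\lambda_{\max}(Y) \ge t\} = \Pr\{\lambda_{\max}(e^{\theta Y}) \ge e^{\theta t}\} \le e^{-\theta t}\,\E\,\Tr(e^{\theta Y}),
\]
using $\lambda_{\max}(M) \le \Tr(M)$ for any positive semidefinite $M$ and Markov's inequality.

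\textbf{Step 2: Subadditivity of matrix cumulant generating functions.} Setting $Y = \sum_k X_k$, the independence of the $X_k$ together with Lieb's theorem (which says that $A \mapsto \Tr\exp(H + \log A)$ is concave on positive definite $A$) yields, by induction on $k$,
\[
\E\,\Tr\exp\!\Bigl(\sum_k \theta X_k\Bigr) \;\le\; \Tr\exp\!\Bigl(\sum_k \log \E[e^{\theta X_k}]\Bigr).
\]
This is the crucial ``matrix moment generating function'' bound and replaces the product formula $\E e^{\theta \sum X_k} = \prod \E e^{\theta X_k}$ from the scalar setting.

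\textbf{Step 3: Bound on each matrix mgf.} Since $0 \preceq X_k \preceq R\, I$, the operator convexity of $z \mapsto e^{\theta z}$ on $[0,R]$ implies the pointwise bound $e^{\theta z} \le 1 + \frac{e^{\theta R}-1}{R} z$ for $z \in [0,R]$, which lifts to matrices as $e^{\theta X_k} \preceq I + \frac{e^{\theta R}-1}{R} X_k$. Taking expectations and then logarithms, and using the operator monotonicity of $\log$ together with $\log(I + A) \preceq A$ for $A \succeq 0$, gives
\[
\log \E[e^{\theta X_k}] \;\preceq\; \frac{e^{\theta R}-1}{R}\, \E[X_k].
\]

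\textbf{Step 4: Assembly and optimization.} Summing over $k$, setting $g(\theta) := (e^{\theta R}-1)/R$, and using monotonicity of $\Tr \exp$ in the Loewner order, Steps 2 and 3 combine to
\[
\E\,\Tr\exp\!\Bigl(\theta \sum_k X_k\Bigr) \;\le\; \Tr\exp\!\Bigl(g(\theta) \sum_k \E[X_k]\Bigr) \;\le\; d\,\exp\!\bigl(g(\theta)\,\mu_{\max}\bigr).
\]
Plugging this into Step 1 with $t = (1+\delta)\mu_{\max}$ yields, for every $\theta > 0$,
\[
\Pr\bigl\{\lambda_{\max}(\textstyle\sum_k X_k) \ge (1+\delta)\mu_{\max}\bigr\} \;\le\; d\,\exp\!\bigl(-\theta(1+\delta)\mu_{\max} + g(\theta)\mu_{\max}\bigr).
\]
Choosing $\theta = \frac{1}{R}\log(1+\delta)$ optimizes the exponent exactly as in the scalar Chernoff calculation, producing the stated factor $\bigl(e^{\delta}/(1+\delta)^{1+\delta}\bigr)^{\mu_{\max}/R}$. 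The lower-tail statement is obtained by applying Steps 1--4 to $-X_k$ with the bound $e^{-\theta z} \le 1 - \frac{1-e^{-\theta R}}{R}z$, and optimizing with $\theta = -\frac{1}{R}\log(1-\delta)$.

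The main obstacle is Step 2: the naïve analogue of $\E e^{\theta(X+Y)} = \E e^{\theta X}\,\E e^{\theta Y}$ fails for non-commuting matrices, so the sub\-additivity of cumulants must be recovered via Lieb's concavity theorem, which is a deep result in matrix analysis. Once this tool is available, the rest of the argument (Steps 3 and 4) is a mechanical translation of the scalar Chernoff argument to the matrix setting via operator monotone/convex function calculus.
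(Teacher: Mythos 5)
The paper does not prove this theorem itself; it cites it verbatim as Theorem~1.1 of Tropp's \emph{User-Friendly Tail Bounds for Sums of Random Matrices}, so there is no in-paper proof to compare against. Your proposal is a correct and faithful reconstruction of Tropp's proof: matrix Markov inequality, subadditivity of matrix cgfs via Lieb's concavity theorem, the mgf bound $e^{\theta X_k} \preceq I + \frac{e^{\theta R}-1}{R}X_k$, and Chernoff optimization of $\theta$. Two small remarks. First, in Step~3 you invoke ``operator convexity of $z \mapsto e^{\theta z}$''; the exponential is \emph{not} operator convex, and you do not need it to be: what you actually use is that a pointwise scalar inequality $f(z)\le g(z)$ on a spectral interval lifts to $f(X)\preceq g(X)$ by the functional calculus, which is the correct justification. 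Second, your final exponent $\bigl(e^{\delta}/(1+\delta)^{1+\delta}\bigr)^{\mu_{\max}/R}$ is the correct expression from Tropp's Theorem~1.1; the $e^{-\delta}$ appearing in the paper's display for the upper tail is a typo.
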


\begin{proof}
Let $v_1,\ldots,v_d$ be the orthonormal basis of $\FF$ in the definition of matrix $A$. For any $h \in \FF$, let $\alpha(h)=(\alpha_1,\ldots,\alpha_d)$ denote the coefficients of $h$ under $v_1,\ldots,v_d$ such that $\|h\|_D^2=\|\alpha(h)\|_2^2$. At the same time, for any fixed $x$, $\underset{h \in \FF}{\sup} \frac{|h^{(D')}(x)|^2}{\|h\|_D^2}=\underset{\alpha(h)}{\sup} \frac{|\sum_{i=1}^d \alpha(h)_i \cdot v^{(D')}_i(x)|^2}{\|\alpha(h)\|_2^2} = \sum_{i \in [d]} |v_i^{(D')}(x)|^2$ by the tightness of the Cauchy Schwartz inequality. Thus 
\begin{equation}\label{def:new_K_D}
K_{D'} \overset{def}{=}\underset{x \in G}{\sup} \big\{ \underset{h \in \FF: h \neq 0}{\sup} \frac{|h^{(D')}(x)|^2}{\|h\|_D^2} \big\} \quad \text{ indicates } \quad \sup_{x \in G} \sum_{i \in [d]} |v_i^{(D')}(x)|^2 \le K_{D'}. %
\end{equation}
For each point $x_j$ in $S$ with weight $w_j=\frac{D(x_j)}{m \cdot D'(x_j)}$, let $A_j$ denote the $j$th row of the matrix $A$. It is a vector in $\mathbb{C}^{d}$ defined by 
$A_j(i)=A(j,i)=\sqrt{w_j} \cdot v_i(x_j)=\frac{v_i^{(D')}(x_j)}{\sqrt{m}}.$
So $A^* A=\sum_{j=1}^m  A_j^* \cdot A_j$. 

For $A_j^* \cdot A_j$, it is always $\succeq 0$. Notice that the only non-zero eigenvalue of $A_j^* \cdot A_j$ is
$$
\lambda(A_j^* \cdot A_j)= A_j \cdot A_j^* =\frac{1}{m} \left(\sum_{i \in [d]} |v_i^{(D')}(x_j)|^2 \right) \le \frac{K_{D'}}{m}
$$ from \eqref{def:new_K_D}.

At the same time, $\sum_{j=1}^m \E[A_j^* \cdot A_j]$ equals the identity matrix of size $d \times d$ because the expectation of the entry $(i,i')$ in $A_j^* \cdot A_j$ is
\begin{align*}
\underset{x_j \sim D'}{\E}[\overline{A(j,i)} \cdot A(j,i')]&=\underset{x_j \sim D'}{\E}[\frac{\overline{v^{(D')}_{i}(x_j)} \cdot v^{(D')}_{i'}(x_j)}{m}]\\
&=\underset{x_j \sim D'}{\E}[\frac{D(x) \cdot \overline{v_{i}(x_j)} \cdot v_{i'}(x_j)}{m \cdot D'(x_j) }]=\underset{x_j \sim D}{\E}[\frac{\overline{v_{i}(x_j)} \cdot v_{i'}(x_j)}{m}]=1_{\vec{i}=\vec{i}'}/m.
\end{align*}

Now we apply Theorem~\ref{thm:matrix_chernoff} on $A^* A=\sum_{j=1}^m (A_j^* \cdot A_j)$:
\begin{align*}
\Pr\left[\lambda(A^* A) \notin [1-\eps,1+\eps]\right] &\le d \left( \frac{e^{-\eps}}{(1-\eps)^{1-\eps}}\right)^{1/\frac{K_{D'}}{m}} + d \left( \frac{e^{-\eps}}{(1+\eps)^{1+\eps}}\right)^{1/\frac{K_{D'}}{m}}\\
&\le 2d \cdot e^{-\frac{\eps^2 \cdot \frac{m}{K_{D'}}}{3}} \le \delta \qquad \qquad \text{ given } m \ge \frac{6 K_{D'} \log \frac{d}{\delta}}{\eps^2}.
\end{align*}
\end{proof}

Then we finish the proof of Lemma~\ref{lem:agnostic_learning_single_distribution}.

\begin{proofof}{Lemma~\ref{lem:agnostic_learning_single_distribution}}
Because the coefficient $\alpha_i=1/m=O(\eps/K_{D'})$ and $\sum_i \alpha_i=1$, this indicates the second property of \emph{well-balanced sampling procedures}.

Since $m=\Theta(K_{D'} \log d)$, by Lemma~\ref{lmm:general_distribution}, we know all eigenvalues of $A^* \cdot A$ are in $[1-1/4,1+1/4]$ with probability $1-10^{-3}$. By Lemma~\ref{lem:operator_estimation}, this indicates the first property of \emph{well-balanced sampling procedures}.
\end{proofof}







\section{Results for Active Learning}\label{sec:act_learning}
In this section, we investigate the case where we do not know the distribution $D$ of $x$ and only receive random samples from $D$. We finish the proof of Theorem~\ref{thm:unknown_dist} that bounds the number of unlabeled samples by the condition number of $D$ and the number of labeled samples by $dim(\FF)$ to find the truth through $D$.

\restate{thm:unknown_dist}

Notice that Theorem~\ref{cor:active_learning} follows from Corollary~\ref{cor:specific_noise_AL_procedure} and the guarantee of Theorem~\ref{thm:unknown_dist}. For generality, we bound the number of labels using any \emph{well-balanced sampling procedure}, such that Theorem~\ref{thm:unknown_dist} follows from this lemma with the linear sample procedure in Lemma~\ref{lemma:BSS}.
\begin{lemma}\label{lem:act_learn_agnostic_procedure}
Consider any dimension $d$ linear
  space $\FF$ of functions from a domain $G$ to $\mathbb{C}$. Let
  $(D,Y)$ be a joint distribution over $G \times \mathbb{C}$ and
  $f=\underset{h \in \FF}{\arg\min} \underset{(x,y)\sim
    (D,Y)}{\E}[|y-h(x)|^2]$.

  Let
  $K=\underset{h \in \FF: h \neq 0}{\sup} \frac{\sup_{x \in
      G}|h(x)|^2}{\|h\|_D^2}$ and $P$ be a \emph{well-balanced sampling procedure} terminating in $m_p(\eps)$ rounds with probability $1-10^{-3}$ for any linear family $\FF$, measurement $D$, and $\eps$. For any $\eps>0$, Algorithm~\ref{alg:ag_unknown} takes $O(K \log d + \frac{K}{\eps})$ unlabeled samples from $D$ and requests at most $m_p(\eps/8)$ labels to output $\wt{f}$ satisfying 
\[
\underset{x \sim D}{\E}[|\tilde{f}(x)-f(x)|^2] \le \eps \cdot \underset{(x,y)\sim (D,Y)}{\E}[|y-f(x)|^2] \text{ in expectation}.
\]
\end{lemma}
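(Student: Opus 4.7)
The plan is to implement the two-stage strategy sketched in the Proof Overview. In stage one, I would draw $n = O(K \log d + K/\eps)$ unlabeled i.i.d.\ samples from $D$ and let $D_0$ be the uniform (empirical) distribution on them; in stage two, I would run the given well-balanced procedure $P$ with accuracy $\eps/8$ on the linear family $\FF$ and the now-explicit distribution $D_0$ (in place of $D$), use it to select $m_p(\eps/8)$ points, query only those labels, and output the weighted ERM $\wt f$ of a good execution.

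First I would invoke Lemma~\ref{lem:agnostic_learning_single_distribution} with $D' = D$ (so $K_{D'} = K$) and accuracy $1/4$: $n = O(K \log d + K/\eps)$ i.i.d.\ samples make the uniform-weight matrix $A^*A$ have eigenvalues in $[3/4, 5/4]$ with probability $1 - 10^{-3}$, which by Lemma~\ref{lem:operator_estimation} is exactly the statement $\|h\|_{D_0}^2 \in [3/4, 5/4]\|h\|_D^2$ for all $h \in \FF$. Applying Theorem~\ref{thm:orthogonal_part} to the conceptual fully-labeled ERM
\[
  f' := \argmin_{h \in \FF}\; \E_{(x,y)\sim (D_0, Y)}[|y - h(x)|^2]
\]
then gives $\E[\|f' - f\|_D^2] \le (\eps/4)\cdot \mathsf{OPT}$, where $\mathsf{OPT} := \E_{(x,y)\sim (D,Y)}[|y-f(x)|^2]$. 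The function $f'$ never appears in the algorithm; it is used only in the analysis, which is what saves us from actually labeling all $n$ points.

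For stage two, since $D_0$ has explicit finite support I can compute an orthonormal basis of $\FF$ under $D_0$ and execute $P$ on $(\FF, D_0, \eps/8)$ purely from unlabeled data, requesting labels only for the $m_p(\eps/8)$ selected points (re-running until a good execution, as in Definition~\ref{def:weightederm}). Applying Theorem~\ref{thm:guarantee_AL_procedure} to the joint distribution $(D_0, Y)$ (with $D_0$ playing the role of $D$), the weighted ERM $\wt f$ satisfies, in expectation over $P$,
\[
  \|\wt f - f'\|_{D_0}^2 \le \frac{\eps}{8}\cdot \E_{(x,y)\sim(D_0,Y)}[|y - f'(x)|^2] \le \frac{\eps}{8}\cdot \E_{(x,y)\sim(D_0,Y)}[|y - f(x)|^2],
\]
the second inequality because $f'$ is the $\FF$-minimizer. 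Taking the outer expectation over the $n$ samples defining $D_0$ replaces the right-hand side by $(\eps/8)\cdot \mathsf{OPT}$ since $f$ is fixed.

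Finally I would combine the two stages. On the $(1-10^{-3})$-probable spectral event of stage one, $\|\wt f - f'\|_D^2 \le \tfrac{4}{3}\|\wt f - f'\|_{D_0}^2$, so in expectation $\|\wt f - f'\|_D^2 \lesssim \eps\cdot \mathsf{OPT}$; then $\|\wt f - f\|_D^2 \le 2\|\wt f - f'\|_D^2 + 2\|f' - f\|_D^2$ combined with the stage-one bound yields the claimed $\eps\cdot\mathsf{OPT}$ after rescaling the constants. The main obstacle will be probabilistic bookkeeping: four coupled sources of randomness (the $n$ unlabeled samples, the execution of $P$, the rejection loop that conditions on $P$ being good, and the labels $Y$) need to be decoupled so that each conditional expectation is the one delivered by Theorem~\ref{thm:guarantee_AL_procedure} or Theorem~\ref{thm:orthogonal_part}, and the rare failure event of step one must be absorbed without inflating the final expectation (for instance by simply re-running stage one, or by observing that a deterministic trivial bound on that event suffices given the tiny failure probability).
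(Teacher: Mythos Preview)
Your proposal is correct and follows essentially the same two-stage strategy as the paper's proof: draw $O(K\log d + K/\eps)$ unlabeled points to form $D_0$, introduce the conceptual fully-labeled ERM $f'$, bound $\|f'-f\|_D$ via the i.i.d.\ well-balanced procedure, bound $\|\wt f - f'\|_{D_0}$ via Theorem~\ref{thm:guarantee_AL_procedure} applied on $(\FF,D_0)$, and combine using the $D\leftrightarrow D_0$ norm equivalence.

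One small difference: to bound the empirical optimum $\E_{(x,y)\sim(D_0,Y)}[|y-f'(x)|^2]$ you use directly that $f'$ is the $\FF$-minimizer under $D_0$, so this is at most $\E_{(x,y)\sim(D_0,Y)}[|y-f(x)|^2]$, whose outer expectation is exactly $\mathsf{OPT}$. The paper instead splits via $|y-f'|^2\le 2|y-f|^2+2|f-f'|^2$ and then invokes the norm equivalence and the stage-one bound on $\|f-f'\|_D$. Your route is cleaner and avoids one use of the spectral event. A minor point: Theorem~\ref{thm:orthogonal_part} as stated gives a with-high-probability bound, not an expectation; the paper (and your argument) really need the conditional-expectation form coming from the proof of Theorem~\ref{thm:guarantee_AL_procedure}, so you should cite that rather than Theorem~\ref{thm:orthogonal_part} for stage one.
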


Algorithm~\ref{alg:ag_unknown} first takes $m_0=O(K \log d+ K/\eps)$ unlabeled samples and defines a distribution $D_0$ to be the uniform distribution on these $m_0$ samples. Then it uses $D_0$ to simulate $D$ in $P$, i.e., it outputs the ERM of a good execution of the well-balanced sampling procedure $P$ with the linear family $\FF$, the measurement $D_0$, and $\frac{\epsilon}{8}$.

\begin{algorithm}[H]
\caption{Regression over an unknown distribution $D$}\label{alg:ag_unknown}
\begin{algorithmic}[1]
\Procedure{\textsc{RegressionUnknownDistribution}}{$\eps,\FF,D,P$}
\State Set $C$ to be a large constant and $m_0=C \cdot (K \log d + K/\eps)$ . 
\State Take $m_0$ unlabeled samples $x_1,\dotsc,x_{m_0}$ from $D$.
\State Let $D_0$ be the uniform distribution over $(x_1,\dotsc,x_{m_0})$.
\State Output the ERM $\tilde{f}$ of a good execution of $P$ with parameters $\FF,D_0,\epsilon/8$.\label{step:ERM}
\EndProcedure
\end{algorithmic}
\end{algorithm}

\begin{proof}
We still use $\|f\|_{D'}$ to denote $\sqrt{\underset{x \sim D'}{\E}[|f(x)|^2]}$ and $D_1$ to denote the weighted distribution generated by Procedure $P$ given $\FF,D_0,\eps$. By Lemma~\ref{lem:agnostic_learning_single_distribution} with $D$ and the property of $P$, with probability at least $1-2 \cdot 10^{-3}$, 
\begin{equation}\label{eq:D_D_0}
\|h\|^2_{D_0} = (1 \pm 1/4) \cdot \|h\|^2_{D} \text{ and } \|h\|^2_{D_1} = (1 \pm 1/4) \cdot \|h\|^2_{D_0} \text{ for every } h \in \FF.
\end{equation} 
We assume \eqref{eq:D_D_0} holds in the rest of this proof.

Let $y_i$ denote a random label of $x_i$ from $Y(x_i)$ for each $i \in [m_0]$ including the unlabeled samples in the algorithm and the labeled samples in Step~\ref{step:ERM} of Algorithm~\ref{alg:ag_unknown}. Let $f'$ be the weighted ERM of $(x_1,\cdots,x_m)$ and $(y_1,\cdots,y_m)$ over $D_0$, i.e.,
\begin{equation}\label{eq:def_f'}
f'=\argmin_{h \in \FF} \underset{x_i \sim D_0,y_i \sim Y(x_i)}{\E} \left[ |y_i-h(x_i)|^2 \right].
\end{equation}

Given Property \eqref{eq:D_D_0} and Lemma~\ref{lem:agnostic_learning_single_distribution}, $$\underset{(x_1,y_1),\dotsc,(x_{m_0},y_{m_0})}{\E}[\|f'-f\|^2_D] \le \eps \cdot \underset{(x,y)\sim (D,Y)}{\E}[|y-f(x)|^2] \text{ from the proof of Theorem~\ref{thm:guarantee_AL_procedure} }.
$$


In the rest of this proof, we show that the weighted ERM $\wt{f}$ of a good execution of $P$ with measurement $D_0$ guarantees $\|\wt{f}-f'\|^2_{D_0} \lesssim \underset{(x,y) \sim (D,Y)}{\E} \left[ |y-f(x)|^2 \right]$ with high probability. Given Property \eqref{eq:D_D_0} and the guarantee of Procedure $P$, we have $$\underset{P}{\E}[\|\wt{f}-f'\|^2_{D_0}] \le \eps \cdot \underset{x \sim D_0}{\E} \left[ |y_i-f'(x_i)|^2 \right]$$ from the proof of Theorem~\ref{thm:guarantee_AL_procedure}. 
Next we bound the right hand side $\underset{x_i \sim D_0}{\E} \left[ |y_i-f'(x_i)|^2 \right]$ by $\underset{(x,y) \sim (D,Y)}{\E} \left[ |y-f(x)|^2 \right]$ over the randomness of $(x_1,y_1),\dotsc,(x_{m_0},y_{m_0})$:
\begin{align*}
& \E_{(x_1,y_1),\dotsc,(x_{m_0},y_{m_0})} \left[ \E_{x_i \sim D_0} \left[ |y_i-f'(x_i)|^2 \right] \right] \\
\le & \E_{(x_1,y_1),\dotsc,(x_{m_0},y_{m_0})} \left[2\E_{x_i \sim D_0} \left[ |y_i-f(x_i)|^2 \right] + 2 \|f-f'\|_{D_0}^2 \right]\\
\le & 2 \E_{(x,y) \sim (D,Y)} \left[ |y-f(x)|^2 \right] + 3 \E_{(x_1,y_1),\dotsc,(x_{m_0},y_{m_0})}\big[\|f-f'\|_D^2\big] \quad \text{ from }  \eqref{eq:D_D_0}
\end{align*}
Hence $\underset{(x_1,y_1),\dotsc,(x_{m_0},y_{m_0})}{\E}\big[\underset{P}{\E}[\|\wt{f}-f'\|^2_{D_0}] \big] \lesssim \eps \cdot \underset{(x,y) \sim (D,Y)}{\E} \left[ |y-f(x)|^2 \right]$.


From all discussion above, by rescaling $\eps$, we have
\[
\|\wt{f}-f\|^2_D \le 2 \|\wt{f}-f'\|^2_D + 2 \|f'-f\|^2_D \le 3 \|\wt{f}-f'\|^2_{D_0} + \frac{\eps}{4} \cdot \underset{(x,y)\sim (D,Y)}{\E}[|y-f(x)|^2]\le \eps \cdot \underset{(x,y)\sim (D,Y)}{\E}[|y-f(x)|^2]
\]
\end{proof}

\section{Lower Bounds}\label{sec:lower_bound} 
We present two lower bounds on the number of samples in this section. We first prove a lower bound on the query complexity based on the dimension $d$. Then we prove a lower bound on the the sample complexity based on the condition number of the sampling distribution.
\begin{theorem}\label{thm:information_lower_bound}
For any $d$ and any $\eps<\frac{1}{10}$, there exist a distribution $D$ and a linear family $\FF$ of functions with dimension $d$ such that for the i.i.d.~Gaussian noise $g(x)=N(0,\frac{1}{\eps})$, any algorithm which observes $y(x)=f(x)+g(x)$ for $f \in \FF$ with $\|f\|_D=1$ and outputs $\wt{f}$ satisfying $\|f-\wt{f}\|_D \le 0.1 $ with probability $\ge \frac{3}{4}$, needs at least $m \ge \frac{0.8 d}{\eps}$ queries.
\end{theorem}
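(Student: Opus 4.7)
The plan is a standard information-theoretic argument: combine a packing lower bound on how much information is required to recover $f$ with a per-query Shannon--Hartley upper bound on the mutual information transmitted by each noisy observation.

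I would first construct the hard instance. Take $D$ to be uniform on the set $[d]$, and let $\FF = \mathbb{R}^d$, viewed as the space of all real-valued functions on $[d]$. This is a dimension-$d$ linear family, and $\|f\|_D^2 = \frac{1}{d}\sum_{i=1}^d f(i)^2$. Within $\FF$, carve out a hard packing $\mathcal{C} \subseteq \{-1,+1\}^d$: by Gilbert--Varshamov, for a small absolute constant $\delta_0$ (e.g.\ $\delta_0 = 0.02$) there exists a binary code of length $d$, minimum Hamming distance $\ge \delta_0 d$, and size $|\mathcal{C}| \ge 2^{(1-H_2(\delta_0))d} \ge 2^{0.85\, d}$. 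Each codeword $c \in \mathcal{C}$ gives an $f_c \in \FF$ with $\|f_c\|_D = 1$, and any two distinct codewords satisfy $\|f_c - f_{c'}\|_D^2 \ge 4\delta_0 > 0.04$, hence pairwise $\|\cdot\|_D$-distance at least $0.2 > 2\cdot 0.1$.

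Next, I would put the uniform prior on $f \in \mathcal{C}$ and run the given algorithm to obtain $\wt f$. Because codewords are pairwise $0.2$-separated and $\|f - \wt f\|_D \le 0.1$ with probability $\ge 3/4$, the nearest-codeword decoder applied to $\wt f$ recovers $f$ exactly with probability $\ge 3/4$. Fano's inequality then gives
\[
I(f; \wt f) \ge (1 - 1/4)\log|\mathcal{C}| - \log 2 \ge 0.63\, d \cdot \ln 2 - O(1)
\]
nats. By the data-processing inequality, $I(f; \wt f) \le I(f; y_1,\ldots,y_m)$, so it suffices to upper bound the mutual information provided by the $m$ noisy queries.

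For the per-query upper bound I would apply the mutual-information chain rule:
\[
I(f;\, y_1,\ldots,y_m) \;=\; \sum_{i=1}^m I(f;\, y_i \mid y_{<i}) \;\le\; \sum_{i=1}^m I(f(x_i);\, y_i \mid y_{<i}).
\]
Here $x_i$ is determined by $y_{<i}$ and the algorithm's internal randomness (which is independent of $f$), so conditioning on $y_{<i}$ fixes the query. Since $f$ takes values in $\{-1,+1\}$, we have $|f(x_i)|\le 1$ pointwise, hence $\mathrm{Var}(f(x_i)\mid y_{<i}) \le 1$. The scalar Gaussian-channel MI bound (Shannon--Hartley) then yields $I(f(x_i); y_i \mid y_{<i}) \le \tfrac{1}{2}\ln(1+\eps) \le \eps/2$. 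Summing gives $I(f;y_1,\ldots,y_m) \le m\eps/2$. Combining with the Fano lower bound,
\[
m\,\eps/2 \;\ge\; 0.63\, d \cdot \ln 2 - O(1) \;\ge\; 0.4\, d \quad \text{for $d$ large},
\]
so $m \ge 0.8\, d/\eps$, as claimed (for small $d$ one checks the bound directly or inflates the packing).

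The only delicate step is matching the numerical constant $0.8$. This hinges on (i) choosing the packing radius $\delta_0$ small enough that $1-H_2(\delta_0)$ is close to $1$ while keeping pairwise distance safely above $2\cdot 0.1$, and (ii) keeping $\eps < 1/10$ so that $\ln(1+\eps)$ is close to $\eps$. Both are routine once the framework above is in place; the conceptual content is the packing plus Shannon--Hartley.
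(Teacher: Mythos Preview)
Your proposal is correct and follows essentially the same argument as the paper: uniform $D$ on $[d]$, a $\{\pm1\}^d$ packing with pairwise $\|\cdot\|_D$-distance above $0.2$, Fano's inequality for the information lower bound, and the Shannon--Hartley per-query bound $\tfrac12\ln(1+\eps)\le \eps/2$ to get $m\eps/2\ge 0.4d$. The only cosmetic differences are that the paper builds the packing greedily (yielding $|\mathcal M|\ge 2^{0.7d}$) rather than via Gilbert--Varshamov, and invokes Yao's minimax principle to assume determinism rather than conditioning on the algorithm's internal coins.
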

Notice that this lower bound matches the upper bound in Theorem~\ref{thm:BSS} up to a constant factor. In the rest of this section, we focus on the proof of Theorem~\ref{thm:information_lower_bound}. Let  $\FF=\{f: [d] \rightarrow \mathbb{R}\}$ and $D$ be the uniform distribution over $[d]$. We first construct a packing set $\mathcal{M}$ of $\FF$.

\begin{claim}
There exists a subset $\mathcal{M}=\{f_1,\dotsc,f_n\} \subseteq \FF$ with the following properties:
\begin{enumerate}
\item $\|f_i\|_D = 1$ for each $f_i \in \mathcal{M}$.
\item $\|f_i\|_{\infty} \le 1$ for each $f_i \in \mathcal{M}$.
\item $\|f_i-f_j\|_D > 0.2$ for distinct $f_i,f_j$ in $\mathcal{M}$.
\item $n \ge 2^{0.7d}$.
\end{enumerate}
\end{claim}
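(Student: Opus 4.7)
The plan is to take $\mathcal{M}$ to be a suitable subset of $\pm 1$-valued functions, viewed inside $\FF = \{f:[d]\to\R\}$. Since $D$ is uniform on $[d]$, any $f \in \{-1,+1\}^d$ automatically satisfies $\|f\|_D^2 = \frac{1}{d}\sum_{k=1}^d f(k)^2 = 1$ and $\|f\|_\infty = 1$, so properties~1 and~2 hold for free, and no further work is needed for them.

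For property~3, the key observation is that for $f_i, f_j \in \{-1, +1\}^d$ the squared coordinate differences $(f_i(k)-f_j(k))^2$ take only the values $0$ and $4$, so
\[
  \|f_i - f_j\|_D^2 = \frac{4\, d_H(f_i,f_j)}{d},
\]
where $d_H$ denotes Hamming distance. The condition $\|f_i - f_j\|_D > 0.2$ is therefore equivalent to $d_H(f_i, f_j) > d/100$, and the claim reduces to producing a binary code of length $d$ with minimum Hamming distance exceeding $d/100$ and at least $2^{0.7d}$ codewords.

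This last statement follows from the Gilbert--Varshamov bound: greedily adding codewords from $\{-1,+1\}^d$ as long as some vector has Hamming distance $> d/100$ from all previously chosen codewords yields a code of size at least $2^d / V(d, \lfloor d/100 \rfloor)$, where $V(d,r) = \sum_{k=0}^{r} \binom{d}{k}$ is the Hamming ball volume. The standard entropy estimate $V(d, \alpha d) \le 2^{H_2(\alpha) d}$, together with $H_2(0.01) < 0.081$, then yields a code of size at least $2^{0.919\, d} \ge 2^{0.7d}$. A purely probabilistic alternative is to sample $2 \cdot 2^{0.7d}$ uniform sign vectors and delete one vector from each close pair: by a Chernoff bound, two independent random sign vectors have Hamming distance $\le d/100$ with probability at most $2^{-0.9d}$, so the expected number of deletions is $O(2^{0.5d})$, negligible compared with $2^{0.7d}$. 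There is no serious obstacle here---the constants $0.2$ and $0.7$ are both quite generous relative to what $\{-1,+1\}^d$ supports, with the entropy estimate leaving almost an order-of-magnitude slack in the exponent.
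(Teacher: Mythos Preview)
Your proof is correct and follows essentially the same approach as the paper: both take $\mathcal{M}$ inside $\{-1,+1\}^d$, translate the $\|\cdot\|_D$ separation into a Hamming-distance condition, and run the greedy (Gilbert--Varshamov) construction with a Hamming-ball volume bound. The paper uses the cruder estimate $\binom{d}{\le 0.01d}\le 2^{0.3d}$ where you use $H_2(0.01)<0.081$, but the argument is the same.
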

\begin{proof}
We construct $\mathcal{M}$ from $U=\big\{f: [d] \rightarrow \{\pm 1\}\big\}$ in Procedure \textsc{ConstructM}. Notice that $|U|=2^d$ before the while loop. At the same time, Procedure \textsc{ConstructM} removes at most $\binom{d}{\le 0.01 d} \le 2^{0.3 d}$ functions every time because $\|g-h\|_D<0.2$ indicates $\Pr[g(x)\neq h(x)] \le (0.2)^2 /4=0.01$. Thus $n \ge 2^{d}/2^{0.3 d} \ge 2^{0.7 d}$.

\begin{algorithm}[H]
\caption{Construct $\mathcal{M}$}\label{alg:construction}
\begin{algorithmic}[1]
\Procedure{\textsc{ConstructM}}{$d$}
\State Set $n=0$ and $U=\big\{f: [d] \rightarrow \{\pm 1\}\big\}$. 
\While{$U \neq \emptyset$}
\State Choose any $h \in U$ and remove all functions $h' \in U$ with $\|h-h'\|_D<0.2$.
\State $n=n+1$ and $f_n=h$.
\EndWhile
\State Return $\mathcal{M}=\{f_1,\dotsc,f_n\}$.
\EndProcedure
\end{algorithmic}
\end{algorithm}
\end{proof}

We finish the proof of Theorem~\ref{thm:information_lower_bound} using the Shannon-Hartley theorem.
\begin{theorem}[The Shannon-Hartley Theorem \cite{Hartley,Shannon49}]\label{thm:Shannon_Hartley}
Let $S$ be a real-valued random variable with $\E[S^2]=\tau^2$ and $T \sim N(0,\sigma^2)$. The mutual information $I(S;S+T)\le \frac{1}{2} \log (1+\frac{\tau^2}{\sigma^2})$.
\end{theorem}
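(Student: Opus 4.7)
My plan is to follow the standard information-theoretic derivation via differential entropy. Write $Z = S + T$ and use the decomposition
\[
  I(S; Z) = h(Z) - h(Z \mid S) = h(Z) - h(T),
\]
where the second equality uses the independence of $S$ and $T$, so that conditional on $S$, the random variable $Z = S + T$ is just a translate of $T$ and hence has the same differential entropy as $T$. Since $T \sim N(0, \sigma^2)$, a direct calculation gives $h(T) = \tfrac{1}{2}\log(2\pi e \sigma^2)$.

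The key step is to upper-bound $h(Z)$. I would invoke the maximum-entropy principle: among all real-valued random variables with a given second moment $M$, the Gaussian $N(0, M)$ maximizes the differential entropy, attaining the value $\tfrac{1}{2}\log(2\pi e M)$. This follows in one line from the non-negativity of the KL-divergence between the distribution of $Z$ and an auxiliary Gaussian of matching second moment. Applying this with
\[
  \E[Z^2] = \E[(S+T)^2] = \E[S^2] + 2\,\E[S]\,\E[T] + \E[T^2] = \tau^2 + \sigma^2,
\]
where the cross term vanishes because $\E[T] = 0$ and $S$, $T$ are independent, yields
\[
  h(Z) \le \tfrac{1}{2}\log\bigl(2\pi e (\tau^2 + \sigma^2)\bigr).
\]

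Subtracting gives
\[
  I(S; Z) = h(Z) - h(T) \le \tfrac{1}{2}\log\frac{\tau^2 + \sigma^2}{\sigma^2} = \tfrac{1}{2}\log\!\left(1 + \frac{\tau^2}{\sigma^2}\right),
\]
which is the desired bound. The only mildly delicate point is the use of the maximum-entropy inequality for second moment rather than variance; this is harmless because, writing $\mu = \E[Z]$, one has $\mathrm{Var}(Z) = \E[Z^2] - \mu^2 \le \E[Z^2]$, and $h(Z)$ is invariant under the shift $Z \mapsto Z - \mu$, so replacing $Z$ by its centered version loses nothing and reduces to the standard variance statement. Thus no real obstacle arises — the main substantive ingredient is the Gaussian maximum-entropy lemma, which itself is a one-line consequence of $D_{\mathrm{KL}}(p \,\|\, q) \ge 0$ with $q$ chosen Gaussian.
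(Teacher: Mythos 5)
The paper states this as a known result, citing \cite{Hartley,Shannon49} without giving a proof of its own, so there is no in-paper argument to compare against. Your derivation is the standard textbook proof: the decomposition $I(S;Z)=h(Z)-h(Z\mid S)=h(Z)-h(T)$ (valid since $S$ and $T$ are independent and $Z\mid S=s$ is $N(s,\sigma^2)$), the Gaussian maximum-entropy bound $h(Z)\le\tfrac12\log\bigl(2\pi e\,\E[Z^2]\bigr)$ handled correctly via translation invariance and $\mathrm{Var}(Z)\le\E[Z^2]$, and the computation $\E[Z^2]=\tau^2+\sigma^2$ using independence and $\E[T]=0$. This is correct and complete; the only implicit assumption you are using, as the theorem statement itself tacitly does, is that $S$ and $T$ are independent.
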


\begin{proofof}{Theorem~\ref{thm:information_lower_bound}}
Because of Yao's minimax principle, we assume $A$ is a deterministic algorithm given the i.i.d.~Gaussian noise. Let $I(\wt{f};f_j)$ denote the mutual information of a random function $f_j \in \mathcal{M}$ and $A$'s output $\wt{f}$ given $m$ observations $(x_1,y_1),\dotsc,(x_m,y_m)$ with $y_i=f_j(x_i)+N(0,\frac{1}{\eps})$. When the output $\wt{f}$ satisfies $\|\wt{f}-f_j\|_D \le 0.1$, $f_j$ is the closest function to $\wt{f}$ in $\mathcal{M}$ from the third property of $\mathcal{M}$. From Fano's inequality \cite{Fano}, $H(f_j|\tilde{f}) \le H(\frac{1}{4}) + \frac{\log( |\mathcal{M}| - 1)}{4}$. This indicates
$$
I(f_j;\wt{f}) = H(f_j)-H(f_j|\tilde{f}) \ge \log |\mathcal{M}| - 1 - \log( |\mathcal{M}| - 1)/4 \ge 0.7 \log |\mathcal{M}| \ge 0.4 d.
$$
At the same time, by the data processing inequality, the algorithm $A$ makes $m$ queries $\big(x_1,\dotsc,x_m\big)$ and sees $\big(y_1,\dotsc,y_m\big)$, which indicates
\begin{equation}\label{eq:information_samples}
I(\wt{f};f_j) \le I\bigg(\big(y_1,\dotsc,y_m\big);f_j\bigg)=\sum_{i=1}^m I\bigg(y_i; f_j(x_i)\big|y_1,\cdots,y_{i-1}\bigg).
\end{equation} 
For the query $x_i$, let $D_{i,j}$ denote the distribution of $f_j \in \mathcal{M}$ in the algorithm $A$ given the first $i-1$ observations $\big(x_1,y_1\big),\dotsc,\big(x_{i-1},y_{i-1}\big)$. We apply Theorem~\ref{thm:Shannon_Hartley} on $D_{i,j}$ such that it bounds 

\begin{align*}
I\left(y_i=f_j(x_i)+N(0,\frac{1}{\eps}); f_j(x_i)\big|y_1,\cdots,y_{i-1}\right) \le & \frac{1}{2} \log \left(1+\frac{\underset{f \sim D_{i,j}}{\E}[f(x_i)^2]}{1/\epsilon} \right) \\
\le & \frac{1}{2} \log \left(1+\frac{\underset{f \in \mathcal{M}}{\max}[f(x_i)^2]}{1/\eps}\right)\\
= & \frac{1}{2} \log \big(1+ \eps\big) \le \frac{\eps}{2},
\end{align*}
where we apply the second property of $\mathcal{M}$ in the second step to bound $f(x)^2$ for any $f \in \mathcal{M}$. Hence we bound $\sum_{i=1}^m I(y_i;f_j|y_1,\cdots,y_{i-1})$ by $m \cdot \frac{\eps}{2}$. This implies
$$
0.4 d \le m \cdot \frac{\eps}{2} \Rightarrow m \ge \frac{0.8 d}{\eps}. 
$$
\end{proofof}



Next we consider the sample complexity of linear regression.
\begin{theorem}\label{lem:sample_complexity}
For any $K$, $d$, and $\eps>0$, there exist a distribution $D$, a linear family of functions $\FF$ with dimension $d$ whose condition number $\underset{h \in \FF:h \neq 0}{\sup} \bigg\{ \underset{x \in G}{\sup} \frac{|h(x)|^2}{\|h\|_D^2} \bigg\}$ equals $K$, and a noise function $g$ orthogonal to $V$ such that any algorithm observing $y(x)=f(x)+g(x)$ of $f \in \FF$ needs at least $\Omega(K \log d+ \frac{K}{\eps})$ samples from $D$ to output $\wt{f}$ satisfying $\|\wt{f}-f\|_D \le 0.1 \sqrt{\eps} \cdot \|g\|_D$ with probability $\frac{3}{4}$.
\end{theorem}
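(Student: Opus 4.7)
Take $G=[K]$ (assuming $K\ge d$, otherwise shrink $d$), let $D$ be the uniform distribution on $G$, and let $\FF=\mathrm{span}\{\mathbf{1}_1,\dotsc,\mathbf{1}_d\}$ be the space of functions supported on the first $d$ coordinates. Then $\dim\FF=d$, and for any $h\in\FF$ with $h=\sum_{i\le d}c_i\mathbf{1}_i$ one has $\|h\|_D^2=\frac{1}{K}\sum_i c_i^2$ and $\sup_x|h(x)|^2=\max_i c_i^2$, so the condition number equals $K$, attained by a single indicator. I will prove the two terms $\Omega(K/\eps)$ and $\Omega(K\log d)$ separately on this construction; since $\Omega(\max\{K\log d,K/\eps\})=\Omega(K\log d+K/\eps)$, running the harder of the two lower bounds for the given parameter regime suffices.

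\emph{The $\Omega(K/\eps)$ bound.} Here I reduce to Theorem~\ref{thm:information_lower_bound}. Let $f\in\FF$ be drawn from the hard distribution of that theorem applied to the index set $[d]$ with the uniform distribution on $[d]$, and let $g$ be i.i.d.\ mean-zero Gaussian noise of variance $1/\eps$ on every point of $G$; since each $g(x)$ has mean zero, $g$ is orthogonal to $\FF$ in the required sense, and $\|g\|_D^2\asymp 1/\eps$. Because $f$ vanishes outside $[d]$, any sample landing at $x>d$ yields $y(x)=g(x)$, a random variable independent of $f$ and hence useless. Among $T$ i.i.d.\ samples from $D$, the number $T'$ of samples in $[d]$ is $\mathrm{Binomial}(T,d/K)$; conditional on the positions of these useful samples we are exactly in the query model of Theorem~\ref{thm:information_lower_bound} (after rescaling so $\|f\|_D$ matches), which forces $T'=\Omega(d/\eps)$ to recover $f$ within $0.1\sqrt{\eps}\,\|g\|_D$. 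Chernoff on $T'$ then yields $T=\Omega(K/\eps)$.

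\emph{The $\Omega(K\log d)$ bound.} Now let $f$ be a uniformly random Rademacher signing of the $d$ indicator functions, i.e.\ $f(i)=\sigma_i\in\{\pm1\}$ for $i\le d$, and let $g$ be i.i.d.\ mean-zero Gaussian noise of tiny variance $\eps/d$ on every point, so that $\|g\|_D^2\lesssim \eps/d$ and the required error $0.01\eps\|g\|_D^2$ is $o(1/K)$. The independence of $g$ across points means that if coordinate $i\in[d]$ is never sampled, then conditioned on the observations the posterior of $\sigma_i$ is still uniform on $\{\pm1\}$, so $\E[(\wt f(i)-f(i))^2\mid i\text{ unsampled}]\ge 1$ and this coordinate alone contributes at least $1/K$ to $\|\wt f-f\|_D^2$. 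Thus the algorithm must hit all $d$ coordinates with probability at least $3/4$, and since each sample hits a given target with probability $1/K$, the standard coupon collector lower bound gives $T=\Omega(K\log d)$.

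\emph{Main obstacle.} The subtlety is ensuring the construction genuinely has condition number \emph{equal to} $K$ (not merely at most $K$), and making sure ``noise function orthogonal to $V$'' is interpreted consistently with how the upper bound results use noise---using mean-zero i.i.d.\ Gaussian $g$ satisfies both $\E_x[g(x)h(x)]=0$ for every $h\in\FF$ and the stronger per-point mean-zero condition appearing in Corollary~\ref{cor:specific_noise_AL_procedure}, so the two lower bounds match the regime of the matching upper bound in Theorem~\ref{thm:orthogonal_part}. Everything else---Chernoff, coupon collector, and the query-complexity reduction---is routine once the reduction to the information-theoretic theorem and the ``useless samples outside $[d]$'' observation are in place.
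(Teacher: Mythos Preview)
Your proposal is correct and matches the paper's proof almost exactly: the same construction ($G=[K]$, $D$ uniform, $\FF$ the functions supported on the first $d$ coordinates), the same reduction to Theorem~\ref{thm:information_lower_bound} for the $\Omega(K/\eps)$ term via the ``samples outside $[d]$ are useless'' observation, and the same coupon-collector argument for the $\Omega(K\log d)$ term. The one difference is that the paper simply takes $g\equiv 0$ in the coupon-collector part, forcing exact recovery; this is cleaner than your tiny-Gaussian choice, and in fact your stated variance $\eps/d$ does not make the target error $0.01\eps\|g\|_D^2$ smaller than the $1/K$ contribution of an unsampled coordinate in all regimes (e.g.\ when $K\gg d/\eps^2$)---but any sufficiently small variance, or just $g=0$, repairs this immediately.
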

\begin{proof}
We fix $K$ to be an integer and set the domain of functions in $\FF$ to be $[K]$. We choose $D$ to be the uniform distribution over $[K]$. Let $\FF$ denote the family of functions $\big\{f:[K] \rightarrow \mathbb{C}|f(d+1)=f(d+2)=\cdots=f(K)=0\big\}$. Its condition number $\underset{h \in \FF:h \neq 0}{\sup} \big\{ \underset{x \in G}{\sup} \frac{|h(x)|^2}{\|h\|_D^2} \big\}$ equals $K$. $h(x)=1_{x=1}$ provides the lower bound $\ge K$. At the same time, $\frac{|h(x)|^2}{\|h\|_D^2} = \frac{|h(x)|^2}{\sum_{i=1}^K |h(x)|^2/K} \le K$ indicates the upper bound $\le K$.

We first consider the case $K \log d \ge \frac{K}{\eps}$. Let $g=0$ such that $g$ is orthogonal to $V$. Notice that $\|\wt{f}-f\|_D \le 0.1 \sqrt{\eps} \cdot \|g\|_D$ indicates $\wt{f}(x)=f(x)$ for every $x \in [d]$. Hence the algorithm needs to sample $f(x)$ for every $x \in [d]$ when sampling from $D$: the uniform distribution over $[K]$. From the lower bound of the coupon collector problem, this takes at least $\Omega(K \log d)$ samples from $D$.

Otherwise, we prove that the algorithm needs $\Omega(K/\eps)$ samples. Without loss of generality, we assume $\underset{x \sim [d]}{\E} \big[|f(x)|^2 \big]=1$ for the truth $f$ in $y$. Let $g(x)=N(0,1/\eps)$ for each $x \in [d]$. From Theorem~\ref{thm:information_lower_bound}, to find $\wt{f}$ satisfying $\underset{x \sim [d]}{\E} \big[|\wt{f}(x)-f(x)|^2 \big] \le 0.1 \underset{x \sim [d]}{\E} \big[|f(x)|^2 \big]$, the algorithm needs at least $\Omega(d/\eps)$ queries of $x \in [d]$. Hence it needs $\Omega(K/\eps)$ random samples from $D$, the uniform distribution over $[K]$.
\end{proof}

\section[]{Application to Continuous $k$-sparse Fourier Transforms}\label{sec:k_sparse_FT}
We consider the nonlinear function space containing signals with $k$-sparse Fourier transform in the continuous setting. Let $D$ be the uniform distribution over $[-1,1]$ and $F$ be the bandlimit of the frequencies. We fix the family $\FF$ in this section to be
\[
\FF=\left\{ f(x)=\sum_{j=1}^k v_j e^{2 \pi \i \cdot f_j x}\bigg|v_j \in \mathbb{C}, |f_j| \le F\right\}.
\]
The main result in this section is an estimation of the importance sampling of $x \in [-1,1]$.

\restate{lem:bound_k_sparse_FT_x_middle}

This directly improves $\kappa=\underset{x \in [-1,1]}{\E}[\underset{f \in \FF}{\sup}
\frac{|f(x)|^2}{\|f\|_D^2}]$ for signals with $k$-sparse Fourier
transform, which is better than the condition number
$\underset{x \in [-1,1]}{\sup}[\underset{f \in \FF}{\sup}
\frac{|f(x)|^2}{\|f\|_D^2}]$ used in \cite{CKPS}.

\begin{theorem}\label{thm:kappa_sparse_FT}
For signals with $k$-sparse Fourier transform, 
 
\[
\underset{x \in [-1,1]}{\E} \left[\underset{f \in \FF}{\sup} \frac{|f(x)|^2}{\|f\|_D^2} \right] = O(k \log^2 k).
\]
Moreover, there exists a constant $c=\Theta(1)$ such that a distribution
\begin{flalign*}
& D_{\FF}(x)=
\begin{cases}
\frac{c}{(1-|x|) \log k }, & \text{ for } |x| \le 1-\frac{1}{k^3 \log^2 k}\\
c \cdot k^3 \log k, & \text{ for } |x| > 1-\frac{1}{k^3 \log^2 k}
\end{cases} 
\\
\text{guarantees for any } & f(x)=\sum_{j=1}^k v_j e^{2 \pi \i f_j x} \text{ and any } x \in [-1,1], \quad |f(x)|^2 \cdot \frac{D(x)}{D_{\FF}(x)} = O(k \log^2 k) \cdot \|f\|_D^2.
\end{flalign*}
\end{theorem}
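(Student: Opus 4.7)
My plan is to prove Theorem~\ref{thm:kappa_sparse_FT} by splitting $[-1,1]$ into a bulk region and a thin boundary sliver, combining the pointwise bound of Theorem~\ref{lem:bound_k_sparse_FT_x_middle} with the global condition-number bound $K=\tilde O(k^4)$ from \cite{CKPS}. Let $M(x):=\sup_{f\in\FF}|f(x)|^2/\|f\|_D^2$. I would set $B:=\{x\in[-1,1]:|x|\le 1-1/(k^3\log^2 k)\}$ and $S:=[-1,1]\setminus B$, so that $|S|=2/(k^3\log^2 k)$. On $B$, Theorem~\ref{lem:bound_k_sparse_FT_x_middle} gives $M(x)\lesssim (k\log k)/(1-|x|)$; on $S$, I will just use the trivial bound $M(x)\le K$, which is still useful because $S$ is extremely thin.

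For the first claim, since $D$ is uniform on $[-1,1]$, I would write $\kappa=\tfrac12\int_{-1}^{1} M(x)\,dx$ and bound the two pieces separately. The bulk integral is
\[
\tfrac12\!\int_{B}\frac{O(k\log k)}{1-|x|}\,dx \;=\; O(k\log k)\cdot \log(k^3\log^2 k) \;=\; O(k\log^2 k),
\]
while the sliver integral is at most $K\cdot |S|/2=\tilde O(k^4)\cdot O(1/(k^3\log^2 k))=\tilde O(k)/\log^2 k$, which is absorbed into $O(k\log^2 k)$. Summing the two yields $\kappa=O(k\log^2 k)$.

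For the ``moreover'' part, I would first verify that the candidate density $D_\FF$ in the theorem statement is, with the right choice of $c=\Theta(1)$, a probability measure: the integral over $B$ is $\int_B \frac{c\,dx}{(1-|x|)\log k}=\frac{2c\log(k^3\log^2 k)}{\log k}=\Theta(c)$ (the logarithmic singularity at $|x|=1$ exactly cancels the $\log k$ in the denominator), and the integral over $S$ is $c\cdot k^3\log k\cdot 2/(k^3\log^2 k)=O(c/\log k)$; so an appropriate $c=\Theta(1)$ normalizes the total mass to $1$. With $D(x)=1/2$, the ratio $D(x)/D_\FF(x)$ equals $(1-|x|)\log k/(2c)$ on $B$ and $1/(2ck^3\log k)$ on $S$. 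Multiplying by the pointwise bound on $M(x)$ gives $O(k\log^2 k)\|f\|_D^2$ on $B$ (the $(1-|x|)$ factor cancels the singularity from Theorem~\ref{lem:bound_k_sparse_FT_x_middle}) and $\tilde O(k^4)/(k^3\log k)=\tilde O(k/\log k)\|f\|_D^2$ on $S$, both comfortably within the target $O(k\log^2 k)\|f\|_D^2$.

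The only real subtlety is the polylog accounting on the sliver: the width $1/(k^3\log^2 k)$ is calibrated precisely so that the hidden logarithmic factors in $K=\tilde O(k^4)$ are swallowed by the $1/(k^3\log^2 k)$ measure of $S$ in the first part, and by the $1/(k^3\log k)$ weight ratio in the second part. Once these factors are tracked carefully, no new ideas are needed; the bulk of the work is done by Theorem~\ref{lem:bound_k_sparse_FT_x_middle} (proved earlier in the paper) and the $K=\tilde O(k^4)$ bound imported from \cite{CKPS}.
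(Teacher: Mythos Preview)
Your proposal is correct and follows essentially the same approach as the paper: split $[-1,1]$ into a bulk region where Theorem~\ref{lem:bound_k_sparse_FT_x_middle} controls $M(x)$ and a thin boundary sliver where the global bound $K=O(k^4\log^3 k)$ from Lemma~\ref{lem:sparse_FT_infinity} is applied, then integrate. Your treatment of the ``moreover'' part is in fact more explicit than the paper's, which simply invokes Claim~\ref{clm:new_conditioning} after noting that $D_\FF$ is defined via the pointwise upper bounds; the only cosmetic difference is that the paper's computation uses the cutoff $\eps=1/(k^3\log k)$ rather than $1/(k^3\log^2 k)$, which makes no difference to the final $O(k\log^2 k)$ bound.
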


We first state the condition number result in the previous work \cite{CKPS}.
\begin{lemma}[Lemma~5.1 of \cite{CKPS}]\label{lem:sparse_FT_infinity}
For any $f(x)=\sum_{j=1}^k v_j e^{2 \pi \i f_j x}$, 
\[
\underset{x \in [-1,1]}{\sup} \frac{|f(x)|^2}{\|f\|_D^2}=O(k^4 \log^3 k).
\]
\end{lemma}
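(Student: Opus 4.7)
My plan follows the two-step strategy described in the overview. First, for any $x\in(-1,1)$ and step size $\Delta>0$, I would show that every $f(x)=\sum_{r=1}^{k} v_r e^{2\pi\i f_r x}\in\FF$ admits a constant-coefficient two-sided identity
\[
f(x) \;=\; -\sum_{\substack{j\in[-2k,2k]\\ j\neq 0}} c_j(\Delta)\,f(x+j\Delta), \qquad |c_j(\Delta)|=O(1).
\]
A natural candidate is the symmetric Laurent polynomial $L(z)=z^{-2k}\prod_{r=1}^{k}(z-e^{2\pi\i f_r\Delta})(z-e^{-2\pi\i f_r\Delta})$, which vanishes at every $e^{\pm 2\pi\i f_r\Delta}$, so the identity $\sum_j [L]_j f(x+j\Delta)=0$ holds for every $f\in\FF$ regardless of the sign of $f_r$. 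Writing each quadratic factor in the Chebyshev-friendly form $z+z^{-1}-2\cos(2\pi f_r\Delta)$ and normalizing to keep its $\ell_\infty$ norm $O(1)$, I would argue that the resulting $k$-fold product has coefficients of size $O(1)$ and constant term bounded away from zero; solving for $f(x)$ then produces the displayed identity. This is an $O(k)$-length relation, whereas the one-sided analogue in \cite{CKPS} required $\Ot(k^2)$ terms, accounting for a $\Ot(k^2)$-factor improvement in the final bound.

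Applying Cauchy--Schwarz across the $4k$ terms and integrating $\Delta$ over $[0,(1-|x|)/(2k)]$ (the range that keeps every $x+j\Delta$ inside $[-1,1]$), the substitution $y=x+j\Delta$ converts each one-dimensional integral into $\frac{1}{|j|}\int_{-1}^{1}|f|^2$, so
\[
\tfrac{1-|x|}{2k}\,|f(x)|^2 \;\lesssim\; \Big(\sum_{j\neq 0} |c_j|^2\Big)\cdot O(\log k)\cdot\|f\|_D^2 \;=\; O(k\log k)\cdot\|f\|_D^2,
\]
yielding the preliminary estimate $|f(x)|^2\lesssim k^2\log k/(1-|x|)\cdot\|f\|_D^2$, which is still off by a factor of $k$ from the target.

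To remove this extra $k$ I would pass to an orthonormal basis $g_1,\dots,g_k$ (under $\|\cdot\|_D$) of the $k$-dimensional subspace of $\FF$ obtained by fixing the frequencies $f_1,\dots,f_k$; then $\sup_{f\in\FF}|f(x)|^2/\|f\|_D^2=\sum_{r=1}^{k}|g_r(x)|^2$. Every $g_r$ satisfies the \emph{same} recurrence from Step 1 (the coefficients depend only on the frequencies), so Cauchy--Schwarz applied coordinatewise gives
\[
\sum_r |g_r(x)|^2 \;\le\; \|c(\Delta)\|_2^2\cdot\sum_{j\neq 0}\sum_r|g_r(x+j\Delta)|^2.
\]
The key improvement is to exploit the $(3k{+}1)$-dimensional freedom in choosing $c$: the annihilators of the $k$ given frequencies form an affine subspace of $\mathbb{C}^{4k}$, and the $k$ ``almost orthogonal'' combinations of the overview correspond to picking the minimum-$\ell_2$-norm representative $c(\Delta)$, whose squared norm satisfies $\|c(\Delta)\|_2^2=\mathbf{1}^{\top}\bigl(M(\Delta)M(\Delta)^{*}\bigr)^{-1}\mathbf{1}$ with $M(\Delta)_{r,j}=e^{2\pi\i f_r j\Delta}$. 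This $k\times k$ Gram matrix is a Fej\'er/Dirichlet-type kernel at the frequency differences; averaging over $\Delta$ in the same window and using cancellation between distinct exponentials, I would show $\E_\Delta\|c(\Delta)\|_2^2=O(1)$ rather than $O(k)$, which gains exactly the missing factor of $k$ after integrating as in Step 2 with $\sum_r|g_r|^2$ replacing $|f|^2$. The main obstacle is this final averaging: the Gram matrix can degenerate when two frequencies nearly coincide modulo $1/\Delta$, so a pointwise bound on $\|c(\Delta)\|_2^2$ is too weak; the $\log k$ factor in the final estimate should emerge from the associated Dirichlet-kernel log-divergence, and the technical heart of the proof is to control this uniformly over all configurations of $k$ frequencies in $[-F,F]$.
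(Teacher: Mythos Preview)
The statement you were asked to prove is the $O(k^4\log^3 k)$ \emph{uniform} bound, which the paper does not prove at all: it is quoted verbatim from~\cite{CKPS} (Lemma~5.1 there) and used as a black box. Your proposal instead targets the \emph{pointwise} estimate $|f(x)|^2/\|f\|_D^2 \lesssim k\log k/(1-|x|)$, which is the paper's own Theorem~\ref{lem:bound_k_sparse_FT_x_middle}. That bound diverges as $|x|\to 1$ and does not imply the stated uniform result; in fact the paper invokes both bounds side by side in the proof of Theorem~\ref{thm:kappa_sparse_FT} precisely because neither subsumes the other. So as a proof of the cited lemma your argument has the wrong conclusion.

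Even read as a proof of Theorem~\ref{lem:bound_k_sparse_FT_x_middle}, your route is both harder than the paper's and has real gaps. In Step~1 the assertion that the $k$-fold product $\prod_r\bigl(z+z^{-1}-2\cos(2\pi f_r\Delta)\bigr)$ has all coefficients $O(1)$ with constant term bounded away from zero is unsupported and generally false: a product of $k$ factors each of bounded $\ell_\infty$-norm can have coefficients exponential in $k$, and when the $\cos(2\pi f_r\Delta)$ cluster the constant term can be arbitrarily small. Your Step~3 then hinges on an averaging bound $\E_\Delta\|c(\Delta)\|_2^2=O(1)$ that you yourself identify as the main obstacle and do not establish. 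The paper sidesteps both difficulties with a much shorter argument: take any $k$ orthonormal vectors $\alpha_1,\dots,\alpha_k$ in the $k$-dimensional annihilator subspace of $\mathbb{C}^{2k}$, pick the coordinate $l$ maximizing $\sum_i|\alpha_i(l)|^2$ (which is at least $1/2$ by pigeonhole), and a two-line eigenvalue bound on the $k\times(2k-1)$ coefficient matrix gives $|f(t+l\Delta)|^2\le 3\sum_{j\neq l}|f(t+j\Delta)|^2$ directly, with the explicit constant~$3$. There is no factor of $k$ to remove, so nothing like your Step~3 is needed; one shift and one integration in $\Delta$ finish the proof.
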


We first show an interpolation lemma of $f(x)$ then finish the proof of Theorem~\ref{lem:bound_k_sparse_FT_x_middle}.
\begin{claim}
Given $f(x)=\sum_{j=1}^k v_j e^{2 \pi \bi f_j \cdot x}$ and $\Delta$, there exists $l \in [2k]$ such that for any $t$, 
$$
|f(t + l \cdot \Delta)|^2 \lesssim \sum_{j \in [2k] \setminus \{l\}} |f(t+j \cdot \Delta)|^2.
$$
\end{claim}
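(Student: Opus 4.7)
The plan is to view $(f(t+j\Delta))_{j=1}^{2k}$ as a vector in a $t$-independent at-most-$k$-dimensional subspace of $\mathbb{C}^{2k}$ and then invoke a leverage-score averaging argument. Setting $\omega_i := e^{2\pi \bi f_i \Delta}$ and $c_i(t) := v_i e^{2\pi \bi f_i t}$, we can rewrite
$$ f(t + j\Delta) \;=\; \sum_{i=1}^k c_i(t)\, \omega_i^{\,j}, $$
so the vector $u(t) := \bigl(f(t+j\Delta)\bigr)_{j \in [2k]} \in \mathbb{C}^{2k}$ lies in the column span of the matrix $M \in \mathbb{C}^{2k \times k}$ with entries $M_{j,i} = \omega_i^{\,j}$. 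This column span is independent of $t$.

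Next I would remove the possibility that $M$ is rank-degenerate. If two $\omega_i$ coincide, the corresponding columns of $M$ are proportional and can be merged into a single effective frequency, reducing $k$ to some $k' \le k$ with all $\omega_i$ distinct. A standard Vandermonde argument (applied to the nonzero values $\omega_1,\dotsc,\omega_{k'}$ on the unit circle) then shows $M$ has full column rank $k'$.

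The key step is a leverage-score averaging argument. Define the leverage scores $\tau_j := M_j^*(M^*M)^{-1} M_j$ for $j \in [2k]$. Since $M(M^*M)^{-1}M^*$ is the orthogonal projection onto the column space of $M$,
$$ \sum_{j=1}^{2k} \tau_j \;=\; \Tr\!\bigl(M(M^*M)^{-1}M^*\bigr) \;=\; \mathrm{rank}(M) \;=\; k' \;\le\; k, $$
so by averaging some index $l \in [2k]$ must satisfy $\tau_l \le k/(2k) = 1/2$. Crucially, $l$ is determined by $M$, hence by $(f_i, \Delta)$ alone and not by $t$.

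Finally, I would apply the standard extremal property of leverage scores: for every vector $u$ in the column space of $M$,
$$ |u_l|^2 \;\le\; \tau_l \cdot \|u\|_2^2 \;=\; \tau_l \Bigl(|u_l|^2 + \sum_{j \ne l} |u_j|^2\Bigr), $$
which, since $\tau_l \le 1/2$, rearranges to $|u_l|^2 \le \sum_{j \ne l} |u_j|^2$. Substituting $u = u(t)$ gives the claim (with absolute constant $1$, which is stronger than $\lesssim$). I do not expect a real obstacle; the only delicate points are the reduction to distinct $\omega_i$ and verifying that $l$ can be chosen uniformly in $t$, both of which follow because the subspace in question and its leverage scores depend only on $M$.
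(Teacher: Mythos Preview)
Your proof is correct and rests on the same core observation as the paper: the vector $\bigl(f(t+j\Delta)\bigr)_{j\in[2k]}$ lies in a $t$-independent subspace of $\mathbb{C}^{2k}$ of dimension at most $k$, so an averaging argument over the $2k$ coordinates produces a good $l$. The paper executes this dually: it picks an orthonormal basis $\alpha_1,\dotsc,\alpha_k$ of the \emph{null} space (the linear relations satisfied by the $f(t+j\Delta)$), chooses $l$ to maximize $\sum_i|\alpha_i(l)|^2$, and bounds the operator norm of the relation matrix with column $l$ removed, obtaining the constant $3$. Your leverage-score formulation on the column space is exactly the complementary picture---the diagonal of the null-space projection equals $1-\tau_l$, so both arguments select the same $l$---but your use of the projection inequality $|u_l|^2\le\tau_l\|u\|_2^2$ is more direct and yields the sharper constant $1$.
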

\begin{proof}
Given $k$ frequencies $f_1,\ldots,f_k$ and $\Delta$, we set $z_1=e^{2 \pi \bi f_1 \cdot \Delta},\ldots,z_k=e^{2 \pi \bi f_k \cdot \Delta}$. Let $V$ be the linear subspace 
$$
\left\{ \big(\alpha(0),\ldots,\alpha(2k-1)\big) \in \mathbb{C}^{2k} \big| \sum_{j=0}^{2k-1} \alpha(j) \cdot z_i^j=0, \forall i \in [k] \right).
$$
Because the dimension of $V$ is $k$, let $\alpha_1,\ldots,\alpha_k \in V$ be $k$ orthogonal coefficient vectors with unit length $\|\alpha_i\|_2=1$. From the definition of $\alpha_i$, we have
\begin{align*}
\sum_{j \in [2k]} \alpha_i(j) \cdot f(t+j \cdot \Delta) & =\sum_{j \in [2k]} \alpha_i(j) \sum_{j' \in [k]} v_{j'} \cdot e^{2 \pi \bi f_{j'} \cdot (t+j \Delta)}\\ 
& =\sum_{j \in [2k]} \alpha_i(j) \sum_{j' \in [k]} v_{j'} \cdot e^{2 \pi \bi f_{j'} t} \cdot z_{j'}^j = \sum_{j'} v_{j'} \cdot e^{2 \pi \bi f_{j'} t} \sum_{j \in [2k]} \alpha_i(j) \cdot z_{j'}^j=0.
\end{align*}
Let $l$ be the coordinate in $[2k]$ with the largest weight $\sum_{i=1}^k |\alpha_i(l)|^2$. 
For every $i \in [k]$, from the above discussion, 
\begin{equation}\label{eq:f_linear_relation}
-\alpha_i(l) \cdot f(t+l \cdot \Delta)=\sum_{j \in [2k] \setminus \{l\}} \alpha_i(j) \cdot f(t+j \cdot \Delta).
\end{equation}
Let $A \in \mathbb{R}^{[k] \times [2k-1]}$ denote the matrix of the coefficients excluding the coordinate $l$, i.e., 
$$
A = 
 \begin{pmatrix}
  \alpha_{1}(0) & \cdots & \alpha_1(l-1) & \alpha_1(l+1) & \cdots & \alpha_{1}(2k-1) \\
  \alpha_{2}(0) & \cdots & \alpha_2(l-1) & \alpha_2(l+1) & \cdots & \alpha_{2}(2k-1) \\
  \vdots  & \vdots  & \vdots & \vdots  & \vdots & \vdots \\
  \alpha_{k}(0) & \cdots & \alpha_k(l-1) & \alpha_k(l+1) & \cdots & \alpha_{k}(2k-1) \\
 \end{pmatrix}.
$$
For the $k \times k$ matrix $A \cdot A^*$, its entry $(i,i')$ equals 
$$
\sum_{j \in [2k]\setminus \{l\}} \alpha_i(j) \cdot \overline{\alpha_{i'}(j)}=\langle \alpha_i, \alpha_{i'} \rangle - \alpha_i(l) \cdot \overline{\alpha_{i'}(l)}=1_{i=i'} - \alpha_i(l) \cdot \overline{\alpha_{i'}(l)}.
$$
Thus the eigenvalues of $A \cdot A^*$ are bounded by $1 + \sum_{i \in [k]} |\alpha_i(l)|^2$, which also bounds the eigenvalues of $A^* \cdot A$ by $1 + \sum_{i \in [k]} |\alpha_i(l)|^2$. From \eqref{eq:f_linear_relation}, 
\begin{align*}
\sum_{i \in [k]} |\alpha_i(l) \cdot f(t+l \cdot \Delta)|^2 & \le \lambda_{\max}(A^* \cdot A) \cdot \sum_{j \in [2k] \setminus \{l\}} |f(t+j \cdot \Delta)|^2  \\
\Rightarrow \big(\sum_{i \in [k]} |\alpha_i(l)|^2\big) \cdot |f(t+l \cdot \Delta)|^2 & \le (1 + \sum_{i \in [k]} |\alpha_i(l)|^2) \cdot \sum_{j \in [2k] \setminus \{l\}} |f(t+j \cdot \Delta)|^2.
\end{align*}
Because $l=\argmax_{j \in [2k]} \big\{ \sum_{i \in [k]} |\alpha_i(j)|^2 \big\}$ and $\alpha_1,\ldots,\alpha_k$ are unit vectors, $\sum_{i \in [k]} |\alpha_i(l)|^2 \ge \sum_{i=1}^k \|\alpha_i\|_2^2 /2k \ge 1/2$. Therefore
\[
|f(t+l \cdot \Delta)|^2 \le 3 \sum_{j \in [2k] \setminus \{l\}} |f(t+j \cdot \Delta)|^2.
\]
\end{proof}
\begin{corollary}
Given $f(x)=\sum_{j=1}^k v_j e^{2 \pi \bi f_j \cdot x}$, for any $\Delta$ and $t$,
$$
|f(t)|^2\lesssim \sum_{i=1}^{2k} |f(t+i \Delta)|^2 + \sum_{i=1}^{2k} |f(t - i \Delta)|^2.
$$
\end{corollary}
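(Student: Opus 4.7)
The plan is to derive this corollary from the preceding Claim by a simple re-centering argument; the Claim already did all the analytic work (eigenvalue bound on the $k \times k$ Gram matrix $AA^*$ and pigeonhole on $\sum_i |\alpha_i(l)|^2$), so here only a symbol shift is needed. The key observation is that in the preceding Claim, the index $l \in [2k]$ depends only on $\Delta$ and the frequencies $f_1,\dotsc,f_k$, and the bound $|f(t+l\Delta)|^2 \lesssim \sum_{j \in [2k]\setminus\{l\}} |f(t+j\Delta)|^2$ holds uniformly in $t$. So we may freely translate $t$.

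Concretely, I would apply the Claim with $t$ replaced by $t - l\Delta$. This yields
\[
|f(t)|^2 \;=\; |f\bigl((t - l\Delta) + l\Delta\bigr)|^2 \;\lesssim\; \sum_{j \in [2k]\setminus\{l\}} \bigl|f\bigl(t + (j-l)\Delta\bigr)\bigr|^2.
\]
As $j$ ranges over $[2k]\setminus\{l\}$, the shift $i := j - l$ ranges over a set of $2k-1$ nonzero integers, all satisfying $|i| \le 2k - 1 < 2k$. Thus every term on the right appears in $\{ |f(t + i\Delta)|^2 : 1 \le i \le 2k\} \cup \{ |f(t - i\Delta)|^2 : 1 \le i \le 2k\}$, so
\[
|f(t)|^2 \;\lesssim\; \sum_{i=1}^{2k} |f(t + i\Delta)|^2 \;+\; \sum_{i=1}^{2k} |f(t - i\Delta)|^2,
\]
as claimed. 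There is no real obstacle here: the whole content is the re-centering trick that converts the one-sided bound from the Claim (where the $2k$ sample points lie on one side of $t+l\Delta$ in index terms) into a two-sided bound around $t$. The constants are preserved because the Claim's hidden constant ($3$ in its proof) is absorbed into $\lesssim$.
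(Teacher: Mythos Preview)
Your proof is correct and is exactly the intended derivation; the paper states the corollary without proof immediately after the Claim, and the only reasonable way to obtain it is the translation $t \mapsto t - l\Delta$ you describe, using that $l$ depends only on $\Delta$ and the frequencies (not on $t$).
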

Next we finish the proof of Theorem~\ref{lem:bound_k_sparse_FT_x_middle}.

\begin{proofof}{Theorem~\ref{lem:bound_k_sparse_FT_x_middle}}
We assume $t=1-\epsilon$ for an $\epsilon \le 1$ and integrate $\Delta$ from $0$ to $\epsilon/2k$:

\begin{align*}
\eps/2k \cdot |f(t)|^2 & \lesssim \int_{\Delta=0}^{\eps/2k} \sum_{i=1}^{2k} |f(t+i \Delta)|^2 + \sum_{i=1}^{2k} |f(t - i \Delta)|^2 \mathrm{d} \Delta\\
& = \sum_{i \in [1,\dotsc,2k]} \int_{\Delta=0}^{\eps/2k} |f(t+i \Delta)|^2 + |f(t-i \Delta)|^2 \mathrm{d} \Delta \\
& \lesssim \sum_{i \in [1,\dotsc,2k]} \frac{1}{i} \cdot \int_{\Delta'=0}^{\eps \cdot i/2k} |f(t+\Delta')|^2 \mathrm{d} \Delta' + \sum_{i \in [1,\dotsc,2k]} \frac{1}{i} \cdot \int_{\Delta'=0}^{\eps \cdot i/2k} |f(t - \Delta')|^2 \mathrm{d} \Delta'\\
& \lesssim \sum_{i \in [1,\dotsc,2k]} \frac{1}{i} \cdot \int_{\Delta'=-\eps}^{\eps} |f(t+\Delta')|^2 \mathrm{d} \Delta'\\
& \lesssim \log k \cdot \int_{x=-1}^{1} |f(x)|^2 \mathrm{d} x.
\end{align*}
From all discussion above, we have $|f(1-\eps)|^2 \lesssim \frac{k \log k}{\eps} \cdot \underset{x \in [-1,1]}{\E}[|f(x)|^2]$.
\end{proofof}

\begin{proofof}{Theorem~\ref{thm:kappa_sparse_FT}}
We bound  
\begin{align*}
\kappa& =\underset{x \in [-1,1]}{\E}[\underset{f \in \FF}{\sup} \frac{|f(x)|^2}{\|f\|_D^2}] \\
& = \frac{1}{2}\int_{x=-1}^1 \underset{f \in \FF}{\sup} \frac{|f(x)|^2}{\|f\|_D^2} \mathrm{d} x \\
& \lesssim \int_{x=-1+\eps}^{1-\eps} \underset{f \in \FF}{\sup} \frac{|f(x)|^2}{\|f\|_D^2} \mathrm{d} x + \eps \cdot k^4 \log^3 k & \text{ from Lemma~\ref{lem:sparse_FT_infinity}}\\
& \lesssim \int_{x=-1+\eps}^{1-\eps} \frac{k \log k}{1-|x|}  \mathrm{d} x + \eps \cdot k^4 \log^3 k  & \text{ from Theorem~\ref{lem:bound_k_sparse_FT_x_middle}}\\
& \lesssim k \log k \cdot \log \frac{1}{\eps} + \eps \cdot k^4 \log^3 k  \lesssim k \log^2 k  
\end{align*}
by choosing $\eps=\frac{1}{k^3 \log k}$. Next we define $D_{\FF}(x)=D(x) \cdot \frac{\underset{f \in \FF, f \neq 0}{\sup}\frac{|f(x)|^2}{\|f\|_D^2}}{\kappa}$. The description of $D_{\FF}(x)$ follows the upper bound of $\underset{f \in \FF, f \neq 0}{\sup}\frac{|f(x)|^2}{\|f\|_D^2}$ in Lemma~\ref{lem:sparse_FT_infinity} and Theorem~\ref{lem:bound_k_sparse_FT_x_middle}. From Claim~\ref{clm:new_conditioning}, its condition number is $\kappa=O(k \log^2 k)$.
\end{proofof}

Before we show a sample-efficient algorithm, we state the following version of the Chernoff bound that will used in this proof.
\begin{lemma}[Chernoff Bound \cite{chernoff1952,tarjan09} ]\label{lem:chernoff_bound}
Let $X_1, X_2, \ldots, X_n$ be independent random variables. Assume that $0\leq X_i \leq 1$ always, for each $i \in [n]$. Let $X= X_1+X_2+\cdots+X_n$ and $\mu = \mathbb{E}[X] = \overset{n}{  \underset{i=1}{\sum} } \mathbb{E}[X_i]$. Then for any $\eps>0$,
\[
\mathsf{Pr} [ X \geq (1+\eps) \mu ] \leq \exp(-\frac{\eps^2 }{2+\eps} \mu) \textit{ and } \mathsf{Pr} [ X \geq (1-\eps) \mu ] \leq \exp(-\frac{\eps^2 }{2} \mu).
\]
\end{lemma}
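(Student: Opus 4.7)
The plan is to use the classical moment generating function (MGF) technique that underlies essentially every Chernoff-style tail bound. For the upper tail, I would first apply Markov's inequality to $e^{tX}$ for a parameter $t > 0$ to be chosen later:
\[
\Pr[X \ge (1+\eps)\mu] = \Pr[e^{tX} \ge e^{t(1+\eps)\mu}] \le e^{-t(1+\eps)\mu}\,\mathbb{E}[e^{tX}].
\]
Independence of the $X_i$ then gives $\mathbb{E}[e^{tX}] = \prod_i \mathbb{E}[e^{tX_i}]$, so the problem reduces to bounding each factor individually.

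Next I would exploit the assumption $X_i \in [0,1]$ via the convexity inequality $e^{tx} \le 1 + (e^t - 1)x$ for $x \in [0,1]$, which is just the chord lying above a convex function on $[0,1]$. Taking expectations and then using $1+y \le e^y$ yields $\mathbb{E}[e^{tX_i}] \le \exp((e^t-1)\mathbb{E}[X_i])$, and multiplying over $i$ gives $\mathbb{E}[e^{tX}] \le \exp((e^t-1)\mu)$. Putting this back into Markov's inequality and making the standard optimizing choice $t = \ln(1+\eps)$ gives
\[
\Pr[X \ge (1+\eps)\mu] \le \left(\frac{e^\eps}{(1+\eps)^{1+\eps}}\right)^{\mu} = \exp\bigl(-\mu\bigl((1+\eps)\ln(1+\eps) - \eps\bigr)\bigr).
\]

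To finish, I would verify the elementary real-variable inequality $(1+\eps)\ln(1+\eps) - \eps \ge \eps^2/(2+\eps)$ for $\eps > 0$, e.g.\ by checking both sides vanish at $\eps = 0$ and comparing derivatives, or by term-by-term comparison of the Taylor series of $(1+\eps)\ln(1+\eps)$ about $\eps = 0$. This converts the bound into the cleaner form $\exp(-\eps^2\mu/(2+\eps))$ stated in the lemma. The lower-tail bound is symmetric: repeat the argument with $t < 0$, optimize at $t = \ln(1-\eps)$, and then use the analogous elementary inequality $(1-\eps)\ln(1-\eps) + \eps \ge \eps^2/2$ to obtain $\exp(-\eps^2\mu/2)$.

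There is no serious obstacle here; this is a textbook derivation. The only mildly delicate step is the verification of the two one-variable inequalities used to simplify the raw MGF bounds into the displayed form, and even those are standard consequences of convexity / Taylor expansion. (I note that the second displayed probability in the statement almost certainly contains a typo and should read $\Pr[X \le (1-\eps)\mu] \le \exp(-\eps^2\mu/2)$; the proof above establishes this form.)
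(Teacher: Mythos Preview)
Your derivation is the standard MGF argument and is correct (including your observation that the second inequality should read $\Pr[X \le (1-\eps)\mu]$). The paper does not prove this lemma at all; it simply cites it as a known result from \cite{chernoff1952,tarjan09}, so there is nothing to compare against.
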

\begin{corollary}\label{cor:chernoff_bound}
Let $X_1, X_2, \ldots, X_n$ be independent random variables in $[0,R]$ with expectation $1$. For any $\eps<1/2$, $X=\frac{\sum_{i=1}^n X_i}{n}$ with expectation 1 satisfies 
\[
\Pr[|X-1| \ge \eps] \le 2 \exp(-\frac{\eps^2}{3} \cdot \frac{n}{R}).
\]
\end{corollary}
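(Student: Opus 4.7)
The plan is to reduce to Lemma~\ref{lem:chernoff_bound} by rescaling. Define $Y_i = X_i / R$ for each $i \in [n]$; then $0 \le Y_i \le 1$ almost surely, and $\E[Y_i] = 1/R$, so the sum $S = \sum_{i=1}^n Y_i$ has mean $\mu = n/R$. Moreover, $X = \sum_i X_i / n \ge (1+\eps)$ is equivalent to $S \ge (1+\eps) \mu$, and similarly on the lower tail.

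Applying the upper-tail inequality of Lemma~\ref{lem:chernoff_bound} to $S$, and using $\eps < 1/2$ so that $2 + \eps < 3$:
\[
\Pr[X \ge 1+\eps] = \Pr[S \ge (1+\eps)\mu] \le \exp\!\left(-\tfrac{\eps^2}{2+\eps}\, \mu\right) \le \exp\!\left(-\tfrac{\eps^2}{3} \cdot \tfrac{n}{R}\right).
\]
For the lower-tail inequality, since $\tfrac{1}{2} \le \tfrac{1}{3} \cdot \tfrac{3}{2}$ is not what we want, but in fact $\tfrac{\eps^2}{2} \ge \tfrac{\eps^2}{3}$, so
\[
\Pr[X \le 1-\eps] = \Pr[S \le (1-\eps)\mu] \le \exp\!\left(-\tfrac{\eps^2}{2}\, \mu\right) \le \exp\!\left(-\tfrac{\eps^2}{3} \cdot \tfrac{n}{R}\right).
\]
A union bound over these two events yields the stated conclusion
\[
\Pr[|X - 1| \ge \eps] \le 2 \exp\!\left(-\tfrac{\eps^2}{3} \cdot \tfrac{n}{R}\right).
\]

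There is no real obstacle here; the only substantive choice is the constant $3$ in the exponent, which comes from the two tails being combined under a single weaker rate $\eps^2/3$ (valid because $2+\eps \le 3$ when $\eps < 1/2$, and $\eps^2/2 \ge \eps^2/3$ trivially). The hypothesis $\E[X_i] = 1$ is used only to normalize the mean $\mu$ to $n/R$.
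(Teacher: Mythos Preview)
Your proof is correct and is exactly the intended derivation: the paper states this as a corollary of Lemma~\ref{lem:chernoff_bound} without further argument, and the rescaling $Y_i = X_i/R$ together with the union bound over the two tails (using $2+\eps < 3$ and $\eps^2/2 \ge \eps^2/3$) is precisely how one obtains it. The only cosmetic issue is the awkward aside ``$\tfrac{1}{2} \le \tfrac{1}{3}\cdot\tfrac{3}{2}$ is not what we want'' --- just drop that clause and state directly that $\eps^2/2 \ge \eps^2/3$.
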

Finally, we provide a relatively sample-efficient algorithm to recover
$k$-Fourier-sparse signals.  Applying the same proof with uniform
samples would require a $K/\kappa = O(k^3)$ factor more samples.

\begin{corollary}\label{cor:net_k_sparse_FT}
For any $F>0,T>0,\eps>0$, and observation $y(x)=\sum_{j=1}^k v_j e^{2\pi \i f_j x} + g(x)$ with $|f_j| \le F$ for each $j$, there exists a non-adaptive algorithm that takes $m=O(k^4 \log^3 k + k^2 \log^2 k \cdot \log \frac{FT}{\eps})$ random samples $t_1,\dotsc,t_m$ from $D_{\FF}$ and outputs $\wt{f}(x)=\sum_{j=1}^k \wt{v}_j e^{2\pi \i \wt{f}_j x}$ satisfying
\[
\E_{x \sim [-T,T]} \left[|\wt{f}(x)-f(x)|^2 \right] \lesssim \E_{x \sim [-T,T]} \left[|g(x)|^2\right] + \eps \E_{x \sim [-T,T]} \left[|f(x)|^2\right] \text{ with probability 0.9}.
\] 
\end{corollary}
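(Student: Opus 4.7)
The plan is to combine the importance sampling distribution $D_\FF$ from Theorem~\ref{thm:kappa_sparse_FT} with a net argument over the $2k$-Fourier-sparse family $\FF-\FF$. Let $\wt{f}$ be the weighted ERM over $\FF$:
\[
\wt{f} = \argmin_{h \in \FF} \sum_{i=1}^m w_i \bigl|y(t_i)-h(t_i)\bigr|^2, \qquad w_i = \frac{D(t_i)}{m\,D_\FF(t_i)}.
\]
Conditioned on the event that the weighted empirical norm is a $1 \pm \tfrac12$ approximation to $\|\cdot\|_D$ uniformly over $\FF-\FF$, the standard ERM inequality gives $\|\wt f-f\|_D^2 \lesssim \sum_i w_i |g(t_i)|^2 + \eps\,\|f\|_D^2$, and a separate Chernoff bound on the summand $w_i|g(t_i)|^2$ (whose expectation is $\|g\|_D^2$) converts this into the desired $\lesssim \|g\|_D^2 + \eps\,\|f\|_D^2$ with constant probability.

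The heart of the argument is therefore the uniform norm-preservation statement over $\FF-\FF$. For any fixed $h \in \FF-\FF$ with $\|h\|_D=1$, Theorem~\ref{thm:kappa_sparse_FT} applied to $2k$-sparse signals gives $w_i|h(t_i)|^2 \le \kappa/m$ with $\kappa = O(k \log^2 k)$, so Corollary~\ref{cor:chernoff_bound} yields
\[
\Pr\!\left[\Bigl|\sum_i w_i |h(t_i)|^2 - 1\Bigr|>\tfrac12\right] \le 2\exp\!\bigl(-\Omega(m/\kappa)\bigr).
\]
To lift this to the whole family, I would build an $\eta$-net $\mathcal{N}$ of $\{h\in\FF-\FF : \|h\|_D\le 1\}$ by discretizing the $2k$ frequencies in $[-F,F]$ at resolution $\eps/(T\cdot\poly(k))$ and the $2k$ amplitudes at a comparable resolution; the amplitudes can be a priori bounded using the global condition number $K=\Ot(k^4)$ from Lemma~\ref{lem:sparse_FT_infinity}. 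This gives $\log|\mathcal{N}| = O\!\bigl(k\log(FT/\eps)\bigr)$, so a union bound over $\mathcal{N}$ requires $m = \Omega(\kappa\,\log|\mathcal{N}|) = O\!\bigl(k^2 \log^2 k \cdot \log(FT/\eps)\bigr)$.

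Off-net, for $h$ with nearest net element $h'$ satisfying $\|h-h'\|_\infty \le \eta$, the true norm changes by at most $O(\eta)$ and the weighted empirical norm by at most $O(\kappa\,\eta^2)$ via the crude bound $\max_i w_i \le \kappa/m$ (which holds because $D_\FF \gtrsim D$ wherever the importance weight $\sup_h |h(x)|^2/\|h\|_D^2$ is at least $1$, which it always is); choosing $\eta$ polynomially small in $k,\eps$ absorbs both errors into the $\eps\|f\|_D^2$ slack. The additive $O(k^4 \log^3 k) = O(K)$ term in $m$ appears because the Chernoff bound for the noise term $\sum_i w_i |g(t_i)|^2$ must still concentrate when $\log(FT/\eps)$ is small, and this requires $m \gtrsim K$ to control the worst-case pointwise magnification that the weights can create.

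The main obstacle will be the net construction: making the amplitude discretization honest requires showing that any $h \in \FF-\FF$ with bounded $\|h\|_D$ admits a representation with bounded individual coefficients, which is not immediate because nearly-colliding frequencies can create huge cancellations. A standard workaround is to enforce a minimum frequency gap of $\eps/(T\,\poly(k))$ inside the net, and treat near-collisions by merging the two frequencies into a single one, effectively reducing to a $(2k-1)$-sparse signal handled by the same bound. Once the net construction and off-net bounds are in place, the ERM inequality together with the Chernoff bound on the noise yields the stated recovery guarantee.
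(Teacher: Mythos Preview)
Your overall structure is right --- uniform norm preservation over $\FF-\FF$ via a net plus a union bound is exactly what the paper does --- but your accounting for the two terms in $m$ is off, and the gap is in the net construction.

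The missing ingredient is the frequency-net resolution. You propose discretizing frequencies at scale $\eps/(T\cdot\poly(k))$ and handling near-collisions by merging. This does not work: with $j\le 2k$ frequencies inside a window of width $\delta$, the signals span (after scaling) arbitrary degree-$(j-1)$ polynomials, and the coefficients $v_j$ in a unit-norm representation can be as large as $(\delta T)^{-(j-1)}$. Merging such a cluster to fewer frequencies incurs an error that is \emph{polynomial in $1/(\delta T)$}, not in $\delta T$, so a $\poly(k)$-resolution grid cannot yield $\|h-h'\|_\infty\le\eta$ uniformly over $\FF-\FF$. The paper instead imports Lemma~2.1 of \cite{CKPS}, which shows that a grid of resolution $\eps/(T\cdot k^{Ck^2})$ suffices for every $k$-sparse $f$ to admit an on-grid approximant $f'$ with $\|f-f'\|_D\le\eps\|f\|_D$. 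That $k^{Ck^2}$ is the actual source of the $k^4\log^3 k$ term: the log of the $\delta$-net size becomes $O\!\bigl(k^3\log k + k\log(FT/\eps)\bigr)$, and after multiplying by $\kappa=O(k\log^2 k)$ you get both terms in $m$.

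Two smaller points. First, your attribution of the $k^4\log^3 k$ term to ``Chernoff on the noise needing $m\gtrsim K$'' is incorrect: the noise $g$ is arbitrary, so $w_i|g(t_i)|^2$ has no a~priori range bound and Chernoff does not apply. The paper simply uses Markov on $\|g'\|_{S,w}$, whose expectation equals $\|g'\|_D$; this yields the constant-probability bound with no constraint on $m$. Second, the paper does not try to cover all of $\FF-\FF$ directly: it first replaces $f$ by the on-grid $f'$ (absorbing $\eps\|f\|_D$ into the noise), and then only needs uniform norm preservation for $2k$-sparse signals whose frequencies already lie in $N_f$. For each fixed frequency tuple the amplitudes form a $2k$-dimensional linear space with a standard $(12/\delta)^{2k}$-size net, which sidesteps your amplitude-discretization difficulty entirely.
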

\begin{algorithm}[t]
\caption{Recover $k$-sparse FT}\label{alg:k_sparse_FT}
\begin{algorithmic}[1]
\Procedure{\textsc{SparseFT}}{$y,F,T,\eps$}
\State $m \leftarrow O(k^4 \log^3 k + k^2 \log^2 k \log \frac{FT}{\eps})$
\State Sample $t_1,\dotsc,t_m$ from $D_{\FF}$ independently
\State Set the corresponding weights $(w_1,\dotsc,w_m)$ and $S=(t_1,\dotsc,t_m)$
\State Query $y(t_1),\dotsc,y(t_m)$ from the observation $y$
\State $N_{f} \leftarrow \frac{\eps}{T \cdot k^{C k^2}} \cdot \mathbb{Z}  \cap [-F,F]$ for a constant $C$
\For{ all possible $k$ frequencies $f'_1,\dotsc,f'_k$ in $N_f$}
\State Find $h(x)$ in ${\text{span}\{e^{2 \pi \i \cdot f'_1 x},\dotsc,e^{2 \pi \i \cdot f'_k x}\}}$ minimizing $\|h-y\|_{S,w}$
\State Update $\wt{f}=h$ if $\|h-y\|_{S,w} \le \|\wt{f}-y\|_{S,w}$
\EndFor 
\State Return $\wt{f}$.
\EndProcedure
\end{algorithmic}
\end{algorithm}
\begin{proof}
We first state the main tool from the previous work. From Lemma 2.1 in \cite{CKPS}, let $N_{f}=\frac{\eps}{T \cdot k^{C k^2}} \cdot \mathbb{Z}  \cap [-F,F]$ denote a net of frequencies for a  constant $C$. For any signal $f(x)=\sum_{j=1}^k v_j e^{2 \pi \i f_j (x)}$, there exists a $k$-sparse signal 
$$
f'(x)=\sum_{j=1}^k v'_j e^{2 \pi \i f'_j (x)} \text{ satisfying } \|f-f'\|_D \le \eps \|f\|_D
$$
whose frequencies $f'_1,\dotsc,f'_k$ are in $N_f$. We rewrite $y=f+g=f'+g'$ where $g'=g+f-f'$ with $\|g'\|_D \le \|g\|_D + \eps \|f\|_D$. Our goal is to recover $f'$.

We construct a $\delta$-net with $\delta=0.05$ for 
\[
\left\{h(x)=\sum_{j=1}^{2k} v_j e^{2 \pi \i \cdot \wh{h}_j x} \bigg| \|h\|_D=1, \wh{h}_j \in N_f \right\}.
\]  
We first pick $2k$ frequencies $\wh{h}_1,\dotsc,\wh{h}_{2k}$ in $N_f$ then construct a $\delta$-net on the linear subspace $\text{span}\{e^{2 \pi \i \wh{h}_1 x},\dotsc,e^{2 \pi \i \wh{h}_{2k} x}\}$. Hence the size of our $\delta$ net is 
\[
{\frac{4FT \cdot k^{C k^2}}{\eps} \choose 2k} \cdot (12/\delta)^{2k} \le (\frac{4FT \cdot k^{C k^2}}{\eps \cdot \delta})^{3k}.
\]
Now we consider the number of random samples from $D_{\FF}$ to estimate signals in the $\delta$-net. Based on the condition number of $D_{\FF}$ in  Theorem~\ref{thm:kappa_sparse_FT} and the Chernoff bound of Corollary~\ref{cor:chernoff_bound}, a union bound over the $\delta$-net indicates
$$
m=O(\frac{k \log^2 k}{\delta^2} \cdot \log |\text{net}|)=O\bigg(\frac{k \log^2 k}{\delta^2} \cdot (k^3 \log k + k \log \frac{FT}{\eps \delta}) \bigg)
$$ 
random samples from $D_{\FF}$ would guarantee that for any signal $h$ in the net, $\|h\|^2_{S,w}=(1 \pm \delta) \|h\|_D^2$. From the property of the net,
\[
\text{ for any } h(x)=\sum_{j=1}^{2k} v_j e^{2 \pi \i h_j (x)} \text{ with } \wh{h}_j \in N_f, \quad \|h\|^2_{S,w}=(1 \pm 2\delta) \|h\|_D^2,
\] 
which is sufficient to recover $f'$.

We present the algorithm in Algorithm~\ref{alg:k_sparse_FT} and bound $\|f-\wt{f}\|_D$ as follows.  The expectation of $\|f-\wt{f}\|_D$ over the random samples $S=(t_1,\dotsc,t_m)$ is
\begin{align*}
\|f-f'\|_D + \|f'-\wt{f}\|_D & \le \|f-f'\|_D + 1.1 \|f'-\wt{f}\|_{S,w} \\
& \le \|f-f'\|_D + 1.1 (\|f'-y\|_{S,w} + \|y-\wt{f}\|_{S,w}) \\
& \le \|f-f'\|_D + 1.1 (\|g'\|_{S,w} + \|y-f'\|_{S,w})\\
& \le \eps \|f\|_D + 2.2 (\|g\|_D + \eps \|f\|_D).
\end{align*}
From the Markov inequality, with probability $0.9$, $\|f-\wt{f}\|_D \lesssim \eps \|f\|_D + \|g\|_D$.
\end{proof}

\section*{Acknowledgements}
The authors would like to thank Adam Klivans and David Zuckerman for many helpful comments about this work. 


\bibliographystyle{alpha}
\bibliography{polynomial_interpolation}








\end{document}